\definecolor{lightblue}{RGB}{60,60,200}
\definecolor{darkblue}{RGB}{1,1,255}
\definecolor{mplblue}{RGB}{31, 119, 180}
\definecolor{cskyblue}{rgb}{0.01,0.39,0.75}
\newtheorem{definition}{Definition}[section]
\newtheorem{theorem}{Theorem}[section]
\newtheorem{lemma}{Lemma}
\newtheorem{fact}{Fact}
\newtheorem{proposition}{Proposition}
\newtheorem{corollary}{Corollary}
\algnewcommand{\LineComment}[1]{\State \(\triangleright\) #1}
\newcommand{\rbr}[1]{\left(#1\right)}
\newcommand{\sbr}[1]{\left[#1\right]}
\newcommand{\cbr}[1]{\left\{#1\right\}}
\newcommand{\nbr}[1]{\left\|#1\right\|}
\newcommand{\abs}[1]{\left|#1\right|}
\newcommand{\mathset}[1]{\cbr{#1}}  
\providecommand{\ind}[1]{{\bf 1}_{\cbr{#1}}}  
\newcommand{\indep}{\protect\mathpalette{\protect\independenT}{\perp}}  
\def\independenT#1#2{\mathrel{\rlap{$#1#2$}\mkern2mu{#1#2}}}
\DeclareMathOperator{\E}{\mathbb{E}}
\newcommand{\ignore}[1]{}
\def \HH {\mathcal{H}}
\def \XX {\mathcal{X}}
\def \YY {\mathcal{Y}}
\def \ZZ {\mathcal{Z}}
\def \LL {\mathcal{L}}
\def \DD {\mathcal{D}}
\def \FF {\mathcal{F}}
\def \WW {\mathcal{W}}
\newcommand{\KL}[2]{\text{KL}\rbr{#1\ \lVert\ #2}}
\def \ofcmi {f\text{-}\mathrm{CMI}}
\def \fcmi {f\text{-}\mathrm{CMI}_\DD}
\newcommand{\Choose}[2]{\rbr{\begin{matrix}#1\\#2\end{matrix}}}
\title{Information-theoretic generalization bounds for black-box learning algorithms}
\author{%
Hrayr Harutyunyan$^1$, Maxim Raginsky$^2$, Greg Ver Steeg$^1$, Aram Galstyan$^1$\\
\quad $^1$ USC Information Sciences Institute,\  $^2$ University of Illinois Urbana-Champaign
}
\begin{document}

\maketitle

\begin{abstract}
We derive information-theoretic generalization bounds for supervised learning algorithms based on the information contained in predictions rather than in the output of the training algorithm.
These bounds improve over the existing information-theoretic bounds, are applicable to a wider range of algorithms, and solve two key challenges: (a) they give meaningful results for deterministic algorithms and (b) they are significantly easier to estimate.
We show experimentally that the proposed bounds closely follow the generalization gap in practical scenarios for deep learning.

\end{abstract}

\section{Introduction}\label{sec:intro}
Large neural networks trained with variants of stochastic gradient descent have excellent generalization capabilities, even in regimes where the number of parameters is much larger than the number of training examples.
\citet{zhang2016understanding} showed that classical generalization bounds based on various notions of complexity of hypothesis set fail to explain this phenomenon, as the same neural network can generalize well for one choice of training data and memorize completely for another one.
This observation has spurred a tenacious search for algorithm-dependent and data-dependent generalization bounds that give meaningful results in practical settings for deep learning~\citep{Jiang*2020Fantastic}.

One line of attack bounds generalization error based on the information about training dataset stored in the weights~\citep{xu2017information, bassily2018learners, negrea2019information, bu2020tightening, steinke2020reasoning, haghifam2020sharpened, neu2021information, raginsky202110}.
The main idea is that when the training and testing performance of a neural network are different, the network weights necessarily capture some information about the training dataset.
However, the opposite might not be true: A neural network can store significant portions of training set in its weights and still generalize well~\citep{shokri2017membership,yeom2018privacy,nasr2019comprehensive}.
Furthermore, because of their information-theoretic nature, these generalization bounds become infinite or produce trivial bounds for deterministic algorithms.
When such bounds are not infinite, they are notoriously hard to estimate, due to the challenges arising in estimation of Shannon mutual information between two high-dimensional variables (e.g., weights of a ResNet and a training dataset).

This work addresses the aforementioned challenges.
We first improve some of the existing information-theoretic generalization bounds, providing a unified view and derivation of them (\cref{sec:w-gen-bounds}).
We then derive novel generalization bounds that measure information with predictions, rather than with the output of the training algorithm  (\cref{sec:fcmi}).
These bounds are applicable to a wide range of methods, including neural networks, Bayesian algorithms, ensembling algorithms, and non-parametric approaches.
In the case of neural networks, the proposed bounds improve over the existing weight-based bounds, partly because they avoid a counter-productive property of weight-based bounds that information stored in \emph{unused} weights affects generalization bounds, even though it has no effect on generalization.
The proposed bounds produce meaningful results for deterministic algorithms and are significantly easier to estimate.
For example, in case of classification, computing our most efficient bound involves estimating mutual information between a pair of predictions and a binary variable.

We apply the proposed bounds to ensembling algorithms, binary classification algorithms with finite VC dimension hypothesis classes, and to stable learning algorithms (\cref{sec:applications}).
We compute our most efficient bound on realistic classification problems involving neural networks, and show that the bound closely follows the generalization error, even in situations when a neural network with 3M parameters is trained deterministically on 4000 examples, achieving 1\% generalization error.

\section{Weight-based generalization bounds}\label{sec:w-gen-bounds}
We start by describing the necessary notation and definitions, after which we present some of the existing weigh-based information-theoretic generalization bounds, slightly improve some of them, and prove relations between them.
The purpose of this section is to introduce the relevant existing bounds and prepare grounds for the functional conditional mutual information bounds introduced in \cref{sec:fcmi}, which we consider our main contribution. 
All proofs are presented in \cref{sec:proofs}.

\paragraph{Preliminaries.} 
We use capital letters for random variables, corresponding lowercase letters for their values, and calligraphic letters for their domains.
If $X$ is a random variable, $\bar{X}$ denotes an independent copy of $X$.
For example, if $(X, Y)$ is a pair of random variables with joint distribution $P_{X, Y}$, then the joint distribution of $(\bar{X}, \bar{Y})$ will be $P_{\bar{X},\bar{Y}} = P_{\bar{X}} \otimes P_{\bar{Y}} = P_X \otimes P_Y$. 
A random variable $X$ is called $\sigma$-subgaussian if $ \E \exp(t (X - \E X)) \le \exp(\sigma^2 t^2 / 2), \ \forall t \in \mathbb{R}$.
For example, a random variable that takes values in $[a,b]$ almost surely, is $(b-a)/2$-subgaussian.
Given probability measures $P$ and $Q$ defined on the same measurable space, such that $P$ is absolutely continuous with respect to $Q$, the Kullback–Leibler divergence from $P$ to $Q$ is defined as $\KL{P}{Q} = \int  \log \frac{\text{d} P}{\text{d} Q} \text{d}P$, where $\frac{\text{d} P}{\text{d} Q}$ is the Radon-Nikodym derivative of $P$ with respect to $Q$.
If $X$ and $Y$ are random variables defined on the same probability space, then $\KL{X}{Y}$ denotes $\KL{P_X}{P_Y}$.
The Shannon mutual information between random variables $X$ and $Y$ is $I(X; Y) = \KL{P_{X,Y}}{P_X \otimes P_Y}$.
In this paper, all information-theoretic quantities are measured in nats, instead of bits.
Throughout the paper $[n]$ denotes the set $\mathset{1,2,\ldots,n}$.
Finally, if $A = (a_1,\ldots,a_n)$ is a collection, then $A_{-i} \triangleq  (a_1,\ldots,a_{i-1},a_{i+1},\ldots,a_n)$.

Theorems proved in the subsequent sections will be relying on the following lemma. 
\begin{lemma}  Let $(\Phi, \Psi)$ be a pair of random variables with joint distribution $P_{\Psi, \Phi}$. 
If $g(\phi, \psi)$ is a measurable function such that $\E_{\Phi, \Psi}\sbr{g(\Phi, \Psi)}$ exists and $g(\bar{\Phi}, \bar{\Psi})$ is $\sigma$-subgaussian, then
\begin{align}
\abs{\E_{\Phi, \Psi}\sbr{g(\Phi, \Psi)} - \E_{\Phi, \bar{\Psi}}\sbr{g(\Phi, \bar{\Psi})}} \le \sqrt{2 \sigma^2 I(\Phi; \Psi)}.
\end{align}
Furthermore, if $g(\phi,\bar{\Psi})$ is $\sigma$-subgaussian for each $\phi$ and the expectation below exists, then
\begin{equation}
\E_{\Phi,\Psi}\sbr{\rbr{g(\Phi, \Psi) - \E_{\bar{\Psi}} g(\Phi, \bar{\Psi})}^2} \le 4 \sigma^2 (I(\Phi; \Psi) + \log 3),
\end{equation}
and
\begin{equation}
P\rbr{\abs{g(\Phi, \Psi) - \E_{\bar{\Psi}}g(\Phi,\bar{\Psi})} \ge \epsilon} \le \frac{4 \sigma^2 (I(\Phi; \Psi) + \log 3)}{\epsilon^2}, \quad\forall \epsilon > 0.
\end{equation}
\label{lemma:mutual-info-lemma}
\end{lemma}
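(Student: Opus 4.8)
The plan is to derive all three inequalities from a single tool: the Donsker--Varadhan change-of-measure inequality, which states that $\E_P[f] \le \KL{P}{Q} + \log\E_Q[e^f]$ for any measurable $f$ with $\E_Q[e^f] < \infty$. Throughout I would write $P = P_{\Phi,\Psi}$ and $Q = P_\Phi \otimes P_\Psi$, so that $\KL{P}{Q} = I(\Phi;\Psi)$ and both $(\Phi,\bar\Psi)$ and $(\bar\Phi,\bar\Psi)$ have law $Q$; I would dispose of the case $I(\Phi;\Psi) = \infty$ at the outset, since then every bound is vacuous, and otherwise $P \ll Q$.

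For the first inequality I would apply the change-of-measure bound with $f = \lambda g$ for a free parameter $\lambda \in \bR$, obtaining $\lambda\, \E_P[g] - \log\E_Q[e^{\lambda g}] \le I(\Phi;\Psi)$. Using that $g(\bar\Phi,\bar\Psi)$ is $\sigma$-subgaussian under $Q$ to bound $\log\E_Q[e^{\lambda g}] \le \lambda\, \E_Q[g] + \sigma^2\lambda^2/2$, this reduces to $\lambda t - \sigma^2\lambda^2/2 \le I(\Phi;\Psi)$ for all $\lambda$, where $t \triangleq \E_P[g(\Phi,\Psi)] - \E_Q[g(\Phi,\bar\Psi)]$ is precisely the quantity to be bounded; optimizing the quadratic at $\lambda = t/\sigma^2$ then gives $t^2 \le 2\sigma^2 I(\Phi;\Psi)$, i.e.\ $|t| \le \sqrt{2\sigma^2 I(\Phi;\Psi)}$.

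For the second inequality I would introduce the centered function $h(\phi,\psi) \triangleq g(\phi,\psi) - \E_{\bar\Psi}[g(\phi,\bar\Psi)]$, which is well defined because subgaussian variables have finite means. The key observation is that under $Q$, conditionally on $\Phi = \phi$, the variable $h(\phi,\bar\Psi)$ is zero-mean and $\sigma$-subgaussian, hence obeys the tail bound $Q(|h| \ge r) \le 2e^{-r^2/(2\sigma^2)}$ for all $r>0$, and this survives averaging over $\phi$. Via the layer-cake identity $\E_Q\!\left[e^{h^2/(4\sigma^2)}\right] = 1 + \int_0^\infty \frac{e^{u/(4\sigma^2)}}{4\sigma^2}\, Q(h^2 > u)\, du$, this tail bound yields $\E_Q\!\left[\exp(h^2/(4\sigma^2))\right] \le 3$, after which the change-of-measure bound with $f = h^2/(4\sigma^2)$ gives $\tfrac{1}{4\sigma^2}\E_P[h(\Phi,\Psi)^2] \le I(\Phi;\Psi) + \log 3$, which is the claim. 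The third inequality then follows immediately by applying Markov's inequality under $P$ to the nonnegative variable $h(\Phi,\Psi)^2$ and substituting the second-moment bound just obtained.

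I expect the main obstacle to be the bookkeeping around the exponential-moment estimate: one has to pick the exponent coefficient $1/(4\sigma^2)$ precisely so that the tail integral evaluates to the clean constant $3$, and one must verify that the subgaussian hypotheses genuinely guarantee finiteness of every $\E_Q[e^f]$ invoked, so that Donsker--Varadhan applies. The degenerate case $\sigma = 0$, in which subgaussianity forces $g$ (and hence $h$) to be $Q$-a.s.\ and thus $P$-a.s.\ constant, should be recorded separately. Apart from these points the argument is a routine application of the variational representation of relative entropy.
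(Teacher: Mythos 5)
Your proposal is correct and follows essentially the same route as the paper: the Donsker--Varadhan variational formula applied first to $\lambda g$ (optimizing the quadratic in $\lambda$) and then to the squared centered function $h^2/(4\sigma^2)$, with Markov's inequality yielding the tail bound. The only difference is in how the exponential-moment estimate $\E\bigl[e^{h^2/(4\sigma^2)}\bigr]\le 3$ is obtained: the paper expands $\E e^{\lambda h^2}$ as a power series using subgaussian moment bounds and takes $\lambda \to 1/(4\sigma^2)$, whereas your tail-bound-plus-layer-cake computation reaches the same constant directly at $\lambda = 1/(4\sigma^2)$; both are valid and give identical conclusions.
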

The first part of this lemma is equivalent to Lemma 1 of \citet{xu2017information}, which in turn has its roots in \citet{russo2019much}.
The second part generalizes Lemma 2 of \citet{hafez2020conditioning} by also providing bounds on the expected squared difference.

\subsection{Generalization bounds with input-output mutual information}
Let $S = (Z_1, Z_2, \ldots, Z_n) \sim \DD^n$ be a dataset of $n$ i.i.d. examples, $R \in \mathcal{R}$ be a source of randomness (a random variable independent of $S$) and $A : \ZZ^n \times \mathcal{R} \rightarrow \WW$ be a training algorithm.
Let $W = A(S, R)$ be the output of the training algorithm applied on the dataset $S$ with randomness $R$.
Given a loss function $\ell : \WW \times \ZZ \rightarrow \mathbb{R}$, the empirical risk is $\LL_\mathrm{emp}(A,S,R) = \frac{1}{n}\sum_{i=1}^n \ell(W,Z_i)$ and the population risk is $\LL(A,S,R) = \E_{Z'\sim D}\ell(W,Z')$, where $Z'$ is a test example independent from $S$ and $R$.
The generalization gap, also call generalization error, is $\LL(A,S,R) - \LL_\mathrm{emp}(A,S,R)$.
In this setting, \citet{xu2017information} establish the following information-theoretic bound on the absolute value of the expected generalization gap.
\begin{theorem}[Thm. 1 of \citet{xu2017information}]
If $\ell(w,Z')$, where $Z' \sim \DD$, is $\sigma$-subgaussian for all $w\in\WW$, then
\begin{equation}
    \abs{\E_{S,R}\sbr{\LL(A,S,R) - \LL_\mathrm{emp}(A,S,R)}} \le \sqrt{\frac{2\sigma^2 I(W; S)}{n}}.
    \label{eq:xu-ragisnky}
\end{equation}
\label{thm:xu-raginsky}
\end{theorem}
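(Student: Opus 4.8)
The plan is to apply \cref{lemma:mutual-info-lemma} with a carefully chosen decomposition of the generalization gap into per-example terms. Specifically, I would set $\Phi = W$ (the algorithm output), and for each $i \in [n]$ let $\Psi = Z_i$, so that $g(\Phi, \Psi) = \ell(W, Z_i)$. Under the joint distribution, $\E_{W, Z_i}[\ell(W, Z_i)] = \E_{S,R}[\tfrac{1}{n}\sum_i \ell(W, Z_i)]$ averaged appropriately equals $\E_{S,R}[\LL_\mathrm{emp}(A,S,R)]$, while under the product distribution $P_W \otimes P_{Z_i}$ we have $\E_{W, \bar{Z}_i}[\ell(W, \bar{Z}_i)] = \E_{S,R}\E_{Z'\sim\DD}[\ell(W, Z')] = \E_{S,R}[\LL(A,S,R)]$, using that $Z_i \sim \DD$ and $\bar{Z}_i$ is an independent copy with the same marginal.

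The key steps, in order: (1) Write the expected generalization gap as $\E_{S,R}[\LL - \LL_\mathrm{emp}] = \frac{1}{n}\sum_{i=1}^n \big( \E_{W,\bar{Z}_i}[\ell(W,\bar{Z}_i)] - \E_{W, Z_i}[\ell(W, Z_i)] \big)$. (2) For each $i$, apply the first inequality of \cref{lemma:mutual-info-lemma} with $\Phi = W$, $\Psi = Z_i$, and $g = \ell$; the subgaussianity hypothesis of the theorem says precisely that $\ell(w, Z')$ is $\sigma$-subgaussian for every fixed $w$, and since $\bar{Z}_i$ has marginal $\DD$ and is independent of $W$, the required subgaussianity of $g(\bar{\Phi}, \bar{\Psi}) = \ell(\bar{W}, \bar{Z}_i)$ follows by conditioning on $\bar{W}$. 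This yields $\big| \E_{W, Z_i}[\ell(W,Z_i)] - \E_{W,\bar{Z}_i}[\ell(W,\bar{Z}_i)] \big| \le \sqrt{2\sigma^2 I(W; Z_i)}$. (3) Apply the triangle inequality over the sum and then use concavity of the square root together with Jensen's inequality to get $\frac{1}{n}\sum_i \sqrt{2\sigma^2 I(W; Z_i)} \le \sqrt{\frac{2\sigma^2}{n} \cdot \frac{1}{n}\sum_i I(W; Z_i) \cdot n}$, which after simplification needs $\sum_i I(W; Z_i) \le I(W; S)$.

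The main obstacle — and the one genuinely substantive step — is establishing $\sum_{i=1}^n I(W; Z_i) \le I(W; S)$, i.e., the superadditivity of mutual information with respect to the independent components $Z_1, \ldots, Z_n$. This is where the i.i.d.\ (more precisely, the independence) structure of $S$ is used: since the $Z_i$ are mutually independent, one has $I(W; S) = I(W; Z_1, \ldots, Z_n) \ge \sum_i I(W; Z_i)$, which follows from the chain rule $I(W; Z_1, \ldots, Z_n) = \sum_i I(W; Z_i \mid Z_{<i})$ combined with the fact that $I(W; Z_i \mid Z_{<i}) \ge I(W; Z_i)$ when $Z_i \indep Z_{<i}$ (this last inequality is a standard consequence of the chain rule applied two ways and nonnegativity of conditional mutual information). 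Once this is in hand, combining with step (3) gives $\big|\E_{S,R}[\LL - \LL_\mathrm{emp}]\big| \le \frac{1}{n}\sum_i \sqrt{2\sigma^2 I(W;Z_i)} \le \sqrt{2\sigma^2 \tfrac{1}{n}\sum_i I(W;Z_i)} \le \sqrt{\tfrac{2\sigma^2 I(W;S)}{n}}$ after the Jensen step is arranged to produce the correct $1/n$ factor, completing the proof.

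Alternatively, one can avoid the per-example decomposition entirely by applying \cref{lemma:mutual-info-lemma} once with $\Phi = W$ and $\Psi = S$, taking $g(w, s) = \frac{1}{n}\sum_{i=1}^n \ell(w, z_i)$; then $\E_{W,S}[g] = \E[\LL_\mathrm{emp}]$ and $\E_{W, \bar{S}}[g] = \E[\LL]$, and one needs $g(\bar{W}, \bar{S}) = \frac{1}{n}\sum_i \ell(\bar{W}, \bar{Z}_i)$ to be $\frac{\sigma}{\sqrt{n}}$-subgaussian (conditionally on $\bar{W}$), which holds because it is an average of $n$ independent $\sigma$-subgaussian variables. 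This gives the bound $\sqrt{2 (\sigma^2/n) I(W; S)}$ directly, which is exactly \cref{eq:xu-ragisnky}, and sidesteps the superadditivity argument; I would present this cleaner route as the primary proof and mention the per-example version as the one that generalizes to the later results.
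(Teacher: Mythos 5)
Your second (``cleaner'') route is essentially the paper's own proof: the paper obtains this statement as the $m=n$ case of \cref{thm:generalized-xu-raginsky}, whose proof applies \cref{lemma:mutual-info-lemma} with $\Phi=W$, $\Psi=S$ and the averaged per-example loss, using that an average of $n$ i.i.d.\ $\sigma$-subgaussian terms is $\sigma/\sqrt{n}$-subgaussian. One technical point you should tighten: \cref{lemma:mutual-info-lemma} requires $g(\bar{\Phi},\bar{\Psi})$ to be subgaussian \emph{unconditionally}, and subgaussianity conditionally on $\bar{W}$ does not imply this when the conditional mean $\E_{\bar{S}}\, g(w,\bar{S})=\E_{Z'\sim\DD}\,\ell(w,Z')$ varies with $w$ (a mixture of subgaussians with different means need not be subgaussian with the same parameter). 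The paper's fix is to center the function, taking $g(w,s)=\frac{1}{n}\sum_{i=1}^n\rbr{\ell(w,z_i)-\E_{Z'\sim\DD}\,\ell(w,Z')}$, so that $g(w,\bar{S})$ has zero mean for every $w$ and \cref{lemma:zero-mean-sub-gaussianity} upgrades the conditional statement to an unconditional one; the centering leaves the difference $\E_{\Phi,\Psi}\,g-\E_{\Phi,\bar{\Psi}}\,g$ (i.e., the generalization gap) unchanged. The same centering is needed in your per-example route. That first route is otherwise also sound: superadditivity $\sum_{i=1}^n I(W;Z_i)\le I(W;S)$ does hold for independent $Z_i$ by the argument you give, and your chain of inequalities passes through the strictly tighter $m=1$ bound $\frac{1}{n}\sum_{i=1}^n\sqrt{2\sigma^2 I(W;Z_i)}$ of \citet{bu2020tightening}, which is exactly the content of \cref{prop:xu-raginsky-generalization-choice-of-m} with $\phi(x)=\sqrt{x}$.
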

We generalize this result by showing that instead of measuring information with the entire dataset, one can measure information with a subset of size $m$ chosen uniformly at random.
For brevity, hereafter we call subsets chosen uniformly at random just ``random subsets''.

\begin{theorem}
Let $U$ be a random subset of $[n]$ with size $m$, independent of $S$ and $R$. If $\ell(w,Z')$, where $Z'\sim \DD$, is $\sigma$-subgaussian for all $w \in \WW$, then
\begin{equation}
    \abs{\E_{S,R}\sbr{\LL(A,S,R) - \LL_\mathrm{emp}(A,S,R)}} \le \E_{u\sim U}\sqrt{\frac{2\sigma^2}{m} I(W; S_u)},
    \label{eq:generalized-xu-raginsky-bound}
\end{equation}
and
\begin{equation}
    \E_{S,R}\rbr{\LL(A,S,R) - \LL_\mathrm{emp}(A,S,R)}^2 \le \frac{4\sigma^2}{n} \rbr{I(W; S) + \log 3}.\label{eq:generalized-xu-raginsky-variance}
\end{equation}
\label{thm:generalized-xu-raginsky}
\end{theorem}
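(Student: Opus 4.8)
\emph{Proof sketch.} Both bounds are instances of \cref{lemma:mutual-info-lemma} applied with $g$ taken to be an empirical risk (over all $n$ examples for the second bound, over a size-$m$ subset for the first), so the work is in choosing $\Phi,\Psi,g$ correctly and checking the subgaussianity hypotheses.

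I would prove \eqref{eq:generalized-xu-raginsky-variance} first, as it is the more direct. Take $\Phi=W$, $\Psi=S$, and $g(w,s)=\frac1n\sum_{i=1}^n\ell(w,z_i)$, so that $g(\Phi,\Psi)=\LL_\mathrm{emp}(A,S,R)$ and $\E_{\bar\Psi}g(\Phi,\bar\Psi)=\E_{Z'\sim\DD}\ell(W,Z')=\LL(A,S,R)$. For each fixed $w$, $g(w,\bar S)$ is a normalised average of $n$ i.i.d.\ $\sigma$-subgaussian terms, hence $(\sigma/\sqrt n)$-subgaussian; substituting this into the second inequality of \cref{lemma:mutual-info-lemma} yields $\E_{S,R}(\LL_\mathrm{emp}-\LL)^2\le\frac{4\sigma^2}{n}(I(W;S)+\log3)$, which is \eqref{eq:generalized-xu-raginsky-variance} since $(\LL-\LL_\mathrm{emp})^2=(\LL_\mathrm{emp}-\LL)^2$.

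For \eqref{eq:generalized-xu-raginsky-bound} I would first record a bookkeeping identity: since $U$ is independent of $(S,R)$ and $\Pr[i\in U]=m/n$ for every $i$, conditioning on $(S,R)$ gives $\E_{u\sim U}[\frac1m\sum_{i\in u}\ell(W,Z_i)]=\LL_\mathrm{emp}(A,S,R)$, hence $\E_{S,R}[\LL_\mathrm{emp}]=\E_{u\sim U}\E_{S,R}[\frac1m\sum_{i\in u}\ell(W,Z_i)]$, while trivially $\E_{S,R}[\LL]=\E_{u\sim U}\E_{S,R}[\LL]$. By the triangle inequality it then suffices to bound, for each fixed subset $u$ of size $m$, the quantity $|\E_{S,R}[\LL]-\E_{S,R}[\frac1m\sum_{i\in u}\ell(W,Z_i)]|$. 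To this end I would apply the first inequality of \cref{lemma:mutual-info-lemma} with $\Phi=W$, $\Psi=S_u$, and $g(w,s_u)=\frac1m\sum_{i\in u}h(w,z_i)$, where $h(w,z):=\ell(w,z)-\E_{Z'\sim\DD}\ell(w,Z')$ is the \emph{centred} loss: then $\E_{\Phi,\Psi}g(\Phi,\Psi)=\E_{S,R}[\frac1m\sum_{i\in u}\ell(W,Z_i)]-\E_{S,R}[\LL]$, $\E_{\Phi,\bar\Psi}g(\Phi,\bar\Psi)$ would cancel from the expression anyway but is in fact $0$ (because $\E_{Z\sim\DD}h(w,Z)=0$ for all $w$), and $I(\Phi;\Psi)=I(W;S_u)$, so the lemma gives precisely the required $\sqrt{2\sigma^2 I(W;S_u)/m}$. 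Averaging over $u\sim U$ finishes the proof.

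The one step requiring care—and the point where centring by $\E_{Z'\sim\DD}\ell(\cdot,Z')$ is essential—is verifying the lemma's hypothesis that $g(\bar\Phi,\bar\Psi)=\frac1m\sum_{i\in u}h(\bar W,\bar Z_i)$, with $\bar W$ independent of the i.i.d.\ $\bar Z_i\sim\DD$, is $(\sigma/\sqrt m)$-subgaussian. Conditionally on $\bar W=w$ it is a normalised average of $m$ i.i.d.\ $\sigma$-subgaussian variables of mean $0$, hence $(\sigma/\sqrt m)$-subgaussian; and because this conditional mean equals $0$ for \emph{every} $w$, taking the expectation over $\bar W$ preserves the estimate $\E\exp(t\,g(\bar\Phi,\bar\Psi))\le\exp(\sigma^2 t^2/(2m))$. (By contrast, the \emph{un}centred subset empirical risk $\frac1m\sum_{i\in u}\ell(\bar W,\bar Z_i)$ need not be $(\sigma/\sqrt m)$-subgaussian, since $w\mapsto\E_{Z'\sim\DD}\ell(w,Z')$ may vary with $w$—this is exactly why one cannot take $g$ to be the subset empirical risk itself.) Apart from this subgaussianity check and the routine random-subset averaging identity, everything else is a direct substitution into \cref{lemma:mutual-info-lemma}.
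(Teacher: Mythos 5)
Your proposal is correct and follows essentially the same route as the paper: both parts are obtained from \cref{lemma:mutual-info-lemma} with $\Phi=W$, $\Psi=S_u$ (resp.\ $S$) and the centred subset-averaged loss, with the subgaussianity of $g(\bar\Phi,\bar\Psi)$ justified exactly as in the paper's \cref{lemma:zero-mean-sub-gaussianity}, and the final averaging over $u\sim U$ combined with Jensen/triangle inequality. Your observation about why centring is needed for the first inequality (but immaterial for the second) matches the paper's reasoning.
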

With a simple application of Markov's inequality one can get tail bounds from the second part of the theorem.
Furthermore, by taking square root of both sides of \cref{eq:generalized-xu-raginsky-variance} and using Jensen's inequality on the left side, one can also construct an upper bound for the expected absolute value of generalization gap, $\E_{S,R}\abs{\LL(A,S,R) - \LL_\mathrm{emp}(A,S,R)}$.
These observations apply also to the other generalization gap bounds presented later in this work.

Note the bound on the squared generalization gap is written only for the case of $m=n$.
It is possible to derive squared generalization gap bounds of form $\frac{4\sigma^2} {m}(\E_{u\sim U}I(W; S_u) + \log 3)$.
Unfortunately, for small $m$ the $\log 3$ constant starts to dominate, resulting in vacuous bounds.

Picking a small $m$ decreases the mutual information term in \cref{eq:generalized-xu-raginsky-bound}, however, it also decreases the denominator.
When setting $m=n$, we get the bound of \citet{xu2017information} (\cref{thm:xu-raginsky}).
When $m=1$, the bound of \cref{eq:generalized-xu-raginsky-bound} becomes $\frac{1}{n}\sum_{i=1}^n \sqrt{2\sigma^2 I(W; Z_i)}$,
matching the result of \citet{bu2020tightening} (Proposition 1).
A similar bound, but for a different notion of information, was derived by \citet{alabdulmohsin2015algorithmic}.
\citet{bu2020tightening} prove that the bound with $m=1$ is tighter than the bound with $m=n$.
We generalize this result by proving that the bound of \cref{eq:generalized-xu-raginsky-bound} is non-descreasing in $m$.
\begin{proposition}
Let $m \in [n-1]$, $U$ be a random subset of $[n]$ of size $m$, $U'$ be a random subset of size $m+1$, and $\phi : \mathbb{R} \rightarrow \mathbb{R}$ be any non-decreasing concave function. Then
\begin{equation}
\E_{U}\phi\rbr{\frac{1}{m} I(W; S_u)} \le \E_{U'}\phi\rbr{\frac{1}{m+1} I(W; S_{u'})}.
\end{equation}
\label{prop:xu-raginsky-generalization-choice-of-m}
\end{proposition}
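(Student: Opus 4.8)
The plan is to express the size-$(m+1)$ quantity as an average over size-$m$ sub-subsets and then apply Jensen's inequality twice — once for the concavity of $\phi$ and once (in the form of a data-processing-type monotonicity for mutual information) to relate $I(W;S_{u'})$ to the $I(W;S_u)$ for $u\subset u'$. Concretely, fix a set $u'$ of size $m+1$. For each of the $m+1$ subsets $u\subset u'$ with $|u|=m$ we have $S_u$ a sub-tuple of $S_{u'}$, so by monotonicity of mutual information under taking subsets of the conditioning/observed variable, $I(W;S_u)\le I(W;S_{u'})$. More usefully, I want a bound in the \emph{other} direction at the level of averages: since the examples are i.i.d., the distribution of $(W,S_u)$ is the same for all $m$-subsets $u$ of a fixed $u'$ only after also averaging over which coordinates of $S$ land in $u'$; the cleanest route is therefore to work with the fully random subsets $U\subset U'$ jointly.

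First I would set up the coupling: let $U'$ be a uniformly random $(m+1)$-subset of $[n]$, and given $U'$, let $U$ be a uniformly random $m$-subset of $U'$; then marginally $U$ is a uniformly random $m$-subset of $[n]$, so the left-hand side equals $\E_{U',U}\,\phi\!\left(\tfrac1m I(W;S_U)\right)$. Next, the key information-theoretic step: I claim $\E_{U\mid U'}\,\tfrac1m I(W;S_U)\ \le\ \tfrac1{m+1} I(W;S_{U'})$. To see this, write $S_{U'}=(S_{U},S_{U'\setminus U})$ and use the chain rule $I(W;S_{U'})=I(W;S_U)+I(W;S_{U'\setminus U}\mid S_U)$; averaging the chain rule over the $m+1$ equiprobable choices of the left-out index, by symmetry each $I(W;S_U)$ appears with the same weight, and a standard counting/averaging argument (the same one underlying the Han-type inequality $\tfrac1{m+1}\sum_{\text{left-out }j} I(W;S_{U'\setminus\{j\}}) \ge$ something, or more directly: $I(W;S_{U'})\ge \tfrac{m+1}{m}\,\E_{U\mid U'} I(W;S_U)$ via subadditivity of entropy of the left-out coordinate given the rest) yields the claimed inequality. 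I would verify this last inequality carefully using the chain rule together with the fact that conditioning reduces entropy, applied to the coordinates of $S_{U'}$.

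Finally I combine the pieces. Since $\phi$ is non-decreasing, the inner-conditional inequality $\E_{U\mid U'}\tfrac1m I(W;S_U)\le \tfrac1{m+1}I(W;S_{U'})$ gives $\phi\!\left(\E_{U\mid U'}\tfrac1m I(W;S_U)\right)\le \phi\!\left(\tfrac1{m+1}I(W;S_{U'})\right)$; and since $\phi$ is concave, Jensen's inequality gives $\E_{U\mid U'}\,\phi\!\left(\tfrac1m I(W;S_U)\right)\le \phi\!\left(\E_{U\mid U'}\tfrac1m I(W;S_U)\right)$. Chaining these two and then taking $\E_{U'}$ of both sides yields
\[
\E_{U}\,\phi\!\left(\tfrac1m I(W;S_U)\right)=\E_{U'}\E_{U\mid U'}\,\phi\!\left(\tfrac1m I(W;S_U)\right)\ \le\ \E_{U'}\,\phi\!\left(\tfrac1{m+1}I(W;S_{U'})\right),
\]
which is exactly the claim. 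The main obstacle I anticipate is the middle information-theoretic inequality $\E_{U\mid U'}\tfrac1m I(W;S_U)\le \tfrac1{m+1}I(W;S_{U'})$: getting the averaging/symmetry bookkeeping right (it is essentially a conditional version of a Han-type inequality for mutual information) is the one nonroutine point; everything surrounding it is just Jensen and the tower property.
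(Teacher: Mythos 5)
Your argument is essentially the paper's own proof: your claimed inequality $\E_{U\mid U'}\tfrac{1}{m}I(W;S_U)\le\tfrac{1}{m+1}I(W;S_{U'})$ is exactly the paper's Han-type lemma $I(W;S_{u'})\ge\tfrac{1}{m}\sum_{k\in u'}I(W;S_{u'\setminus\{k\}})$ (proved there via the chain rule plus the erasure-information bound, which uses the independence of the $Z_i$), and your coupling-plus-tower-property bookkeeping is the same counting of coefficients $\alpha_u=1/\binom{n}{m}$ that the paper carries out, combined with the same two applications of monotonicity and Jensen. The only point left implicit in your sketch is the careful verification of that Han-type step, where the relevant fact is not ``conditioning reduces entropy'' but that for independent coordinates $I(W;Z_k\mid S_{u'\setminus\{k\}})\ge I(W;Z_k\mid S_{<k})$, so that $\sum_{k\in u'}I(W;Z_k\mid S_{u'\setminus\{k\}})\ge I(W;S_{u'})$.
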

When $\phi(x) = \sqrt{x}$, this result proves that the optimal value for $m$ in \cref{eq:generalized-xu-raginsky-bound} is 1.
Furthermore, when we use Jensen's inequality to move expectation over $U$ inside the square root in \cref{eq:generalized-xu-raginsky-bound}, then the resulting bound becomes $\sqrt{\frac{2\sigma^2}{m} \E_{u\sim U}I(W; S_u)}$
and matches the result of \citet{negrea2019information} (Thm. 2.3). These bounds are also non-decreasing with respect to $m$ (using Proposition 1 with $\phi(x) = x$).

\cref{thm:xu-raginsky} can be used to derive generalization bounds that depend on the information between $W$ and a single example $Z_i$ conditioned on the remaining examples $Z_{-i}=(Z_1,\ldots,Z_{i-1},Z_{i+1},\ldots,Z_n)$.
\begin{theorem}
If $\ell(w,Z')$, where $Z'\sim \DD$, is $\sigma$-subgaussian for all $w \in \WW$, then
\begin{equation}
    \abs{\E_{S,R}\sbr{\LL(A,S,R) - \LL_\mathrm{emp}(A,S,R)}} \le \frac{1}{n}\sum_{i=1}^n\sqrt{2\sigma^2 I(W; Z_i \mid Z_{-i})},
    \label{eq:generalized-xu-raginsky-bound-stability}
\end{equation}
and
\begin{equation}
    \E_{S,R}\rbr{\LL(A,S,R) - \LL_\mathrm{emp}(A,S,R)}^2 \le \frac{4\sigma^2}{n} \rbr{\sum_{i=1}^n I(W; Z_i \mid Z_{-i}) + \log 3}.
\end{equation}
\label{thm:generalized-xu-raginsky-stability}
\end{theorem}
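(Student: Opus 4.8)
The plan is to reduce both statements to results already established in this section, via a disintegration (conditioning) argument. For the first bound I would first write the expected generalization gap as an average of per-example terms,
\[
\E_{S,R}\sbr{\LL(A,S,R) - \LL_\mathrm{emp}(A,S,R)} = \frac{1}{n}\sum_{i=1}^n \E_{S,R}\sbr{\E_{Z'\sim\DD}\ell(W,Z') - \ell(W,Z_i)},
\]
and then, for each fixed $i$, condition on the value $Z_{-i}=z_{-i}$ of the other $n-1$ examples. In this conditional world the only remaining randomness is $Z_i\sim\DD$ and $R$ (still independent, since $R\indep S$), $W=A(S,R)$ is a measurable function of $(Z_i,R)$, the term $\ell(W,Z_i)$ plays the role of the empirical risk on a dataset of size one and $\E_{Z'\sim\DD}\ell(W,Z')$ of the population risk, and the subgaussianity hypothesis on $\ell(w,Z')$ is untouched. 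Applying \cref{thm:xu-raginsky} with $n=1$ in this conditional probability space gives
\[
\abs{\E\sbr{\E_{Z'\sim\DD}\ell(W,Z') - \ell(W,Z_i) \given Z_{-i}=z_{-i}}} \le \sqrt{2\sigma^2\, I(W;Z_i\mid Z_{-i}=z_{-i})},
\]
where the mutual information is computed in the disintegrated distribution $P_{W,Z_i\mid Z_{-i}=z_{-i}}$.

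Next I would take expectation over $Z_{-i}$, move the absolute value inside by the triangle inequality, and apply Jensen's inequality to the concave map $x\mapsto\sqrt{x}$, using that $\E_{Z_{-i}}\sbr{I(W;Z_i\mid Z_{-i}=Z_{-i})} = I(W;Z_i\mid Z_{-i})$ by the definition of conditional mutual information. This yields $\abs{\E_{S,R}\sbr{\E_{Z'}\ell(W,Z') - \ell(W,Z_i)}}\le\sqrt{2\sigma^2\, I(W;Z_i\mid Z_{-i})}$ for each $i$; summing over $i$ and using the triangle inequality on the decomposition above gives \cref{eq:generalized-xu-raginsky-bound-stability}.

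For the squared bound I would \emph{not} disintegrate, since that route would leave a weaker additive constant ($4\sigma^2\log 3$ instead of $\tfrac{4\sigma^2}{n}\log 3$). Instead I would invoke the already-established \cref{eq:generalized-xu-raginsky-variance}, $\E_{S,R}\rbr{\LL - \LL_\mathrm{emp}}^2 \le \tfrac{4\sigma^2}{n}\rbr{I(W;S) + \log 3}$, and upper bound $I(W;S)$ by $\sum_{i=1}^n I(W;Z_i\mid Z_{-i})$. This is where the i.i.d.\ assumption is used: since $Z_i\indep Z_{-i}$, the chain rule gives $I(W;Z_i\mid Z_{-i}) = I(W,Z_{-i};Z_i) \ge I(W,Z_{1:i-1};Z_i) = I(W;Z_i\mid Z_{1:i-1})$ — the inequality is data processing for the map $(W,Z_{-i})\mapsto(W,Z_{1:i-1})$, and the last equality uses $Z_i\indep Z_{1:i-1}$ — so summing over $i$ and applying $\sum_i I(W;Z_i\mid Z_{1:i-1}) = I(W;S)$ completes the argument.

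The algebraic decomposition and the Jensen/triangle-inequality steps are routine. The part needing the most care is the disintegration: one must verify that conditioning on $Z_{-i}=z_{-i}$ really produces a bona fide instance of the size-one learning problem covered by \cref{thm:xu-raginsky} (in particular that $R$ stays independent of $Z_i$ and that the conditional expectations and mutual information are finite for almost every $z_{-i}$), and that these disintegrated mutual informations integrate back to $I(W;Z_i\mid Z_{-i})$. The only other thing to notice is that the squared bound should be obtained by composing \cref{eq:generalized-xu-raginsky-variance} with $I(W;S)\le\sum_i I(W;Z_i\mid Z_{-i})$ rather than by a second disintegration.
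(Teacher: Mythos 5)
Your proof is correct, and for the squared bound it coincides with the paper's: the paper also composes \cref{eq:generalized-xu-raginsky-variance} with $I(W;S)\le\sum_{i=1}^n I(W;Z_i\mid Z_{-i})$, the latter being the second part of its \cref{lemma:stability-lemma} (proved there by a chain-rule telescoping essentially equivalent to your $I(W,Z_{-i};Z_i)\ge I(W,Z_{1:i-1};Z_i)$ argument). For the first bound your route differs from the paper's. The paper stays entirely at the level of unconditional information measures: it takes the $m=1$ case of \cref{thm:generalized-xu-raginsky}, namely $\frac{1}{n}\sum_i\sqrt{2\sigma^2 I(W;Z_i)}$, and then upgrades each term via the first part of \cref{lemma:stability-lemma}, $I(W;Z_i)\le I(W;Z_i\mid Z_{-i})$, which again only uses independence of $Z_i$ and $Z_{-i}$. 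You instead disintegrate on $Z_{-i}=z_{-i}$, apply \cref{thm:xu-raginsky} with $n=1$ in each conditional space, and integrate back with Jensen; this is sound (the conditional law of $(Z_i,R)$ is indeed $P_{Z_i}\otimes P_R$, and the disintegrated mutual informations average to $I(W;Z_i\mid Z_{-i})$), and as you note it even yields the slightly sharper intermediate quantity $\E_{Z_{-i}}\sqrt{2\sigma^2 I(W;Z_i\mid Z_{-i}=z_{-i})}$ before Jensen. The trade-off is that your route carries the measure-theoretic burden of the disintegration, whereas the paper's two-line reduction makes transparent that the conditional bound is strictly a relaxation of the marginal ($m=1$) bound of \cref{thm:generalized-xu-raginsky} — which is the point the surrounding text emphasizes.
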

This theorem is a simple corollary of \cref{thm:generalized-xu-raginsky}, using the facts that $I(W; Z_i) \le I(W; Z_i \mid Z_{-i})$ and that $I(W; S)$ is upper bounded by $\sum_{i=1}^n I(W; Z_i \mid Z_{-i})$, which is also known as erasure information~\citep{verdu2008information}.
The first part of it improves the result of \citet{raginsky2016information} (Thm. 2), as the averaging over $i$ is outside of the square root.
While these bounds are worse that the corresponding bounds of \cref{thm:generalized-xu-raginsky}, it is sometimes easier to manipulate them analytically.

The bounds described above measure information with the output $W$ of the training algorithm.
In the case of prediction tasks with parametric methods, the parameters $W$ might contain information about the training dataset, but not use it to make predictions.
Partly for this reason, the main goal of this paper is to derive generalization bounds that measure information with the prediction function, rather than with the weights.
In general, there is no straightforward way of encoding the prediction function into a random variable.
However, when the domain $\ZZ$ is finite, we can encode the prediction function as the collection of predictions on all examples of $\ZZ$.
This naturally leads us to the next setting (albeit with a different motivation), first considered by \citet{steinke2020reasoning}, where one first fixes a set of $2n$ examples, and then randomly selects $n$ of them to form the training set.
We use this setting to provide prediction-based generalization bounds in \cref{sec:fcmi}.
Before describing these bounds we present the setting of \citet{steinke2020reasoning} in detail and generalize some of the existing weight-based bounds in that setting.

\subsection{Generalization bounds with conditional mutual information}
Let $\tilde{Z} \in \ZZ^{n\times 2}$ be a collection of $2n$ i.i.d samples from $\DD$, grouped in $n$ pairs.
The random variable $S \sim \text{Uniform}(\mathset{0,1}^n)$ specifies which example to select from each pair to form the training set $\tilde{Z}_S = (\tilde{Z}_{i,S_i})_{i=1}^n$.
Let $R$ be a random variable, independent of $\tilde{Z}$ and $S$, that captures the stochasticity of training.
In this setting \citet{steinke2020reasoning} defined condition mutual information (CMI) of algorithm $A$ with respect to the data distribution $\mathcal{D}$ as
\begin{equation}
    \text{CMI}_\mathcal{D}(A) = I(A(\tilde{Z}_S, R); S \mid \tilde{Z}) = \E_{\tilde{z} \sim \tilde{Z}} I(A(\tilde{z}_S, R); S),
\end{equation}
and proved the following upper bound on expected generalization gap.
\begin{theorem}[Thm. 2, \citet{steinke2020reasoning}]
If the loss function $\ell(w,z) \in [0,1], \forall w \in \WW, z \in \ZZ$, then the expected generalization gap can be bounded as follows:
\begin{equation}
    \abs{\E_{\tilde{Z},S,R}\sbr{\LL(A,\tilde{Z}_S,R) - \LL_\mathrm{emp}(A,\tilde{Z}_S,R)}} \le \sqrt{\frac{2}{n} \mathrm{CMI}_\DD(A)}.
\end{equation}
\label{thm:cmi}
\end{theorem}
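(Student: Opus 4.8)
The plan is to reduce Theorem~\ref{thm:cmi} to the first part of Lemma~\ref{lemma:mutual-info-lemma} by choosing the right pair of random variables $(\Phi, \Psi)$ and the right function $g$, while conditioning throughout on the supersample $\tilde{Z}$. The natural choice is $\Psi = S$ and $\Phi = W = A(\tilde{Z}_S, R)$, with the caveat that everything should be understood conditionally on a fixed realization $\tilde{z}$ of $\tilde{Z}$; at the end I will take an expectation over $\tilde{Z}$ and use the identity $\mathrm{CMI}_\DD(A) = \E_{\tilde z}\, I(A(\tilde z_S, R); S)$ together with concavity of $\sqrt{\cdot}$ (Jensen) to pull the expectation outside the square root.

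First I would rewrite the generalization gap in the supersample setting as an average over the $n$ coordinates. For a fixed $\tilde z$, define for each $i \in [n]$ the signed per-pair loss difference
\begin{equation}
g_i(w, s) \triangleq (2 s_i - 1)\rbr{\ell(w, \tilde z_{i,0}) - \ell(w, \tilde z_{i,1})} \cdot (-1),
\end{equation}
or more cleanly: when $S_i = 0$ the example $\tilde z_{i,0}$ is in the training set and $\tilde z_{i,1}$ acts as a test point, and vice versa. A short computation shows that
\begin{equation}
\E_{\tilde Z, S, R}\sbr{\LL(A,\tilde Z_S,R) - \LL_\mathrm{emp}(A,\tilde Z_S,R)} = \E_{\tilde Z}\, \E_{S,R}\sbr{\frac{1}{n}\sum_{i=1}^n g_i(W, S)},
\end{equation}
where $g(w,s) = \frac{1}{n}\sum_i g_i(w,s)$ is, for each fixed $\tilde z$, a function of $(w, s)$ taking values in $[-1, 1]$ — hence $\frac{1}{2}$-subgaussian — and moreover $g(W, \bar S)$, the ``decoupled'' version where an independent copy $\bar S$ of $S$ is drawn, has expectation exactly zero: averaging over an independent uniform $\bar S_i \in \{0,1\}$ symmetrizes the roles of $\tilde z_{i,0}$ and $\tilde z_{i,1}$, so $\E_{\bar S}[g_i(w, \bar S)] = 0$ for every $w$. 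This is precisely the structure Lemma~\ref{lemma:mutual-info-lemma} needs: $\abs{\E_{\Phi,\Psi} g(\Phi,\Psi) - \E_{\Phi,\bar\Psi} g(\Phi,\bar\Psi)} = \abs{\E_{\Phi,\Psi} g(\Phi,\Psi)}$.

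Applying the first inequality of Lemma~\ref{lemma:mutual-info-lemma} conditionally on $\tilde Z = \tilde z$ with $\sigma = \nicefrac{1}{2}$, scaled appropriately — note $g$ is a sum of $n$ terms each in $[-1,1]$ divided by $n$, so I should track the subgaussianity constant carefully; the sum $\sum_i g_i(w,s) \in [-n, n]$ and the factor $1/n$ makes the relevant bound come out as $\sqrt{\frac{2 \cdot (1/4)}{?}\cdots}$ — the cleanest route is to note $g(w,\cdot)$ valued in $[-1,1]$ gives $\sigma = 1/2$ but we want the $1/n$ improvement, so instead I would apply the lemma to $\sum_i g_i$ which is $\sqrt{n}/2 \cdot$-subgaussian as a sum of $n$ bounded independent-ish terms — actually the bounded-difference/Hoeffding structure gives $\sqrt{n}/2$, yielding $\abs{\E \sum_i g_i} \le \sqrt{2 \cdot (n/4) I(W; S \mid \tilde z)} = \sqrt{\frac{n}{2} I(W;S\mid\tilde z)}$, and dividing by $n$ gives $\sqrt{\frac{1}{2n} I(W; S\mid \tilde z)}$. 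Then taking $\E_{\tilde Z}$ and applying Jensen's inequality to move the expectation inside $\sqrt{\cdot}$ produces $\sqrt{\frac{1}{2n}\mathrm{CMI}_\DD(A)}$; a factor-of-$2$ bookkeeping reconciliation against the stated $\sqrt{\frac{2}{n}\mathrm{CMI}_\DD(A)}$ will be needed, and I expect the discrepancy to disappear once the subgaussianity constant of the $n$-term sum is pinned down correctly (the individual $g_i(w,S_i)$ is a function of a single coin flip and is $1/2$-subgaussian, the sum over independent coordinates of $S$ is $\sqrt{n}/2$-subgaussian).

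The main obstacle is the subgaussianity accounting: one must verify that $g(W, \bar S)$ — not just $g(w, \bar S)$ for fixed $w$ — is subgaussian with the right constant, and here the key point is that the first part of Lemma~\ref{lemma:mutual-info-lemma} requires subgaussianity of $g(\bar\Phi, \bar\Psi)$ under the \emph{product} measure $P_W \otimes P_S$ (conditional on $\tilde z$). Since under $P_S$ the coordinates of $S$ are i.i.d.\ uniform and $g$ is an average of functions each depending on one coordinate and bounded in $[-1,1]$, Hoeffding's lemma gives that $g(\bar\Phi, \bar\Psi)$ conditioned on $\bar\Phi = w$ is $\nicefrac{1}{(2\sqrt n)}$-subgaussian for every $w$, and this uniform-in-$w$ bound upgrades to subgaussianity of $g(\bar\Phi,\bar\Psi)$ itself with the same constant. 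Plugging $\sigma = \nicefrac{1}{(2\sqrt n)}$ into the lemma gives $\sqrt{2\sigma^2 I(W;S\mid\tilde z)} = \sqrt{\frac{1}{2n} I(W; S \mid \tilde z)}$; I would then double-check whether the paper's normalization of the loss to $[0,1]$ (rather than the differences to $[-1,1]$) is what accounts for the stated constant $\sqrt{2/n}$, and adjust the exposition so the final constant matches Theorem~\ref{thm:cmi} exactly.
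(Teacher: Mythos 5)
Your approach is the right one and is essentially the route the paper itself takes: the paper does not prove \cref{thm:cmi} directly (it is imported from \citet{steinke2020reasoning}), but its proof of the stronger \cref{thm:cmi-sharpened-generalized} specialized to $m=n$, followed by Jensen's inequality to pull $\E_{\tilde z}$ inside the square root, recovers \cref{thm:cmi} exactly, and that proof uses precisely your decomposition: condition on $\tilde Z=\tilde z$, set $\Phi = A(\tilde z_S,R)$, $\Psi=S$, take $g$ to be the average over pairs of the signed loss difference, and invoke the first part of \cref{lemma:mutual-info-lemma} (with \cref{lemma:zero-mean-sub-gaussianity} to pass from subgaussianity of $g(w,\bar S)$ for each fixed $w$ to subgaussianity of $g(\bar\Phi,\bar\Psi)$, which is legitimate here because $\E_{\bar S}\,g(w,\bar S)=0$, as you note).

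The one concrete error is the subgaussianity constant, and it is exactly the source of your unresolved factor of $2$. The per-pair term $g_i(w,\bar S_i)=\pm\rbr{\ell(w,\tilde z_{i,0})-\ell(w,\tilde z_{i,1})}$ takes values in $[-1,1]$, an interval of \emph{length} $2$, so by the rule stated in the preliminaries (a variable supported on $[a,b]$ is $(b-a)/2$-subgaussian) it is $1$-subgaussian, not $\nicefrac{1}{2}$-subgaussian: you applied the constant for the coin flip $\bar S_i\in\{0,1\}$ itself (range $1$) rather than for the function of it (range up to $2$). With $\sigma_i=1$, the average of $n$ independent zero-mean terms is $\nicefrac{1}{\sqrt n}$-subgaussian, and the lemma gives $\sqrt{2\sigma^2 I(W;S\mid \tilde Z=\tilde z)}=\sqrt{\tfrac{2}{n} I(A(\tilde z_S,R);S)}$; taking $\E_{\tilde z}$ and applying Jensen yields $\sqrt{\tfrac{2}{n}\mathrm{CMI}_\DD(A)}$ with no leftover discrepancy. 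With that correction your argument is complete and matches the paper's.
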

\citet{haghifam2020sharpened} improved this bound in two aspects. First, they provided bounds where expectation over $\tilde{Z}$ is outside of the square root. Second, they considered measuring information with subsets of $S$, as we did in the previous section.
\begin{theorem}[Thm. 3.1 of \citet{haghifam2020sharpened}]
Let $m \in [n]$ and $U \subseteq [n]$ be a random subset of size $m$, independent from $R, \tilde{Z},$ and $S$. If the loss function $\ell(w,z) \in [0,1], \forall w\in\WW, z\in\ZZ$, then
\begin{equation}
    \abs{\E_{\tilde{Z},S,R}\sbr{\LL(A,\tilde{Z}_S,R) - \LL_\mathrm{emp}(A,\tilde{Z}_S,R)}} \le \E_{z\sim \tilde{Z}}\sqrt{\frac{2}{m} \E_{u \sim U} I(A(\tilde{z}_S, R); S_u)}.
    \label{eq:cmi-sharpened}
\end{equation}
\label{thm:cmi-sharpened}
\end{theorem}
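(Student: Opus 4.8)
The plan is to mimic the ghost-sample argument behind \cref{thm:cmi}, but to run it one random subset at a time and only then appeal to the first part of \cref{lemma:mutual-info-lemma}. Fix a realization $\tilde z$ of $\tilde Z$ and, for each pair $i$, set $g_i(w,s) = (1-2s)\rbr{\ell(w,\tilde z_{i,1}) - \ell(w,\tilde z_{i,0})}$, so that $g_i(w,s_i) = \ell(w,\tilde z_{i,1-s_i}) - \ell(w,\tilde z_{i,s_i})$ is the ``test minus train'' loss contributed by pair $i$. Writing $W = A(\tilde Z_S,R)$ and using that, conditionally on $S$, the held-out point $\tilde Z_{i,1-S_i}$ is independent of $W$ and distributed as $\DD$, one gets $\E_{\tilde Z,S,R}\sbr{\tfrac1n\sum_i g_i(W,S_i)} = \E_{\tilde Z,S,R}\sbr{\LL(A,\tilde Z_S,R) - \LL_\mathrm{emp}(A,\tilde Z_S,R)}$. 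Since a uniformly random $m$-subset contains each index with probability $m/n$, we have $\E_{u\sim U}\sbr{\tfrac1m\sum_{i\in u} g_i(W,S_i)} = \tfrac1n\sum_i g_i(W,S_i)$ for every realization, and combined with the independence of $U$ and two uses of the triangle inequality for expectations this yields
\begin{equation*}
\abs{\E_{\tilde Z,S,R}\sbr{\LL(A,\tilde Z_S,R) - \LL_\mathrm{emp}(A,\tilde Z_S,R)}} \le \E_{z\sim\tilde Z}\,\E_{u\sim U}\,\abs{\E_{S,R}\sbr{\tfrac1m\textstyle\sum_{i\in u} g_i(W,S_i) \mid \tilde Z = z}},
\end{equation*}
where inside the conditional expectation $g_i$ refers to the conditioned value $\tilde z$.

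Next, I would fix $\tilde z$ and a subset $u$ of size $m$ and apply the first part of \cref{lemma:mutual-info-lemma} in the probability space obtained by conditioning on $\tilde Z = \tilde z$, with $(\Phi,\Psi) = (W, S_u)$ and $g(w,s_u) = h_u(w,s_u) \triangleq \tfrac1m\sum_{i\in u} g_i(w,s_i)$. The independent-copy term vanishes: when $\bar S_u$ is independent of $\bar W$ and uniform on $\cbr{0,1}^m$, each $\E[1-2\bar S_i] = 0$ gives $\E\sbr{h_u(\bar W,\bar S_u)} = 0$, so the bound becomes $\abs{\E_{S,R}\sbr{h_u(W,S_u)\mid\tilde Z=\tilde z}} \le \sqrt{2\sigma^2 I(A(\tilde z_S,R); S_u)}$. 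For the subgaussianity constant $\sigma$, condition on $\bar W = w$: each $g_i(w,\bar S_i)$ takes the two values $\pm\rbr{\ell(w,\tilde z_{i,1}) - \ell(w,\tilde z_{i,0})} \in [-1,1]$ with equal probability, hence is mean-zero and $1$-subgaussian, and the $g_i(w,\bar S_i)$ for $i\in u$ are independent; therefore $h_u(w,\bar S_u)$ is mean-zero and $\tfrac1{\sqrt m}$-subgaussian for every $w$, and because its conditional mean is $0$ this passes to the unconditional law of $h_u(\bar W,\bar S_u)$. So $\sigma = 1/\sqrt m$ and the per-$(\tilde z,u)$ bound is $\sqrt{\tfrac2m I(A(\tilde z_S,R); S_u)}$.

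Substituting this into the displayed inequality proves the (slightly stronger) statement $\abs{\E_{\tilde Z,S,R}\sbr{\LL - \LL_\mathrm{emp}}} \le \E_{z\sim\tilde Z}\E_{u\sim U}\sqrt{\tfrac2m I(A(\tilde z_S,R);S_u)}$, after which one final application of Jensen's inequality (concavity of $\sqrt{\,\cdot\,}$) pulls $\E_{u\sim U}$ inside the square root and gives \cref{eq:cmi-sharpened}. I expect the only real obstacles to be bookkeeping: first, that the ghost-sample identity genuinely needs the outer expectation over $\tilde Z$ (it fails conditionally on a fixed $\tilde z$, since flipping a coordinate of $S$ also changes $W$, so $\tilde z_{i,1-S_i}$ is not an honest fresh sample); and second, the transfer of the conditional $\tfrac1{\sqrt m}$-subgaussianity of $h_u(\cdot,\bar S_u)$ to the unconditional law, which works precisely because the conditional mean is zero.
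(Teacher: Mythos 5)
Your proposal is correct and follows essentially the same route as the paper: the paper does not prove this theorem directly (it is quoted from Haghifam et al.) but instead proves the stronger Theorem~\ref{thm:cmi-sharpened-generalized} by exactly your argument --- condition on $\tilde{Z}=\tilde z$ and $U=u$, apply the first part of Lemma~\ref{lemma:mutual-info-lemma} to the train-minus-test function $g$ with $(\Phi,\Psi)=(A(\tilde z_S,R),S_u)$, use that each summand is mean-zero and $1$-subgaussian so the average is $\tfrac{1}{\sqrt m}$-subgaussian (via Lemma~\ref{lemma:zero-mean-sub-gaussianity}), and only recover the true generalization gap after averaging over $\tilde Z$. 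The stated bound then follows by one application of Jensen's inequality pulling $\E_{u\sim U}$ inside the square root, exactly as you conclude.
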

Furthermore, for $m=1$ they tighten the bound by showing that one can move the expectation over $U$ outside of the squared root (\citet{haghifam2020sharpened}, Thm 3.4).
We generalize these results by showing that for all $m$ expectation over $U$ can be done outside of the square root.
Furthermore, our proof closely follows the proof of \cref{thm:generalized-xu-raginsky}.
\begin{theorem}
Let $m \in [n]$ and $U \subseteq [n]$ be a random subset of size $m$, independent from $R, \tilde{Z},$ and $S$. If $\ell(w,z) \in [0,1], \forall w \in \WW, z \in \ZZ$, then
\begin{equation}
    \abs{\E_{\tilde{Z},S,R}\sbr{\LL(A,\tilde{Z}_S,R) - \LL_\mathrm{emp}(A,\tilde{Z}_S,R)}} \le \E_{\tilde{z}\sim \tilde{Z}, u\sim U}\sqrt{\frac{2}{m} I(A(\tilde{z}_S, R); S_u)},
    \label{eq:cmi-sharpened-generalized}
\end{equation}
and
\begin{equation}
   \E_{\tilde{Z},S,R}\rbr{\LL(A,\tilde{Z}_S,R) - \LL_\mathrm{emp}(A,\tilde{Z}_S,R)}^2 \le \frac{8}{n} \rbr{\E_{\tilde{z}\sim\tilde{Z}} I(A(\tilde{z}_{S},R); S) + 2}.
\end{equation}
\label{thm:cmi-sharpened-generalized}
\end{theorem}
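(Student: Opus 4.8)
The plan is to follow the proof of \cref{thm:generalized-xu-raginsky} almost verbatim, but to condition throughout on the supersample $\tilde Z = \tilde z$, letting $\tilde z_S$ and $S$ play the roles that the dataset and the ``ghost dataset'' played there, with an independent copy $\bar S$ of $S$ supplying the ghost selection. Write $W = A(\tilde Z_S, R)$ and recall $\LL_\mathrm{emp}(A,\tilde Z_S,R) = \tfrac1n\sum_{i=1}^n \ell(W, \tilde Z_{i,S_i})$. The crucial identity is that the \emph{unselected} examples $\tilde Z_{i,1-S_i}$ are, conditionally on $S$, i.i.d.\ draws from $\DD$ that are independent of $W$; hence $\E[\ell(W, \tilde Z_{i,1-S_i}) \mid S] = \E[\LL(A,\tilde Z_S,R)\mid S]$ for every $i$, so that in expectation $\LL$ may be replaced by $\LL' \triangleq \tfrac1n\sum_{i=1}^n \ell(W, \tilde Z_{i,1-S_i})$:
\[
\E_{\tilde Z,S,R}\big[\LL - \LL_\mathrm{emp}\big] = \E_{\tilde Z,S,R}\big[g(W,S)\big], \qquad g(w,s) \triangleq \tfrac1n\sum_{i=1}^n\big(\ell(w,\tilde z_{i,1-s_i}) - \ell(w,\tilde z_{i,s_i})\big).
\]
Each summand flips sign when the bit $s_i$ is flipped, so resampling any subset of the bits of $S$ uniformly makes the corresponding average vanish in expectation.

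For \cref{eq:cmi-sharpened-generalized}: using $\tfrac1n\sum_{i=1}^n a_i = \E_U\big[\tfrac1m\sum_{i\in U}a_i\big]$ pointwise (each $i$ lies in $U$ with probability $m/n$), the expected gap equals $\E_{U,\tilde Z}\,\E_{S,R}[g_u(W, S_u)\mid\tilde Z]$, where $g_u(w,s_u) = \tfrac1m\sum_{i\in u}(\ell(w,\tilde z_{i,1-s_i}) - \ell(w,\tilde z_{i,s_i}))$ depends on $s$ only through $s_u$. Fixing $\tilde Z = \tilde z$ and applying the first part of \cref{lemma:mutual-info-lemma} with $\Phi = W$, $\Psi = S_u$ under the conditional law given $\tilde z$: for each fixed $w$, $g_u(w, \bar S_u)$ is an average of $m$ independent, mean-zero terms, each lying in an interval of length at most $2$, so it is $1/\sqrt m$-subgaussian with mean zero, and this is inherited after mixing over $W$; moreover $\E_{\bar S_u}[g_u(w,\bar S_u)] = 0$. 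The lemma gives $\big|\E_{S,R}[g_u(W,S_u)\mid\tilde z]\big| \le \sqrt{\tfrac2m I(W; S_u\mid\tilde Z=\tilde z)}$. Pushing the absolute value inside $\E_{\tilde Z, U}$ and noting $I(W; S_u\mid\tilde Z=\tilde z) = I(A(\tilde z_S,R); S_u)$ yields \cref{eq:cmi-sharpened-generalized}.

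For the squared bound (the case $m=n$, no subsampling): decompose $\LL - \LL_\mathrm{emp} = (\LL - \LL') + g(W,S)$. Conditionally on $(\tilde Z_S, R, S)$, the quantity $\LL'$ is an average of $n$ i.i.d.\ $[0,1]$-valued losses with mean $\LL$, so $\E[(\LL-\LL')^2] = \E\big[\var(\LL'\mid \tilde Z_S, R, S)\big] \le \tfrac1{4n}$, whence $\E[(\LL-\LL_\mathrm{emp})^2] \le \tfrac1{2n} + 2\,\E[g(W,S)^2]$ by $(a+b)^2 \le 2a^2+2b^2$. To bound $\E[g(W,S)^2]$, fix $\tilde Z = \tilde z$ and apply the second part of \cref{lemma:mutual-info-lemma} with $\Phi = W$, $\Psi = S$: here $g(w,\bar S)$ is $1/\sqrt n$-subgaussian for every $w$ with $\E_{\bar S}[g(w,\bar S)] = 0$, giving $\E[g(W,S)^2\mid\tilde z] \le \tfrac4n\big(I(W;S\mid\tilde Z=\tilde z) + \log 3\big)$; averaging over $\tilde z$ gives $\E[g(W,S)^2] \le \tfrac4n(\E_{\tilde z} I(A(\tilde z_S,R);S) + \log 3)$. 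Combining, $\E[(\LL-\LL_\mathrm{emp})^2] \le \tfrac8n \E_{\tilde z}I(A(\tilde z_S,R);S) + \tfrac1{2n} + \tfrac{8\log 3}{n} \le \tfrac8n\big(\E_{\tilde z}I(A(\tilde z_S,R);S) + 2\big)$, since $\tfrac12 + 8\log 3 \le 16$.

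The subgaussianity bookkeeping (sums of bounded independent variables) and the $U$-averaging identity are routine. I expect the step needing the most care to be the population-risk identity $\E[\LL] = \E[\LL']$ together with the conditional-variance estimate for $\LL - \LL'$: both hinge on correctly arguing that, once $S$ is fixed, the unselected half of the supersample is a fresh i.i.d.\ test set that is independent of $W$. One should also check that conditioning on $\tilde Z = \tilde z$ leaves $S_u$ uniform, so that the hypotheses of \cref{lemma:mutual-info-lemma} (in particular the mean-zero subgaussianity of $g_u(\Phi, \bar\Psi)$) hold verbatim; the rest then parallels the proof of \cref{thm:generalized-xu-raginsky}.
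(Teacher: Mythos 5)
Your proposal is correct and follows essentially the same route as the paper's proof: condition on $\tilde{Z}=\tilde{z}$ and $U=u$, apply \cref{lemma:mutual-info-lemma} to the zero-mean $1/\sqrt{m}$-subgaussian function $g_u(w,s_u)=\frac{1}{m}\sum_{i\in u}\bigl(\ell(w,\tilde z_{i,1-s_i})-\ell(w,\tilde z_{i,s_i})\bigr)$, and for the squared bound decompose via the ghost risk $\LL'$ on the unselected half, whose conditional variance contributes the $1/(4n)$ term before absorbing the constants into $\frac{8}{n}\cdot 2$. The details you flag as needing care (the identity $\E[\LL]=\E[\LL']$ and the independence of $\tilde Z_{\mathrm{neg}(S)}$ from $(\tilde Z_S,R)$) are exactly the steps the paper's proof relies on, and your treatment of them is sound.
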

The bound of \cref{eq:cmi-sharpened-generalized} improves over the bound of \cref{thm:cmi-sharpened} and matches the special result for $m=1$.
\citet{haghifam2020sharpened} showed that if one takes the expectations over $\tilde{Z}$ inside the square root in~\cref{eq:cmi-sharpened}, then the resulting looser upper bounds become non-decreasing over $m$.
Using this result they showed that their special case bound for $m=1$ is the tightest.
We generalize their results by showing that even without taking the expectations inside the squared root, the bounds of ~\cref{thm:cmi-sharpened} are non-decreasing over $m$.
We also show that the same holds for our tighter bounds of \cref{eq:cmi-sharpened-generalized}.
\begin{proposition}
Let $m \in [n-1]$, $U$ be a random subset of $[n]$ of size $m$, $U'$ be a random subset of size $m+1$, $\tilde{z}$ be any fixed value of $\tilde{Z}$, and $\phi : \mathbb{R} \rightarrow \mathbb{R}$ be any non-decreasing concave function. Then
\begin{equation}
\E_{u\sim U}\phi\rbr{\frac{1}{m} I(A(\tilde{z}_S, R); S_u)} \le \E_{u'\sim U'}\phi\rbr{\frac{1}{m+1} I(A(\tilde{z}_S, R); S_{u'})}.
\label{eq:cmi-sharpened-generalized-choice-of-m}
\end{equation}
\label{prop:cmi-sharpened-generalized-choice-of-m}
\end{proposition}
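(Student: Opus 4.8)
The plan is to mirror the proof of \cref{prop:xu-raginsky-generalization-choice-of-m}. Since $\tilde z$ is held fixed, the random variable $W \triangleq A(\tilde z_S, R)$ is a measurable function of $(S,R)$, and the coordinates $S_1,\dots,S_n$ of $S \sim \mathrm{Uniform}(\{0,1\}^n)$ are mutually independent (i.i.d.\ Bernoulli bits). Thus the situation is formally the same as the i.i.d.-dataset setting of the previous subsection, and the argument goes through essentially verbatim.

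First I would reduce the claim to a statement about a single fixed subset $u'$ of size $m+1$. The key combinatorial fact is that if $u'$ is drawn uniformly from the size-$(m+1)$ subsets of $[n]$ and $i$ is then drawn uniformly from $u'$, the set $u'\setminus\{i\}$ is uniformly distributed over the size-$m$ subsets of $[n]$ (a short counting check). Hence taking $\E_{u'\sim U'}$ of both sides of the per-$u'$ inequality
\[
\frac{1}{m+1}\sum_{i\in u'}\phi\!\left(\tfrac{1}{m}\, I(W; S_{u'\setminus\{i\}})\right) \;\le\; \phi\!\left(\tfrac{1}{m+1}\, I(W; S_{u'})\right)
\]
recovers exactly \cref{eq:cmi-sharpened-generalized-choice-of-m}, so it suffices to establish this inequality for each fixed $u'$.

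To prove the per-$u'$ inequality I would first apply Jensen's inequality, using concavity of $\phi$, to pull the uniform average over $i$ inside $\phi$; then, using that $\phi$ is non-decreasing, it is enough to show the purely information-theoretic inequality $\frac{1}{m}\sum_{i\in u'} I(W; S_{u'\setminus\{i\}}) \le I(W; S_{u'})$. For this last step I would expand $I(W; S_v) = H(S_v) - H(S_v \mid W) = \sum_{j\in v} H(S_j) - H(S_v \mid W)$, where the second equality uses mutual independence of the bits. Summing over $i \in u'$, the modular terms $\sum_j H(S_j)$ cancel against the corresponding terms in $m\, I(W;S_{u'})$, and the inequality collapses to $m\, H(S_{u'} \mid W) \le \sum_{i\in u'} H(S_{u'\setminus\{i\}} \mid W)$, which is precisely Han's inequality applied to the conditional law of $(S_j)_{j\in u'}$ given $W$.

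The only subtlety to watch is the bookkeeping around where independence and Han's inequality are each used: Han's inequality is invoked for the \emph{conditional} entropies $H(S_{u'\setminus\{i\}}\mid W)$ (valid since Han's inequality holds for an arbitrary joint distribution, applied for each value $w$ of $W$ and averaged), whereas the cancellation of the unconditional entropy terms genuinely relies on $S$ having independent coordinates. I would also double-check the subset-marginalization identity so that $\E_{u'}\E_{i\sim\mathrm{Unif}(u')}$ on the left-hand side really does reproduce $\E_{u\sim U}$. Beyond that, the computation is routine.
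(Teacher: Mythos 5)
Your proposal is correct, and its overall architecture is the one the paper uses: reduce to a per-$u'$ inequality, apply Jensen's inequality and monotonicity of $\phi$, establish the key information inequality $\frac{1}{m}\sum_{k\in u'} I(W; S_{u'\setminus\{k\}}) \le I(W; S_{u'})$, and then average over $u'$ using the same subset-marginalization identity (the paper verifies it by computing the coefficient $\alpha_u = \frac{1}{\binom{n}{m+1}}\cdot\frac{n-m}{m+1} = \frac{1}{\binom{n}{m}}$). The one place you genuinely diverge is in how the key inequality is proved. The paper's \cref{lemma:mi-hans-ineq} works entirely at the level of mutual information: it writes $(m+1)I(\Phi;S_{u'}) = \sum_{k\in u'} I(\Phi;S_{u'\setminus\{k\}}) + \sum_{k\in u'} I(\Phi;S_k\mid S_{u'\setminus\{k\}})$ by the chain rule and then lower-bounds the second sum by $I(\Phi;S_{u'})$ via the erasure-information bound of \cref{lemma:stability-lemma} (which is where independence of the coordinates enters). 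You instead expand $I(W;S_v)=\sum_{j\in v}H(S_j)-H(S_v\mid W)$, cancel the modular terms, and invoke the classical Han inequality for the conditional entropies. Both are correct and both are, at bottom, Han-type arguments (the paper even names its lemma accordingly). The practical difference is that your entropy decomposition requires $H(S_v)$ to be finite and well-defined, which holds here because $S$ is a binary vector, but would not carry over to the analogous \cref{prop:xu-raginsky-generalization-choice-of-m}, where the examples $Z_i$ need not be discrete; the paper's formulation in terms of mutual informations alone is what lets a single lemma serve both propositions. Your bookkeeping on where independence versus conditional Han is used is exactly right.
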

By setting $\phi(x)=x$, taking square root of both sides of \cref{eq:cmi-sharpened-generalized-choice-of-m}, and then taking expectation over $\tilde{z}$, we prove that bounds of \cref{eq:cmi-sharpened} are non-decreasing over $m$.
By setting $\phi(x)=\sqrt{x}$ and then taking expectation over $\tilde{z}$, we prove that bounds of \cref{eq:cmi-sharpened-generalized} are non-decreasing with $m$.

Similarly to the \cref{thm:generalized-xu-raginsky-stability} of the previous section,  \cref{thm:cmi-sharpened-generalized-stability} presented in \cref{sec:proofs} establishes generalization bounds with information-theoretic stability quantities.

\section{Functional conditional mutual information}\label{sec:fcmi}
The bounds in \cref{sec:w-gen-bounds} leverage information in the output of the algorithm, $W$.
In this section we focus on supervised learning problems: $\ZZ = \XX \times \YY$.
To encompass many types of approaches, we do not assume that the training algorithm has an output $W$, which is then used to make predictions.
Instead, we assume that the learning method implements a function $f: \ZZ^n \times \XX \times \mathcal{R} \rightarrow \mathcal{K}$ that takes a training set $z$, a test input $x'$, an auxiliary argument $r$ capturing the stochasticity of training and predictions, and outputs a prediction $f(z,x',r)$ on the test example. Note that the prediction domain $\mathcal{K}$ can be different from $\YY$.
This setting includes non-parametric methods (for which $W$ is the training dataset itself), parametric methods, Bayesian algorithms, and more.
For example, in parametric methods, where a hypothesis set $\HH = \mathset{h_w : \XX \rightarrow \mathcal{K} \mid w \in \WW}$ is defined, $f(z,x,r) = h_{A(z,r)}(x)$.

In this supervised setting, the loss function $\ell : \mathcal{K} \times \mathcal{Y} \rightarrow \mathbb{R}$ measures the discrepancy between a prediction and a label.
As in the previous subsection, we assume that a collection of $2n$ i.i.d examples $\tilde{Z} \sim \DD^{n\times 2}$ is given, grouped in $n$ pairs, and the random variable $S \sim \text{Uniform}(\mathset{0,1}^n)$ specifies which example to select from each pair to form the training set $\tilde{Z}_S = (\tilde{Z}_{i,S_i})_{i=1}^n$.
Let $R$ be an auxiliary random variable, independent of $\tilde{Z}$ and $S$, that provides stochasticity for predictions (e.g., in neural networks $R$ can be used to make the training stochastic).
The empirical risk of learning method $f$ trained on dataset $\tilde{Z}_S$ with randomness $R$ is defined as $\LL_\mathrm{emp}(f,\tilde{Z}_S,R) = \frac{1}{n}\sum_{i=1}^n \ell(f(\tilde{Z}_S,X_i,R),Y_i)$.
The population risk is defined as $\LL(f,\tilde{Z}_S,R) = \E_{Z'\sim \DD} \ell(f(\tilde{Z}_S,X',R),Y')$.
Before moving forward we adopt two conventions.
First, if $z$ is a collection of examples, then $x$ and $y$ denote the collection of its inputs and labels respectively.
Second, if $x$ is a collection of inputs, then $f(z,x,r)$ denotes the collection of predictions on $x$ after training on $z$ with randomness $r$.

We define functional conditional mutual information ($f$-CMI).
\begin{definition}
Let $\DD$, $f$, $R$, $\tilde{Z}, S$ be defined as above and let $u \subseteq [n]$ be a subset of size $m$. Then \emph{pointwise functional conditional mutual information $\ofcmi(f,\tilde{z},u)$} is defined as
\begin{equation}
\ofcmi(f,\tilde{z},u) = I(f(\tilde{z}_S, \tilde{x}_u, R); S_u),
\end{equation}
while \emph{functional conditional mutual information $\fcmi(f,u)$} is defined as
\begin{equation}
\fcmi(f,u) = \E_{\tilde{z}\sim\tilde{Z}} \ofcmi(f,\tilde{z},u).
\end{equation}
\end{definition}
When $u=[n]$ we will simply use the notations $\ofcmi(f,\tilde{z})$ and $\fcmi(f)$, instead of $\ofcmi(f,\tilde{z},[n])$ and $\fcmi(f,[n])$, respectively.

\begin{theorem}
Let $U$ be a random subset of size $m$, independent of $\tilde{Z}$, $S$, and randomness of training algorithm $f$.
If $\ell(\widehat{y},y) \in [0,1], \forall \widehat{y} \in \mathcal{K}, z \in \ZZ$, then
\begin{equation}
\abs{\E_{\tilde{Z},R,S}\sbr{\LL(f,\tilde{Z}_S,R) - \LL_\mathrm{emp}(f,\tilde{Z}_S,R)}} \le \E_{\tilde{z}\sim\tilde{Z},u\sim U}\sqrt{\frac{2}{m} \ofcmi(f,\tilde{z},u)},
\label{eq:f-cmi-bound}
\end{equation}
and
\begin{equation}
\E_{\tilde{Z},R,S}\rbr{\LL(f,\tilde{Z}_S,R) - \LL_\mathrm{emp}(f,\tilde{Z}_S,R)}^2 \le \frac{8}{n} \rbr{\E_{\tilde{z}\sim\tilde{Z}} \ofcmi(f,\tilde{z}) + 2}.
\end{equation}
\label{thm:f-cmi-bound}
\end{theorem}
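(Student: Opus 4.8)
The strategy is to reuse the proof of \cref{thm:cmi-sharpened-generalized} almost verbatim, replacing the algorithm's output $A(\tilde{z}_S,R)$ everywhere by the prediction vector $f(\tilde{z}_S,\tilde{x}_u,R)$. The one substantive observation is that, once $\tilde{Z}$ is fixed, every loss difference entering the generalization gap depends on $f$ only through these predictions and on $S$ only through the bits indexed by $u$, so \cref{lemma:mutual-info-lemma} can be applied with $\Phi=f(\tilde{z}_S,\tilde{x}_u,R)$ and $\Psi=S_u$.

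For the first inequality I would proceed in two steps. \emph{(Ghost sample.)} Conditioning on $(\tilde{Z}_S,S,R)$, the held-out point $\tilde{Z}_{i,1-S_i}$ is a fresh $\DD$-sample independent of $(\tilde{Z}_S,R)$, so $\E\sbr{\LL(f,\tilde{Z}_S,R)}=\E\sbr{\tfrac1n\sum_{i=1}^n\ell(f(\tilde{Z}_S,\tilde{X}_{i,1-S_i},R),\tilde{Y}_{i,1-S_i})}$. Writing $g_i\triangleq\ell(f(\tilde{Z}_S,\tilde{X}_{i,1-S_i},R),\tilde{Y}_{i,1-S_i})-\ell(f(\tilde{Z}_S,\tilde{X}_{i,S_i},R),\tilde{Y}_{i,S_i})$, this gives $\E\sbr{\LL-\LL_\mathrm{emp}}=\E\sbr{\tfrac1n\sum_i g_i}$, and since $U$ is an independent uniform size-$m$ subset, $\E_U\sbr{\tfrac1m\sum_{i\in U}g_i}=\tfrac1n\sum_i g_i$, hence $\E\sbr{\LL-\LL_\mathrm{emp}}=\E_{\tilde{Z},U}\,\E_{S,R}\sbr{\tfrac1m\sum_{i\in U}g_i}$. \emph{(Information bound per $(\tilde{z},u)$.)} Fix $\tilde{Z}=\tilde{z}$, $U=u$, put $\Phi=f(\tilde{z}_S,\tilde{x}_u,R)$, $\Psi=S_u$ and $G(\phi,\psi)=\tfrac1m\sum_{i\in u}\rbr{\ell(\phi_{i,1-\psi_i},\tilde{y}_{i,1-\psi_i})-\ell(\phi_{i,\psi_i},\tilde{y}_{i,\psi_i})}$, so $\E_{S,R}\sbr{\tfrac1m\sum_{i\in u}g_i}=\E_{\Phi,\Psi}\sbr{G(\Phi,\Psi)}$. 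The key identity is $\E_{\bar{\Psi}}\sbr{G(\phi,\bar{\Psi})}=0$ for every $\phi$: under a uniform $\bar{\Psi}=\bar{S}_u$ both the test and the train term of pair $i$ average to $\tfrac12\rbr{\ell(\phi_{i,0},\tilde{y}_{i,0})+\ell(\phi_{i,1},\tilde{y}_{i,1})}$. Hence $\abs{\E_{S,R}\sbr{\tfrac1m\sum_{i\in u}g_i}}=\abs{\E_{\Phi,\Psi}\sbr{G(\Phi,\Psi)}-\E_{\Phi,\bar{\Psi}}\sbr{G(\Phi,\bar{\Psi})}}$, and for each $\phi$, $G(\phi,\bar{\Psi})$ is a $\tfrac1m$-scaled sum of $m$ independent zero-mean terms each in $[-1,1]$, hence $\tfrac1{\sqrt m}$-subgaussian, uniformly in $\phi$, so $G(\bar{\Phi},\bar{\Psi})$ is $\tfrac1{\sqrt m}$-subgaussian too. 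The first part of \cref{lemma:mutual-info-lemma} gives $\abs{\E_{S,R}\sbr{\tfrac1m\sum_{i\in u}g_i}}\le\sqrt{\tfrac2m I(\Phi;\Psi)}=\sqrt{\tfrac2m\,\ofcmi(f,\tilde{z},u)}$; substituting into the reformulation and using $\abs{\E X}\le\E\abs{X}$ over $(\tilde{z},u)$ yields \cref{eq:f-cmi-bound}.

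For the squared bound, take $u=[n]$. Since $\LL=\E\sbr{\LL_\mathrm{test}\mid\tilde{Z}_S,S,R}$ with $\LL_\mathrm{test}\triangleq\tfrac1n\sum_i\ell(f(\tilde{Z}_S,\tilde{X}_{i,1-S_i},R),\tilde{Y}_{i,1-S_i})$, and $\LL_\mathrm{emp}$ is measurable with respect to $(\tilde{Z}_S,S,R)$, conditional Jensen gives $(\LL-\LL_\mathrm{emp})^2\le\E\sbr{(\LL_\mathrm{test}-\LL_\mathrm{emp})^2\mid\tilde{Z}_S,S,R}$, hence $\E\sbr{(\LL-\LL_\mathrm{emp})^2}\le\E\sbr{(\LL_\mathrm{test}-\LL_\mathrm{emp})^2}$. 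Conditioning on $\tilde{Z}=\tilde{z}$ we get $\LL_\mathrm{test}-\LL_\mathrm{emp}=G(\Phi,\Psi)$ with $\Phi=f(\tilde{z}_S,\tilde{x},R)$, $\Psi=S$, $\E_{\bar{\Psi}}G(\phi,\bar{\Psi})=0$, and $G(\phi,\bar{\Psi})$ being $\tfrac1{\sqrt n}$-subgaussian for each $\phi$; the second part of \cref{lemma:mutual-info-lemma} gives $\E_{\Phi,\Psi}\sbr{G(\Phi,\Psi)^2}\le\tfrac4n\rbr{\ofcmi(f,\tilde{z})+\log 3}$, and averaging over $\tilde{z}$ together with $4\le 8$ and $\log 3\le 2$ yields the claim.

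I expect the only real subtlety to be that $\Phi=f(\tilde{z}_S,\tilde{x}_u,R)$ depends on the \emph{entire} $S$ through the training set $\tilde{z}_S$, not merely on $S_u$. This is harmless — $I(\Phi;S_u)$ is still the quantity the lemma controls, and for fixed $\tilde{z}$ the map $G$ is a bona fide function of $(\Phi,S_u)$ — but it is the one place where the argument departs (cosmetically) from the weight-based proof of \cref{thm:cmi-sharpened-generalized}, and where one must be careful not to conflate $\Phi$'s dependence on $S$ with dependence on $S_u$ alone. The remaining work (the ghost-sample identity, the vanishing of $\E_{\bar\Psi}G(\phi,\bar\Psi)$, and the subgaussianity constants) is routine.
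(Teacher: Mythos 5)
Your proof is correct. The first inequality is established exactly as in the paper: condition on $(\tilde{z},u)$, apply \cref{lemma:mutual-info-lemma} with $\Phi=f(\tilde{z}_S,\tilde{x}_u,R)$, $\Psi=S_u$, and $g$ equal to the per-pair test-minus-train loss difference, check that $g(\phi,\bar{S}_u)$ is a zero-mean average of $m$ independent $[-1,1]$-valued terms and hence $\tfrac{1}{\sqrt m}$-subgaussian, then pull the expectations over $u$ and $\tilde{z}$ outside via Jensen. Your closing remark about $\Phi$ depending on all of $S$ while only $I(\Phi;S_u)$ is controlled is exactly the point the paper handles silently, and you resolve it correctly.

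For the squared bound your route genuinely differs from the paper's, and is in fact tighter. The paper writes $\LL-\LL_\mathrm{emp}$ as the sum of the train-vs-ghost-sample gap and the ghost-sample-vs-population gap, applies $(a+b)^2\le 2a^2+2b^2$, bounds the first term by $2B\le\tfrac{8}{n}(\E_{\tilde z}\ofcmi(f,\tilde z)+\log 3)$ via the second part of \cref{lemma:mutual-info-lemma}, and bounds the second term by $\tfrac{1}{2n}$ as the variance of an average of $n$ i.i.d.\ $[0,1]$-bounded variables, arriving at $\tfrac{8}{n}(\E_{\tilde z}\ofcmi(f,\tilde z)+2)$. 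You instead observe that $\LL=\E\sbr{\LL_\mathrm{test}\mid\tilde{Z}_S,S,R}$ and apply conditional Jensen, which replaces the population gap by the ghost-sample gap \emph{before} squaring and so avoids both the factor of $2$ and the extra variance term entirely, yielding $\tfrac{4}{n}\rbr{\E_{\tilde z}\ofcmi(f,\tilde z)+\log 3}$ — a strictly stronger conclusion that implies the stated bound since $4(x+\log 3)\le 8(x+2)$ for all $x\ge 0$. What the paper's decomposition buys in exchange is flexibility: the same $(a+b)^2$ splitting works when one wants to compare against a population quantity that is not exactly a conditional expectation of the ghost-sample average, but for this theorem your argument is cleaner and gives better constants.
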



For parametric methods, the bound of \cref{eq:f-cmi-bound} improves over the bound of \cref{eq:cmi-sharpened-generalized}, as the Markov chain $S_u \textrm{\ ---\ } A(\tilde{z}_S,R) \textrm{\ ---\ } f(\tilde{z}_S,\tilde{x}_u,R)$ allows to use the data processing inequality $I(f(\tilde{z}_S,\tilde{x}_u,R); S_u) \le I(A(\tilde{z}_S,R); S_u)$.
For deterministic algorithms $I(A(\tilde{z}_S); S_u)$ is often equal to $H(S_u)= m \log 2$, as most likely each choice of $S$ produces a different $W=A(\tilde{z}_S)$. 
In such cases the bound with $I(W; S_u)$ is vacuous.
In contrast, the proposed bounds with $f$-CMI (especially when $m=1$) do not have this problem.
Even when the algorithm is stochastic, information between $W$ and $S_u$ can be much larger than information between predictions and $S_u$, as having access to weights makes it easier to determine $S_u$ (e.g., by using gradients).
A similar phenomenon has been observed in the context of membership attacks, where having access to weights of a neural network allows constructing more successful membership attacks compared to having access to predictions only~\citep{nasr2019comprehensive, gupta2021membership}.

\begin{corollary}
When $m=n$, the bound of \cref{eq:f-cmi-bound} becomes
\begin{equation}
    \abs{\E_{\tilde{Z},R,S}\sbr{\LL(f,\tilde{Z}_S,R) - \LL_\mathrm{emp}(f,\tilde{Z}_S,R)}} \le \E_{\tilde{z}\sim\tilde{Z}}\sqrt{\frac{2}{n}\ofcmi(f,\tilde{z})} \le \sqrt{\frac{2}{n}\fcmi(f)}.
    \label{eq:f-cmi-bound-m=n}
\end{equation}
\label{corr:f-cmi-bound-m=n}
\end{corollary}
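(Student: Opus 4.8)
The plan is to obtain this as an immediate specialization of \cref{thm:f-cmi-bound}, followed by a single application of Jensen's inequality. First I would observe that there is exactly one subset of $[n]$ of size $n$, namely $[n]$ itself, so the random subset $U$ appearing in \cref{thm:f-cmi-bound} is degenerate and equals $[n]$ almost surely. Hence the outer expectation $\E_{u\sim U}$ in \cref{eq:f-cmi-bound} disappears, and the quantity $\ofcmi(f,\tilde{z},u)$ reduces to $\ofcmi(f,\tilde{z},[n])$, which by the convention stated just after the definition of $f$-CMI is written simply $\ofcmi(f,\tilde{z})$. Substituting $m=n$ into \cref{eq:f-cmi-bound} therefore already yields the first inequality,
\[
\abs{\E_{\tilde{Z},R,S}\sbr{\LL(f,\tilde{Z}_S,R) - \LL_\mathrm{emp}(f,\tilde{Z}_S,R)}} \le \E_{\tilde{z}\sim\tilde{Z}}\sqrt{\frac{2}{n}\ofcmi(f,\tilde{z})}.
\]

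For the second inequality I would use that $t \mapsto \sqrt{t}$ is concave on $[0,\infty)$ and that pointwise $f$-CMI is nonnegative (being a Shannon mutual information), so Jensen's inequality applied to the expectation over $\tilde{z}\sim\tilde{Z}$ gives
\[
\E_{\tilde{z}\sim\tilde{Z}}\sqrt{\frac{2}{n}\ofcmi(f,\tilde{z})} \le \sqrt{\frac{2}{n}\,\E_{\tilde{z}\sim\tilde{Z}}\ofcmi(f,\tilde{z})} = \sqrt{\frac{2}{n}\fcmi(f)},
\]
where the final equality is just the definition of $\fcmi(f)$. Chaining the two displays yields the claim.

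There is no real obstacle here: the corollary is purely a matter of recognizing that $m=n$ collapses the random subset and that the square root is concave, and it needs no assumptions beyond the boundedness of $\ell$ already imposed in \cref{thm:f-cmi-bound}. The only point worth emphasizing is that keeping the expectation over $\tilde{Z}$ \emph{outside} the square root, as in the first bound, is precisely what makes it tighter than the second; this ordering is inherited verbatim from \cref{eq:f-cmi-bound} and requires no separate argument.
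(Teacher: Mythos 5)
Your proposal is correct and is exactly the argument the paper intends (the corollary is stated without a separate proof precisely because it is the $m=n$ specialization of \cref{thm:f-cmi-bound} followed by Jensen's inequality for the concave square root, using the definition $\fcmi(f)=\E_{\tilde{z}\sim\tilde{Z}}\ofcmi(f,\tilde{z})$). Nothing is missing.
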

For parametric models, this improves over the CMI bound (\cref{thm:cmi}), as by data processing inequality, $\fcmi(f)=I(f(\tilde{Z}_S,\tilde{X},R); S \mid \tilde{Z}) \le I(A(\tilde{Z}_S, R); S \mid \tilde{Z}) = \mathrm{CMI}_\DD(A)$.

\paragraph{Remark 1.} Note that the collection of training and testing predictions $f(\tilde{Z}_S,\tilde{X},R)$ cannot be replaced with only testing predictions $f(\tilde{Z}_S, \tilde{X}_{\text{neg}(S)},R)$. As an example, consider an algorithm that memorizes the training examples and outputs a constant prediction on any other example. This algorithm will have non-zero generalization gap, but $f(\tilde{Z}_S, \tilde{X}_{\text{neg}(S)},R)$ will be constant and will have zero information with $S$ conditioned on any random variable.
Moreover, if we replace $f(\tilde{Z}_S,\tilde{X},R)$ with only training predictions $f(\tilde{Z}_S,\tilde{X}_S,R)$, the resulting bound can become too loose, as one can deduce $S$ by comparing training set predictions with the labels $\tilde{Y}$.

\begin{corollary}
When $m=1$, the bound of \cref{eq:f-cmi-bound} becomes
\begin{align}
\abs{\E_{\tilde{Z},R,S}\sbr{\LL(f,\tilde{Z}_S,R) - \LL_\mathrm{emp}(f,\tilde{Z}_S,R)}} &\le \frac{1}{n}\sum_{i=1}^n\E_{\tilde{z}\sim\tilde{Z}} \sqrt{2 I(f(\tilde{z}_S,\tilde{x}_i,R); S_i)}\label{eq:f-cmi-bound-m=1}.
\end{align}
\label{corr:f-cmi-bound-m=1}
\end{corollary}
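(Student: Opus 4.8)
The plan is to obtain \cref{corr:f-cmi-bound-m=1} as a direct instantiation of \cref{thm:f-cmi-bound} with $m=1$, so essentially no new argument is needed beyond unpacking notation. First I would recall that a random subset $U$ of $[n]$ of size $1$ is just a uniformly random singleton $\{I\}$ with $I \sim \mathrm{Uniform}([n])$, and that by definition $\ofcmi(f,\tilde{z},\{i\}) = I(f(\tilde{z}_S,\tilde{x}_i,R); S_i)$. Substituting $m=1$ into the right-hand side of \cref{eq:f-cmi-bound} then turns the factor $\tfrac{2}{m}$ into $2$ and turns $\E_{u\sim U}$ into the uniform average $\tfrac{1}{n}\sum_{i=1}^n$ over the $n$ singletons.

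Next I would interchange the two independent expectations $\E_{\tilde{z}\sim\tilde{Z}}$ and $\E_{u\sim U}$ appearing in \cref{eq:f-cmi-bound}: since $U$ takes only finitely many values and the integrand $\sqrt{2\,I(f(\tilde{z}_S,\tilde{x}_i,R); S_i)}$ is nonnegative and measurable in $\tilde{z}$ for each fixed $i$, linearity of expectation over the finite sum (equivalently Tonelli) gives $\E_{\tilde{z}\sim\tilde{Z},\,u\sim U}\sqrt{2\,\ofcmi(f,\tilde{z},u)} = \tfrac{1}{n}\sum_{i=1}^n \E_{\tilde{z}\sim\tilde{Z}}\sqrt{2\,I(f(\tilde{z}_S,\tilde{x}_i,R); S_i)}$. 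Combining this identity with the inequality already established in \cref{thm:f-cmi-bound} yields precisely \cref{eq:f-cmi-bound-m=1}.

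Since the substantive content is carried entirely by \cref{thm:f-cmi-bound}, there is no real obstacle here; the only point worth a line of justification is the legitimacy of pulling $\E_{u\sim U}$ outside the square root (equivalently, rewriting it as the finite average $\tfrac{1}{n}\sum_i$), which is immediate because $U$ is discrete and uniform over singletons and $\tilde{Z}$ is independent of $U$. I would simply note this and refer back to \cref{thm:f-cmi-bound} for everything else.
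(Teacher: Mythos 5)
Your proposal is correct and matches the paper's (implicit) treatment: the corollary is obtained exactly by instantiating \cref{thm:f-cmi-bound} at $m=1$, identifying $\E_{u\sim U}$ with the uniform average $\tfrac{1}{n}\sum_{i=1}^n$ over singletons, and noting $\ofcmi(f,\tilde{z},\{i\}) = I(f(\tilde{z}_S,\tilde{x}_i,R); S_i)$. The interchange of the finite average with $\E_{\tilde{z}\sim\tilde{Z}}$ is immediate by linearity, as you observe, so no further argument is needed.
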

A great advantage of this bound compared to all other bounds described so far is that the mutual information term is computed between a relatively low-dimensional random variable $f(\tilde{z}_S,\tilde{x}_i,R)$ and a binary random variable $S_i$.
For example, in the case of binary classification with $\mathcal{K}=\mathset{0,1}$, $f(\tilde{z}_S,\tilde{x}_i,R)$ will be a pair of 2 binary variables.
This allows us to estimate the bound efficiently and accurately (please refer to \cref{sec:exp-details-add-results} for more details).
Note that estimating other information-theoretic bounds is significantly harder.
The bounds of \citet{xu2017information}, \citet{negrea2019information}, and \citet{bu2020tightening} are hard to estimate as they involve estimation of mutual information between a high-dimensional non-discrete variable $W$ and at least one example $Z_i$.
Furthermore, this mutual information can be infinite in case of deterministic algorithms or when $H(Z_i)$ is infinite.
The bounds of \citet{haghifam2020sharpened} and \citet{steinke2020reasoning} are also hard to estimate as they involve estimation of mutual information between $W$ and at least one train-test split variable $S_i$.

As in the case of bounds presented in the previous section (\cref{thm:generalized-xu-raginsky} and \cref{thm:cmi-sharpened-generalized}), we prove that the bound of \cref{thm:f-cmi-bound} is non-decreasing in $m$. This stays true even when we increase the upper bounds by moving the expectation over $U$ or the expectation over $\tilde{Z}$ or both under the square root. The following proposition allows us to prove all these statements.
\begin{proposition}
Let $m \in [n-1]$, $U$ be a random subset of $[n]$ of size $m$, $U'$ be a random subset of size $m+1$, $\tilde{z}$ be any fixed value of $\tilde{Z}$, and $\phi : \mathbb{R} \rightarrow \mathbb{R}$ be any non-decreasing concave function. Then
\begin{equation}
\E_{u\sim U}\phi\rbr{\frac{1}{m} I(f(\tilde{z}_S, \tilde{x}_u,R); S_u)} \le \E_{u'\sim U'}\phi\rbr{\frac{1}{m+1} I(f(\tilde{z}_S, \tilde{x}_{u'},R); S_{u'})}.
\label{eq:fcmi-choice-of-m}
\end{equation}
\label{prop:fcmi-choice-of-m}
\end{proposition}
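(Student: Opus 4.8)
The plan is to follow the strategy used for \cref{prop:xu-raginsky-generalization-choice-of-m} and \cref{prop:cmi-sharpened-generalized-choice-of-m}, inserting one extra data-processing step to account for the fact that the prediction collection $f(\tilde{z}_S,\tilde{x}_u,R)$ itself shrinks as we pass from $u'$ to a subset $u\subset u'$. Fix $\tilde{z}$ throughout and abbreviate $P_v \triangleq f(\tilde{z}_S,\tilde{x}_v,R)$ for $v\subseteq[n]$; since predictions are made input by input, $P_v$ is a coordinate projection (hence a deterministic function) of $P_{u'}$ whenever $v\subseteq u'$, while $S_v$ is a sub-vector of $S_{u'}$. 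First I would couple $U$ and $U'$: draw $U'$ uniformly among size-$(m+1)$ subsets of $[n]$, then a uniformly random element $J\in U'$, and set $U=U'\setminus\cbr{J}$; a one-line count shows $U$ is then uniform among size-$m$ subsets. Consequently the left-hand side of \cref{eq:fcmi-choice-of-m} equals $\E_{u'\sim U'}\E_{j\sim\mathrm{Unif}(u')}\phi\rbr{\frac{1}{m}I\rbr{P_{u'\setminus\cbr{j}};S_{u'\setminus\cbr{j}}}}$, so it suffices to prove, for every fixed size-$(m+1)$ set $u'$,
\[
\frac{1}{m+1}\sum_{j\in u'}\phi\rbr{\frac{1}{m}I\rbr{P_{u'\setminus\cbr{j}};S_{u'\setminus\cbr{j}}}}\ \le\ \phi\rbr{\frac{1}{m+1}I\rbr{P_{u'};S_{u'}}}.
\]

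By concavity of $\phi$ the left side is at most $\phi\rbr{\frac{1}{m(m+1)}\sum_{j\in u'}I(P_{u'\setminus\cbr{j}};S_{u'\setminus\cbr{j}})}$, so, $\phi$ being non-decreasing, the proposition reduces to the information inequality $\frac{1}{m}\sum_{j\in u'}I(P_{u'\setminus\cbr{j}};S_{u'\setminus\cbr{j}})\le I(P_{u'};S_{u'})$. Here is where I would use that the coordinates $S_1,\dots,S_n$ are mutually independent. For each $j\in u'$, data processing along $S_{u'\setminus\cbr{j}}\textrm{\ ---\ }P_{u'}\textrm{\ ---\ }P_{u'\setminus\cbr{j}}$ gives $I(P_{u'\setminus\cbr{j}};S_{u'\setminus\cbr{j}})\le I(P_{u'};S_{u'\setminus\cbr{j}})$, and the chain rule gives $I(P_{u'};S_{u'\setminus\cbr{j}})=I(P_{u'};S_{u'})-I(P_{u'};S_j\mid S_{u'\setminus\cbr{j}})$. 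Summing over $j\in u'$, the claim follows once we establish $\sum_{j\in u'}I(P_{u'};S_j\mid S_{u'\setminus\cbr{j}})\ge I(P_{u'};S_{u'})$ -- the ``erasure information dominates total information'' fact already used after \cref{thm:generalized-xu-raginsky-stability}. For this, fix any ordering of $u'$: for each $j$, conditioning on the extra, mutually independent coordinates $\cbr{S_i : i\in u',\, i>j}$ can only increase the mutual information, whence $I(P_{u'};S_j\mid S_{u'\setminus\cbr{j}})\ge I(P_{u'};S_j\mid S_{\cbr{i\in u':\,i<j}})$, and summing these and invoking the chain rule gives $I(P_{u'};S_{u'})$. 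Combining the displays yields $\sum_{j\in u'}I(P_{u'\setminus\cbr{j}};S_{u'\setminus\cbr{j}})\le m\,I(P_{u'};S_{u'})$, and taking $\E_{u'\sim U'}$ of the per-$u'$ inequality finishes the proof.

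The only step that is not pure bookkeeping is the information inequality above, and within it the sub-lemma $\sum_{j\in u'}I(P_{u'};S_j\mid S_{u'\setminus\cbr{j}})\ge I(P_{u'};S_{u'})$; I expect this to be the main obstacle, although it is standard -- it is precisely the point where independence of the train/test-split variables $S_i$ is used. Everything else (the coupling count, Jensen's inequality, and the data-processing step for the shrinking prediction collection) is routine, and that extra data-processing step is the only structural difference from the proofs of \cref{prop:xu-raginsky-generalization-choice-of-m} and \cref{prop:cmi-sharpened-generalized-choice-of-m}.
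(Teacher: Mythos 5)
Your proof is correct and follows essentially the same route as the paper's: Jensen plus monotonicity of $\phi$ reduces the claim to the per-$u'$ inequality $\sum_{j\in u'} I(P_{u'\setminus\{j\}};S_{u'\setminus\{j\}}) \le m\, I(P_{u'};S_{u'})$, which the paper obtains from its Lemma on subsets (chain rule plus the erasure-information bound, using independence of the $S_i$) followed by the same data-processing step for the shrinking prediction vector, and your coupling of $U$ and $U'$ is exactly the paper's coefficient count $\alpha_u = 1/\binom{n}{m}$ in disguise. The only cosmetic difference is that you apply data processing before, rather than after, the chain-rule/erasure argument, and you re-derive those lemmas inline instead of citing them.
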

By setting $\phi(x)=\sqrt{x}$ and then taking expectation over $\tilde{z}$ and $u$, we prove that bounds of \cref{thm:f-cmi-bound} are non-decreasing over $m$.
By setting $\phi(x)=x$, taking expectation over $\tilde{z}$, and then taking square root of both sides of \cref{eq:fcmi-choice-of-m}, we prove that bounds are non-decreasing in $m$ when both expectations are under the square root.
\cref{prop:fcmi-choice-of-m} proves that $m=1$ is the optimal choice in \cref{thm:f-cmi-bound}.
Notably, the bound that is the easiest to compute is also the tightest!

Analogously to \cref{thm:cmi-sharpened-generalized-stability}, we provide the following stability-based bounds.
\begin{theorem}
If $\ell(\widehat{y},y) \in [0,1], \forall \widehat{y} \in \mathcal{K}, z \in \ZZ$, then
\begin{equation}
\abs{\E_{\tilde{Z},R,S}\sbr{\LL(f,\tilde{Z}_S,R) - \LL_\mathrm{emp}(f,\tilde{Z}_S,R)}} \le \E_{\tilde{z}\sim\tilde{Z}}\sbr{\frac{1}{n}\sum_{i=1}^n\sqrt{2 I(f(\tilde{z}_S,\tilde{x}_i,R); S_i \mid S_{-i})}},
\label{eq:f-cmi-bound-stability}
\end{equation}
and
\begin{equation*}
\E_{\tilde{Z},R,S}\rbr{\LL(f,\tilde{Z}_S,R) - \LL_\mathrm{emp}(f,\tilde{Z}_S,R)}^2 \le \frac{8}{n} \rbr{\E_{\tilde{z}\sim\tilde{Z}}\sbr{\sum_{i=1}^n I(f(\tilde{z}_S, \tilde{x}, R); S_i \mid S_{-i})} + 2}.
\label{eq:f-cmi-squared-bound-stability}
\end{equation*}
\label{thm:f-cmi-bound-stability}
\end{theorem}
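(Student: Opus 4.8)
The plan is to obtain both inequalities as direct consequences of \cref{thm:f-cmi-bound} --- more precisely, its $m=1$ specialization \cref{corr:f-cmi-bound-m=1} for \cref{eq:f-cmi-bound-stability}, and its squared-loss bound for the second display --- combined with two elementary information-theoretic inequalities, exactly as \cref{thm:generalized-xu-raginsky-stability} was deduced from \cref{thm:generalized-xu-raginsky}. The only structural fact to exploit is that under $S \sim \mathrm{Uniform}(\mathset{0,1}^n)$ the coordinates $S_1,\dots,S_n$ are mutually independent, and that $R$ is independent of $S$, so conditioning on $\tilde{Z} = \tilde{z}$ and carrying $R$ along does not disturb this independence.

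\textbf{First inequality.} For a fixed $\tilde{z}$ set $X = f(\tilde{z}_S, \tilde{x}_i, R)$. Since $S_i \indep S_{-i}$, the chain rule gives $I(X; S_i \mid S_{-i}) = I(X, S_{-i}; S_i) - I(S_{-i}; S_i) = I(X, S_{-i}; S_i) \ge I(X; S_i)$, the last step again by the chain rule. Plugging $I(f(\tilde{z}_S, \tilde{x}_i, R); S_i) \le I(f(\tilde{z}_S, \tilde{x}_i, R); S_i \mid S_{-i})$ into \cref{corr:f-cmi-bound-m=1}, using monotonicity of $t \mapsto \sqrt{t}$ termwise, and taking $\E_{\tilde{z}}$ yields \cref{eq:f-cmi-bound-stability}.

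\textbf{Second inequality.} Now take $X = f(\tilde{z}_S, \tilde{x}, R)$, the full collection of $2n$ predictions. Because $S_1,\dots,S_n$ are mutually independent, $H(S) = \sum_i H(S_i)$, and by the chain rule together with the fact that conditioning on more variables cannot increase entropy, $H(S \mid X) = \sum_i H(S_i \mid X, S_1,\dots,S_{i-1}) \ge \sum_i H(S_i \mid X, S_{-i})$; hence $I(X; S) = H(S) - H(S\mid X) \le \sum_{i=1}^n \bigl(H(S_i \mid S_{-i}) - H(S_i \mid X, S_{-i})\bigr) = \sum_{i=1}^n I(X; S_i \mid S_{-i})$, which is the erasure-information bound of \citet{verdu2008information}. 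Substituting $\ofcmi(f,\tilde{z}) = I(f(\tilde{z}_S, \tilde{x}, R); S)$ into the squared-loss bound of \cref{thm:f-cmi-bound} and taking $\E_{\tilde{z}}$ gives the second display of the theorem.

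\textbf{Main obstacle.} There is essentially no obstacle beyond bookkeeping: the single point needing care is to confirm that both inequalities are applied legitimately, i.e., that $S_i \indep S_{-i}$ and the joint independence of the coordinates of $S$ survive conditioning on $\tilde{Z}$ and the presence of the training randomness $R$. Since $S$ is uniform on $\mathset{0,1}^n$ and drawn independently of $(\tilde{Z}, R)$, this holds, and the argument of each mutual information is simply a measurable function of $(S, \tilde{z}, R)$ playing the role of the generic variable $X$ above. Everything else mirrors the proof of \cref{thm:generalized-xu-raginsky-stability}.
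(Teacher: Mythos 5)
Your proof is correct and follows essentially the same route as the paper: both inequalities are obtained by plugging the bounds $I(f(\tilde{z}_S,\tilde{x}_i,R); S_i) \le I(f(\tilde{z}_S,\tilde{x}_i,R); S_i \mid S_{-i})$ and $I(f(\tilde{z}_S,\tilde{x},R); S) \le \sum_{i=1}^n I(f(\tilde{z}_S,\tilde{x},R); S_i \mid S_{-i})$ into \cref{thm:f-cmi-bound}, exactly as the paper does via its \cref{lemma:stability-lemma}. The only cosmetic difference is that you derive the erasure-information inequality through entropy decompositions (valid here since $S$ is discrete) while the paper uses the mutual-information chain rule, which is the same argument in different notation.
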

\vskip -1em
Note that unlike \cref{eq:f-cmi-bound-stability}, in the second part of \cref{thm:f-cmi-bound-stability} we measure information with predictions on all $2n$ pairs and $S_i$ conditioned on $S_{-i}$.
It is an open question whether $f(\tilde{z}_S,\tilde{x},R)$ can be replaced with $f(\tilde{z}_S,\tilde{x}_i,R)$ -- predictions only on the $i$-th pair.

\section{Applications}\label{sec:applications}
In this section we describe 3 applications of the $f$-CMI-based generalization bounds. 

\subsection{Ensembling algorithms}
Ensembling algorithms combine predictions of multiple learning algorithms to obtain better performance.
Let us consider $k$ learning algorithms, $f_1,f_2,\ldots,f_k$, each with its own independent randomness $R_i,\ i\in[k]$.
Some ensembling algorithms can be viewed as a possibly stochastic function $g : \mathcal{K}^k \rightarrow \mathcal{K}$ that takes predictions of the $k$ algorithms and combines them into a single prediction.
Relating the generalization gap of the resulting ensembling algorithm to that of individual $f_i$s can be challenging for complicated choices of $g$.
However, it is easy to bound the generalization gap of $g(f_1,\ldots,f_k)$ in terms of $f$-CMIs of individual predictors.
Let $\tilde{z}$ be a fixed value of $\tilde{Z}$ and $x$ be an arbitrary collection of inputs. 
Denoting $F_i=f_i(\tilde{z}_S,x,R_i),\ i\in[k]$, we have that
\begin{align*}
I(g(F_1, \ldots, F_k); S) &\le I(F_1,\ldots,F_k ; S)&&\hspace{-6.4em}\text{(data processing inequality)}\\
&\hspace{-7em}= I(F_1; S) + I(F_2,\ldots,F_k; S) - I(F_1; F_2,\ldots,F_k) + I(F_1; F_2,\ldots,F_k \mid S)&&\text{(chain rule)}\\
&\hspace{-7em}\le I(F_1; S) + I(F_2,\ldots,F_k; S)&&\hspace{-14.8em}\text{(as MI is nonnegative and $F_1 \indep F_2,\ldots,F_k \mid S$)}\\
&\hspace{-7em}\le \ldots\le I(F_1; S)+\cdots+I(F_k; S).&&\hspace{-15.3em}\text{(repeating the arguments above to separate all $F_i$)}
\end{align*}
Unfortunately, the same derivation above does not work if we replace $S$ with $S_u$, where $u$ is a proper subset of $[n]$, as $I(F_1; F_2,\ldots,F_k \mid S_u)$ will not be zero in general.

\subsection{Binary classification with finite VC dimension}
Let us consider the case of binary classification: $\YY = \mathset{0,1}$, where the learning method $f : \ZZ^n \times \XX \times \mathcal{R} \rightarrow \mathset{0,1}$ is implemented using a learning algorithm $A: \ZZ^n \times \mathcal{R} \rightarrow \mathcal{W}$ that selects a classifier from a hypothesis set $\mathcal{H}=\mathset{h_w: \XX \rightarrow \YY}$.
If $\mathcal{H}$ has finite VC dimension $d$~\citep{vapnik1998statistical}, then for any algorithm $f$, the quantity $\ofcmi(f,\tilde{z})$ can be bounded the following way.

\begin{theorem}
Let $\mathcal{Z}$, $\mathcal{H}$, $f$ be defined as above, and let $d<\infty$ be the VC dimension of $\mathcal{H}$. Then for any algorithm $f$ and $\tilde{z} \in \ZZ^{n\times 2}$,
\begin{equation}
\ofcmi(f, \tilde{z}) \le \max\mathset{(d+1)\log{2},\ d\log\rbr{2en/d}}.
\end{equation}
\label{thm:vc}
\end{theorem}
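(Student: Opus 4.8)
The plan is to bound $\ofcmi(f,\tilde z)=I\rbr{f(\tilde z_S,\tilde x,R);\,S}$ by the Shannon entropy of the prediction vector and then control that entropy through the growth function of $\HH$ via the Sauer--Shelah lemma. Fix an arbitrary $\tilde z\in\ZZ^{n\times 2}$. Since we are in the binary classification setting, $f(\tilde z_S,\tilde x,R)$ is a vector of $2n$ bits, hence a discrete random variable, so
\[
\ofcmi(f,\tilde z)=I\rbr{f(\tilde z_S,\tilde x,R);\,S}\le H\rbr{f(\tilde z_S,\tilde x,R)}\le\log\abs{\mathcal V},
\]
where $\mathcal V$ denotes the set of values taken by $f(\tilde z_S,\tilde x,R)$ as $S$ and $R$ vary (with $\tilde z$ fixed).

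The key structural observation is that, by assumption, $f(z,x,r)=h_{A(z,r)}(x)$, so for every realization of $(S,R)$ the vector $f(\tilde z_S,\tilde x,R)=\rbr{h_{A(\tilde z_S,R)}(\tilde x_{i,j})}_{i\in[n],\,j\in\mathset{0,1}}$ is one of the dichotomies that $\HH$ induces on the fixed collection of $2n$ inputs $\tilde x$. Hence $\mathcal V\subseteq\cbr{\rbr{h_w(\tilde x_{i,j})}_{i\in[n],\,j\in\mathset{0,1}}:w\in\WW}$, and since $\HH$ has VC dimension $d$, the Sauer--Shelah lemma gives $\abs{\mathcal V}\le\Phi_d(2n)$, where $\Phi_d(m)\triangleq\sum_{i=0}^{d}\binom{m}{i}$ is the growth function.

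It then remains to verify $\log\Phi_d(2n)\le\max\mathset{(d+1)\log 2,\,d\log(2en/d)}$, which I would do by cases. If $d=0$, then $\Phi_d(2n)=1\le 2^{d+1}$; if $d\ge 2n$, then $\Phi_d(2n)=2^{2n}\le 2^{d+1}$ (as $2n\le d+1$); in either case the first term dominates. If $1\le d\le 2n$, I would invoke the standard estimate $\Phi_d(m)\le(em/d)^d$, valid for $1\le d\le m$, which follows from the weighting trick $\Phi_d(m)\le(m/d)^d\sum_{i=0}^{m}\binom{m}{i}(d/m)^i=(m/d)^d(1+d/m)^m\le(em/d)^d$; applied with $m=2n$ it gives $\Phi_d(2n)\le(2en/d)^d$, i.e.\ the second term. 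Taking logarithms finishes the bound, and since it holds for every $\tilde z$, averaging over $\tilde z$ also bounds $\fcmi(f)=\E_{\tilde z}\ofcmi(f,\tilde z)$.

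I do not expect a genuine obstacle here. The two things to get right are the structural reduction --- observing that the prediction vector always ranges over dichotomies of the \emph{full} collection of $2n$ inputs $\tilde x$, so that the VC bound is applied with $m=2n$ rather than $m=n$ --- and the bookkeeping of the corner cases $d=0$ and $d\ge 2n$, where $(2en/d)^d$ is either undefined or not the relevant estimate and one simply falls back on $\Phi_d(2n)=2^{2n}\le 2^{d+1}$.
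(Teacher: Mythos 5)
Your proposal is correct and follows essentially the same route as the paper: bound the mutual information by the entropy of the prediction vector, bound that by the logarithm of the number of dichotomies $\HH$ induces on the $2n$ inputs, and apply the Sauer--Shelah lemma with the case split between $2n\le d+1$ and $2n>d+1$. Your treatment is, if anything, slightly more careful about the corner cases and includes the standard derivation of the $(em/d)^d$ estimate that the paper simply cites.
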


Considering the 0-1 loss function and using this result in \cref{corr:f-cmi-bound-m=n}, we get an expect generalization gap bound   that is $O\rbr{\sqrt{\frac{d}{n} \log\rbr{\frac{n}{d}}}}$, matching the classical uniform convergence bound~\citep{vapnik1998statistical}.
The $\sqrt{\log{n}}$ factor can be removed in some cases~\citep{hafez2020conditioning}.

Both \citet{xu2017information} and \citet{steinke2020reasoning} prove similar information-theoretic bounds in the case of finite VC dimension classes, but their results holds for specific algorithms only.
Even in the simple case of threshold functions: $\XX = [0,1]$ and $\mathcal{H} = \mathset{h_w: x \mapsto \ind{x > w} \mid w \in [0, 1]}$,
all weight-based bounds described in \cref{sec:w-gen-bounds} are vacuous if one uses a training algorithm that encodes the training set in insignificant bits of $W$, while still getting zero error on the training set and hence achieving low test error.

\subsection{Stable deterministic or stochastic algorithms}\label{sec:stable-algorithms}
Theorems \ref{thm:generalized-xu-raginsky-stability}, \ref{thm:cmi-sharpened-generalized-stability} and \ref{thm:f-cmi-bound-stability} provide generalization bounds involving information-theoretic stability measures, such as $I(W; Z_i \mid Z_{-i})$, $I(A(\tilde{z}_S,R); S \mid S_{-i})$ and $I(f(\tilde{z}_S,\tilde{x},R); S_i \mid S_{-i})$.
In this section we build upon the predication-based stability bounds of \cref{thm:f-cmi-bound-stability}.
First, we show that for any collection of examples $x$, the mutual information $I(f(\tilde{z}_S,x); S_i \mid S_{-i})$ can be bounded as follows.
\begin{proposition}
Let $S^{i \leftarrow c}$ denote $S$ with $S_i$ set to $c$. Then for any $\tilde{z} \in \ZZ^{n\times 2}$ and $\tilde{x} \in \XX^k$, the mutual information $I(f(\tilde{z}_S,x,R); S_i \mid S_{-i})$ is upper bounded by 
\begin{align*}
\frac{1}{4} \KL{f(\tilde{z}_{S^{i\leftarrow 1}},x,R) | S_{-i}}{f(\tilde{z}_{S^{i\leftarrow 0}},x,R) | S_{-i}} + \frac{1}{4}\KL{f(\tilde{z}_{S^{i\leftarrow 0}},x,R) | S_{-i}}{f(\tilde{z}_{S^{i\leftarrow 1}},x,R)| {S_{-i}}}.
\end{align*}
\label{prop:stability-to-kl}
\end{proposition}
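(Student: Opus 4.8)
The plan is to reduce the statement to the elementary fact that $Q \mapsto \KL{P}{Q}$ is convex, after first conditioning on $S_{-i}$ and recognizing the conditional mutual information as a Jensen--Shannon divergence between the two ``swapped'' prediction distributions.

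First I would fix a value $s_{-i}$ of $S_{-i}$. Since $S \sim \mathrm{Uniform}(\mathset{0,1}^n)$, the coordinate $S_i$ is uniform on $\mathset{0,1}$ and $S_i \indep S_{-i}$; moreover, conditioning on $S_{-i}=s_{-i}$ makes both $f(\tilde z_{S^{i\leftarrow 0}},x,R)$ and $f(\tilde z_{S^{i\leftarrow 1}},x,R)$ functions of the training randomness $R$ alone. Write $\Phi = f(\tilde z_S, x, R)$ and let $P_c$ denote the law of $f(\tilde z_{S^{i\leftarrow c}},x,R)$ given $S_{-i}=s_{-i}$, which coincides with the law of $\Phi$ given $S_i = c, S_{-i} = s_{-i}$. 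Then the conditional law of $\Phi$ given $S_{-i}=s_{-i}$ is the mixture $M_{s_{-i}} = \tfrac12 P_0 + \tfrac12 P_1$, so by definition of (conditional) mutual information with a uniform binary variable,
\[
I(\Phi; S_i \mid S_{-i}=s_{-i}) = \tfrac12 \KL{P_0}{M_{s_{-i}}} + \tfrac12 \KL{P_1}{M_{s_{-i}}}.
\]

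The key step is then to bound each summand by convexity of $\mathrm{KL}$ in its second argument: since $M_{s_{-i}} = \tfrac12 P_0 + \tfrac12 P_1$, joint convexity of $\mathrm{KL}$ gives $\KL{P_0}{M_{s_{-i}}} \le \tfrac12\KL{P_0}{P_0} + \tfrac12\KL{P_0}{P_1} = \tfrac12\KL{P_0}{P_1}$, and symmetrically $\KL{P_1}{M_{s_{-i}}} \le \tfrac12\KL{P_1}{P_0}$. Substituting yields $I(\Phi; S_i \mid S_{-i}=s_{-i}) \le \tfrac14\KL{P_0}{P_1} + \tfrac14\KL{P_1}{P_0}$. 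Finally I would take the expectation over $S_{-i}$: the left-hand side becomes $I(f(\tilde z_S,x,R); S_i \mid S_{-i})$, while $\E_{S_{-i}}\KL{P_1}{P_0}$ and $\E_{S_{-i}}\KL{P_0}{P_1}$ are exactly the conditional KL divergences in the statement, $\KL{f(\tilde z_{S^{i\leftarrow 1}},x,R)\mid S_{-i}}{f(\tilde z_{S^{i\leftarrow 0}},x,R)\mid S_{-i}}$ and its reverse. The main ``idea'' is the identification of $I(\Phi;S_i\mid S_{-i}=s_{-i})$ with the symmetric Jensen--Shannon divergence of $P_0$ and $P_1$; once that is in place, there is no real obstacle, since the only substantive inequality used, $\KL{P}{\tfrac12 P + \tfrac12 Q} \le \tfrac12\KL{P}{Q}$, is immediate from convexity, and the remaining work is bookkeeping of which variables the conditioning on $S_{-i}$ freezes.
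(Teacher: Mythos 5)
Your proof is correct and is essentially the paper's own argument: both identify the conditional mutual information (given $S_{-i}$) as the Jensen--Shannon-type quantity $\tfrac12\KL{P_0}{\tfrac12 P_0+\tfrac12 P_1}+\tfrac12\KL{P_1}{\tfrac12 P_0+\tfrac12 P_1}$ and then apply convexity of KL in the second argument, with the $\KL{P_c}{P_c}=0$ terms dropping out. The only cosmetic difference is that you condition on a fixed $s_{-i}$ and average at the end, whereas the paper carries the conditional-KL notation throughout.
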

\vskip -1em
To compute the right-hand side of \cref{prop:stability-to-kl} one needs to know how much on-average the distribution of predictions on $x$ changes after replacing the $i$-th example in the training dataset.
The problem arises when we consider deterministic algorithms.
In such cases, the right-hand side is infinite, while the left-hand side $I(f(\tilde{z}_S,x,R); S_i \mid S_{-i})$ is always finite and could be small.
Therefore, for deterministic algorithms, directly applying the result of \cref{prop:stability-to-kl} will not give meaningful generalization bounds.
Nevertheless, we show that we can add an optimal amount of noise to predictions, upper bound the generalization gap of the resulting noisy algorithm, and relate that to the generalization gap of the original deterministic algorithm.

Let us consider a  deterministic algorithm $f: \ZZ^n \times \XX \rightarrow \mathbb{R}^d$. 
We define the following notions of functional stability.

\begin{definition}[Functional stability] Let $S=(Z_1,\ldots,Z_n)\sim \DD^n$ be a collection of $n$ i.i.d. samples, and $Z'$ and $Z_{\text{test}}$ be two additional independent samples from $\DD$. Let $S^{(i)} \triangleq (Z_1,\ldots,Z_{i-i},Z',Z_{i+1},\ldots,Z_n)$ be the collection constructed from $S$ by replacing the $i$-th example with $Z'$. A deterministic algorithm $f: \ZZ^n \times \XX \rightarrow \mathbb{R}^d$ is
\begin{align}
&\text{a) $\beta$ self-stable if } \ \forall i \in [n],\  \E_{S,Z'} \nbr{f(S,Z_i)-f(S^{(i)},Z_i)}^2 \le \beta^2,\\
&\text{b) $\beta_1$ test-stable if }\ \forall i \in [n],\  \E_{S,Z',Z_{\text{test}}} \nbr{f(S,Z_{\text{test}})-f(S^{(i)},Z_{\text{test}})}^2 \le \beta_1^2,\\
&\text{c) $\beta_2$ train-stable if } \ \forall i,j\in[n],i\neq j,\  \E_{S,Z'} \nbr{f(S,Z_j)-f(S^{(i)},Z_j)}^2 \le \beta_2^2.
\end{align}
\end{definition}

\begin{theorem}
Let $\YY = \mathbb{R}^d$, $f: \ZZ^n \times \XX \rightarrow \mathbb{R}^d$ be a deterministic algorithm that is $\beta$ self-stable, and $\ell(\widehat{y},y) \in [0,1]$ be a loss function that is $\gamma$-Lipschitz in the first coordinate. Then
\begin{equation}
\abs{\E_{\tilde{Z},R,S}\sbr{\LL(f,\tilde{Z}_S,R) - \LL_\mathrm{emp}(f,\tilde{Z}_S,R)}} \le 2^{\frac{3}{2}} d^{\frac{1}{4}}\sqrt{\gamma  \beta}.
\end{equation}
Furthermore, if $f$ is also $\beta_1$ train-stable and $\beta_2$ test-stable, then
\begin{equation}
\E_{\tilde{Z},R,S}\rbr{\LL(f,\tilde{Z}_S,R) - \LL_\mathrm{emp}(f,\tilde{Z}_S,R)}^2 \le \frac{32}{n} + 12^{\frac{3}{2}}\sqrt{d}\gamma\sqrt{2\beta^2 + n\beta_1^2 + n \beta_2^2}.
\label{eq:f-cmi-stability-squared}
\end{equation}
\label{thm:f-cmi-deterministic-stability}
\end{theorem}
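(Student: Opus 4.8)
The idea is to smooth the deterministic predictor by additive Gaussian noise, control the generalization gap of the smoothed (hence stochastic) predictor through the prediction-stability bound \cref{thm:f-cmi-bound-stability} together with \cref{prop:stability-to-kl}, and then optimize the noise scale after paying a Lipschitz price for removing the noise.

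First I fix $\sigma>0$ and define a stochastic method $\tilde f$ by $\tilde f(z,x,r) = f(z,x) + \xi$, where $\xi$ consists of independent $\mathcal N(0,\sigma^2 I_d)$ perturbations, one per queried prediction; since $\mathcal K = \bR^d$ and $\ell$ is valued in $[0,1]$, the perturbed losses still lie in $[0,1]$, so \cref{thm:f-cmi-bound-stability} applies to $\tilde f$. To bound the first-moment term I combine the first inequality of \cref{thm:f-cmi-bound-stability} with \cref{prop:stability-to-kl}; conditionally on $\tilde z$ and $S_{-i}$ the two laws $\tilde f(\tilde z_{S^{i\leftarrow c}},\tilde x_i,R)$ are Gaussians with common covariance $\sigma^2 I$ and means $f(\tilde z_{S^{i\leftarrow c}},\tilde x_i)$, so the Gaussian KL identity gives
\[
I\!\left(\tilde f(\tilde z_S,\tilde x_i,R);S_i\mid S_{-i}\right)\ \le\ \frac{1}{4\sigma^2}\,\E_{S_{-i}}\big\|f(\tilde z_{S^{i\leftarrow 1}},\tilde x_i)-f(\tilde z_{S^{i\leftarrow 0}},\tilde x_i)\big\|^2 .
\]
The key bookkeeping step is that when $\tilde Z\sim\DD^{n\times 2}$ and $S_{-i}$ is uniform, $\tilde Z_{S^{i\leftarrow 1}}$ has law $\DD^n$ and $\tilde Z_{i,0}$ is an independent $\DD$-sample; writing $\|\cdot\|^2$ as the sum of its $\tilde x_{i,0}$- and $\tilde x_{i,1}$-parts, each part averages (over $\tilde Z,S_{-i}$) to an instance of the self-stability quantity, hence is $\le\beta^2$. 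Thus $\E_{\tilde z}\,I(\tilde f(\tilde z_S,\tilde x_i,R);S_i\mid S_{-i})\le \beta^2/(2\sigma^2)$, and Jensen (pushing $\E_{\tilde z}$ inside the square root in \cref{thm:f-cmi-bound-stability}) yields $|\E[\LL(\tilde f,\tilde Z_S,R)-\LL_\mathrm{emp}(\tilde f,\tilde Z_S,R)]|\le\beta/\sigma$. Removing the noise costs at most $\gamma\,\E\|\xi\|\le\gamma\sigma\sqrt d$ in each of the two risks, so $|\E[\LL(f,\tilde Z_S,R)-\LL_\mathrm{emp}(f,\tilde Z_S,R)]|\le \beta/\sigma + 2\gamma\sigma\sqrt d$, and minimizing over $\sigma$ (optimum at $\sigma^2=\beta/(2\gamma\sqrt d)$) gives exactly $2^{3/2}d^{1/4}\sqrt{\gamma\beta}$.

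For the second moment I repeat this with the second inequality of \cref{thm:f-cmi-bound-stability}, which involves predictions on the whole $2n$-point grid $\tilde x$. Applying \cref{prop:stability-to-kl} as above reduces matters to $\E_{\tilde Z,S_{-i}}\|f(\tilde z_{S^{i\leftarrow 1}},\tilde x)-f(\tilde z_{S^{i\leftarrow 0}},\tilde x)\|^2$, which I split pair-by-pair: the $i$-th pair contributes $\le 2\beta^2$ by self-stability, while for each $j\neq i$ exactly one element of pair $j$ is a training point common to both datasets (contributing $\le\beta_2^2$ by train-stability) and the other is an independent test point (contributing $\le\beta_1^2$ by test-stability), for a total $\le 2\beta^2+n\beta_1^2+n\beta_2^2$. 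Plugging into \cref{thm:f-cmi-bound-stability}, transferring from $\tilde f$ to $f$ via $(a+b)^2\le 2a^2+2b^2$ together with the same Lipschitz estimate, and finally optimizing over $\sigma$, produces a bound of the stated form $\frac{32}{n}+O\!\big(\sqrt d\,\gamma\sqrt{\beta^2+n\beta_1^2+n\beta_2^2}\big)$.

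The main obstacle is not any individual inequality but the reconciliation of the two languages involved: the conditional-MI / train--test-split formulation of \cref{thm:f-cmi-bound-stability} and \cref{prop:stability-to-kl} versus the i.i.d.-with-one-replacement formulation of the stability definitions. One must carefully track, for every pair and every element within it, whether the queried input lies inside or outside each of the two perturbed training sets, so as to certify that after averaging each Euclidean discrepancy is exactly one of $\beta^2,\beta_1^2,\beta_2^2$; the noise-scale optimization and the Lipschitz transfer are then routine.
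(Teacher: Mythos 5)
Your proposal is correct and follows essentially the same route as the paper's own proof: Gaussian smoothing of the predictions, \cref{thm:f-cmi-bound-stability} combined with \cref{prop:stability-to-kl} and the Gaussian KL formula, the pair-by-pair accounting against the three stability constants, and optimization over $\sigma$ after paying the Lipschitz price. The only (harmless) deviation is in the squared-gap transfer, where you invoke $(a+b)^2\le 2a^2+2b^2$ in place of the paper's cross-term expansion and its subsequent case analysis on $G\le \frac{16}{n}+cG^{1/3}$; this still yields a bound of the stated form, indeed with a constant no worse than $12^{3/2}$.
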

\vskip -1em
It is expected that $\beta_2$ is smaller than $\beta$ and $\beta_1$.
For example, in the case of neural networks interpolating the training data or in the case of empirical risk minimization in the realizable setting, $\beta_2$ will be zero.
It is also expected that $\beta$ is larger than $\beta_1$.
However, the relation of $\beta^2$ and $n\beta_1^2$ is not trivial.

The notion of pointwise hypothesis stability $\beta'_2$ defined by \citet{bousquet2002stability} (definition 4) is comparable to our notion of self-stability $\beta$.
The first part of Theorem 11 in \cite{bousquet2002stability} describes a generalization bound where the difference between empirical and population losses is of order $1/\sqrt{n}+\sqrt{\beta'_2}$, which is comparable with our result of \cref{thm:f-cmi-deterministic-stability} ($\Theta(\sqrt{\beta})$).
The proof there also contains a bound on the expected squared difference of empirical and population losses. That bound is of order $1/n + \beta'_2$.
In contrast, our result of \cref{eq:f-cmi-stability-squared} contains two extra terms related to test-stability and train-stability (the terms 
$n \beta_1^2$ and $n \beta_2^2$).
If $\beta$ dominates $n\beta_1^2 + n\beta_2^2$, then the bound of \cref{eq:f-cmi-stability-squared} will match the result of \citet{bousquet2002stability}.

\section{Experiments}\label{sec:experiments}
\begin{figure}
    \centering
    \begin{subfigure}{0.32\textwidth}
        \includegraphics[width=\textwidth]{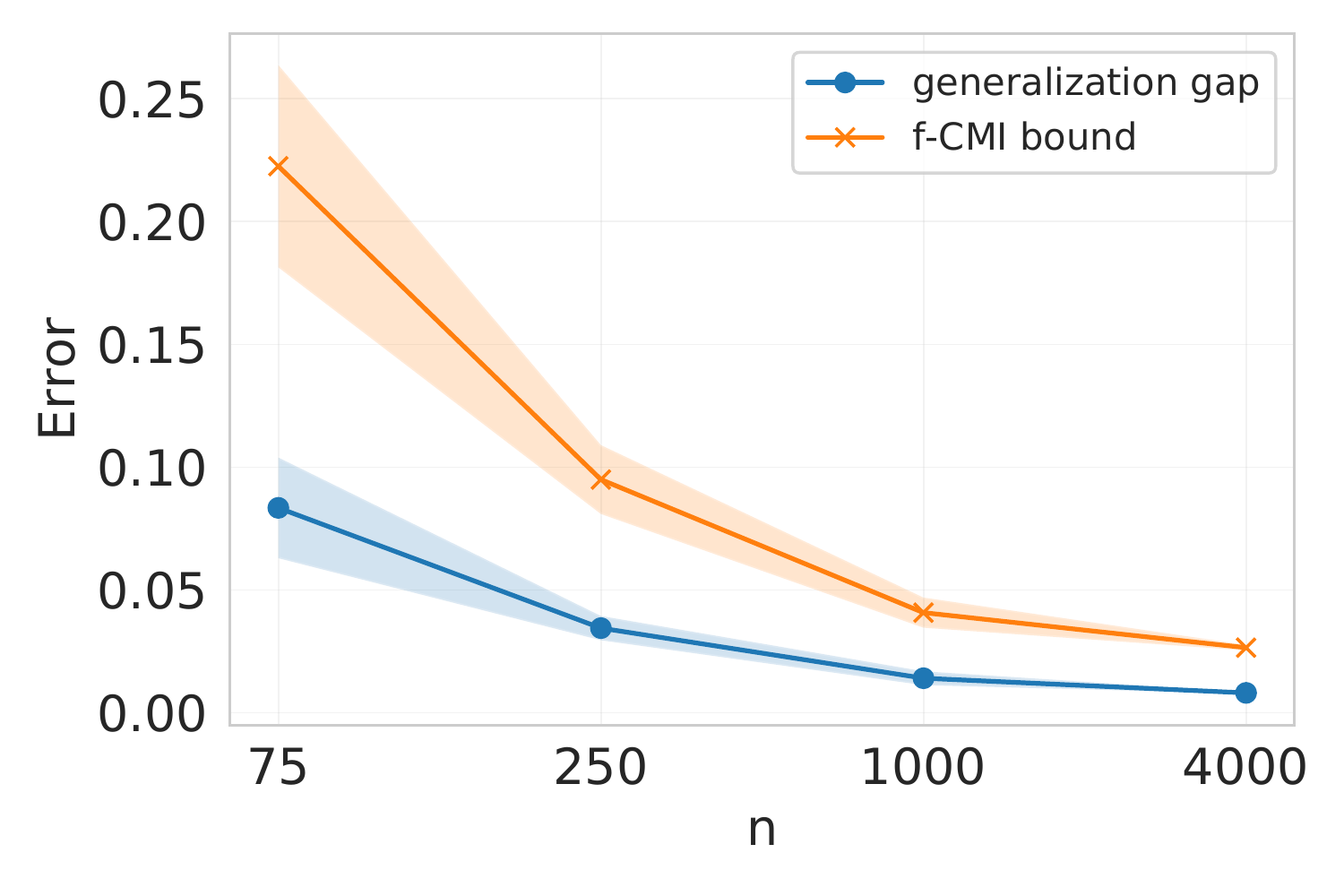}
        \caption{}
        \label{fig:mnist4vs9-cnn-x=n-deterministic}
    \end{subfigure}%
    \begin{subfigure}{0.32\textwidth}
        \includegraphics[width=\textwidth]{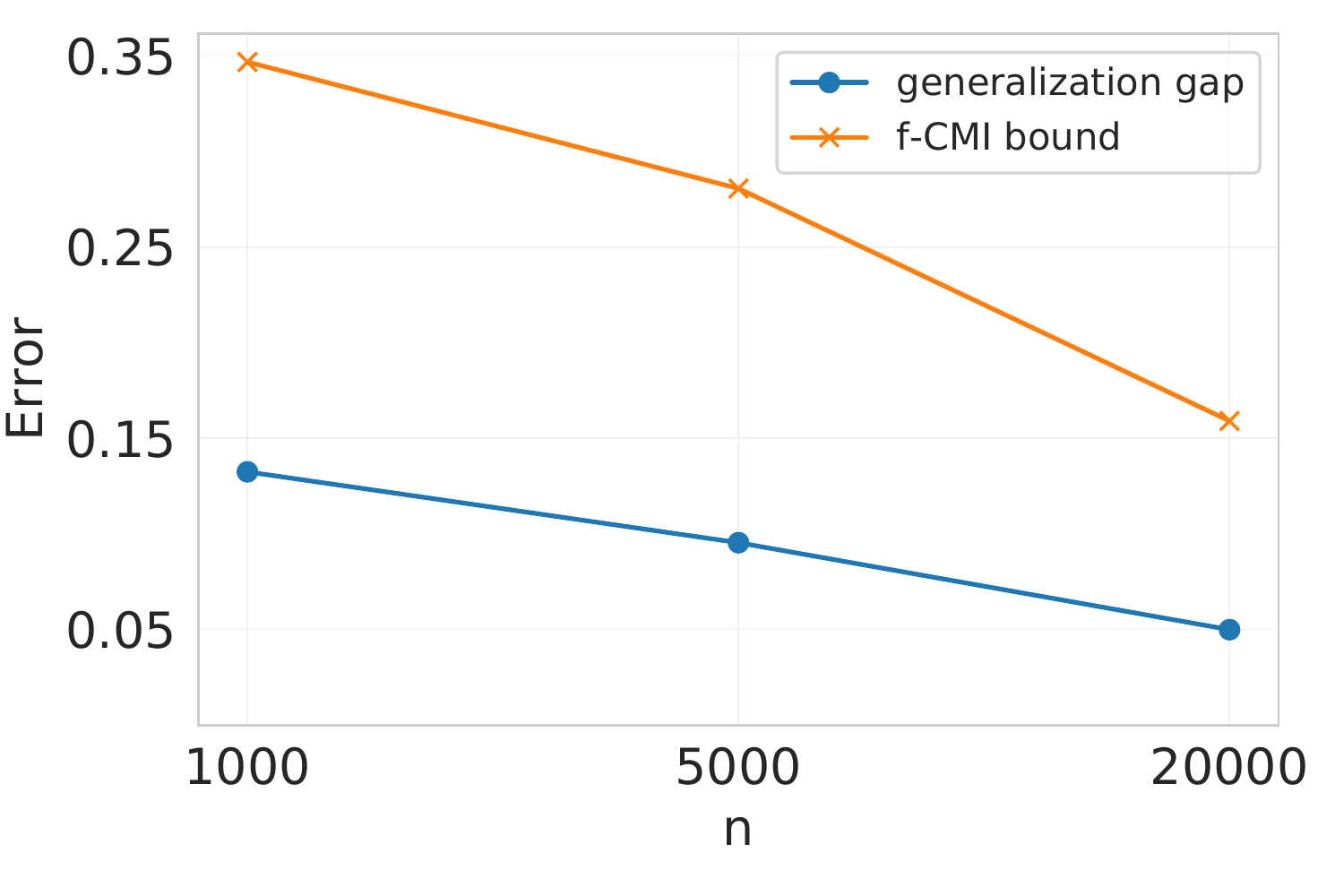}
        \caption{}
        \label{fig:cifar-10-pretrained-resnet50}
    \end{subfigure}%
    \begin{subfigure}{0.32\textwidth}
        \includegraphics[width=\textwidth]{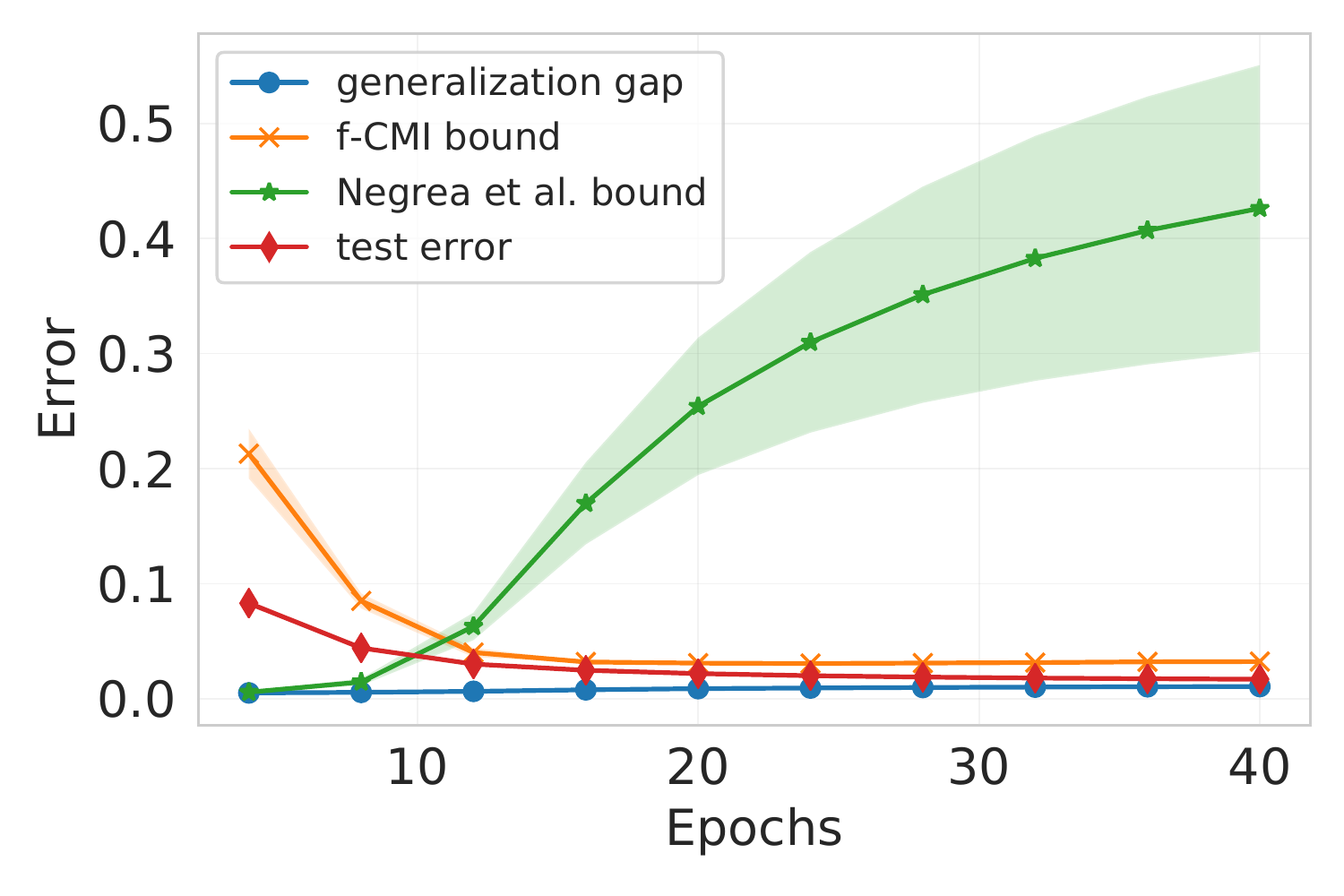}
        \caption{}
        \label{fig:mnist-4vs9-langevin-dynamics}
    \end{subfigure}
    \caption{Comparison of expected generalization gap and $f$-CMI bound in 3 settings: (a) MNIST 4 vs 9 classification with a CNN trained using a deterministic algorithm, (b) pretrained ResNet-50 finetuned on CIFAR-10, and (c) MNIST 4 vs 9 with a CNN trained using SGLD.}
    \label{fig:experiments-plot}
\end{figure}

\begin{figure}
    \centering
    \begin{subfigure}{0.32\textwidth}
        \includegraphics[width=\textwidth]{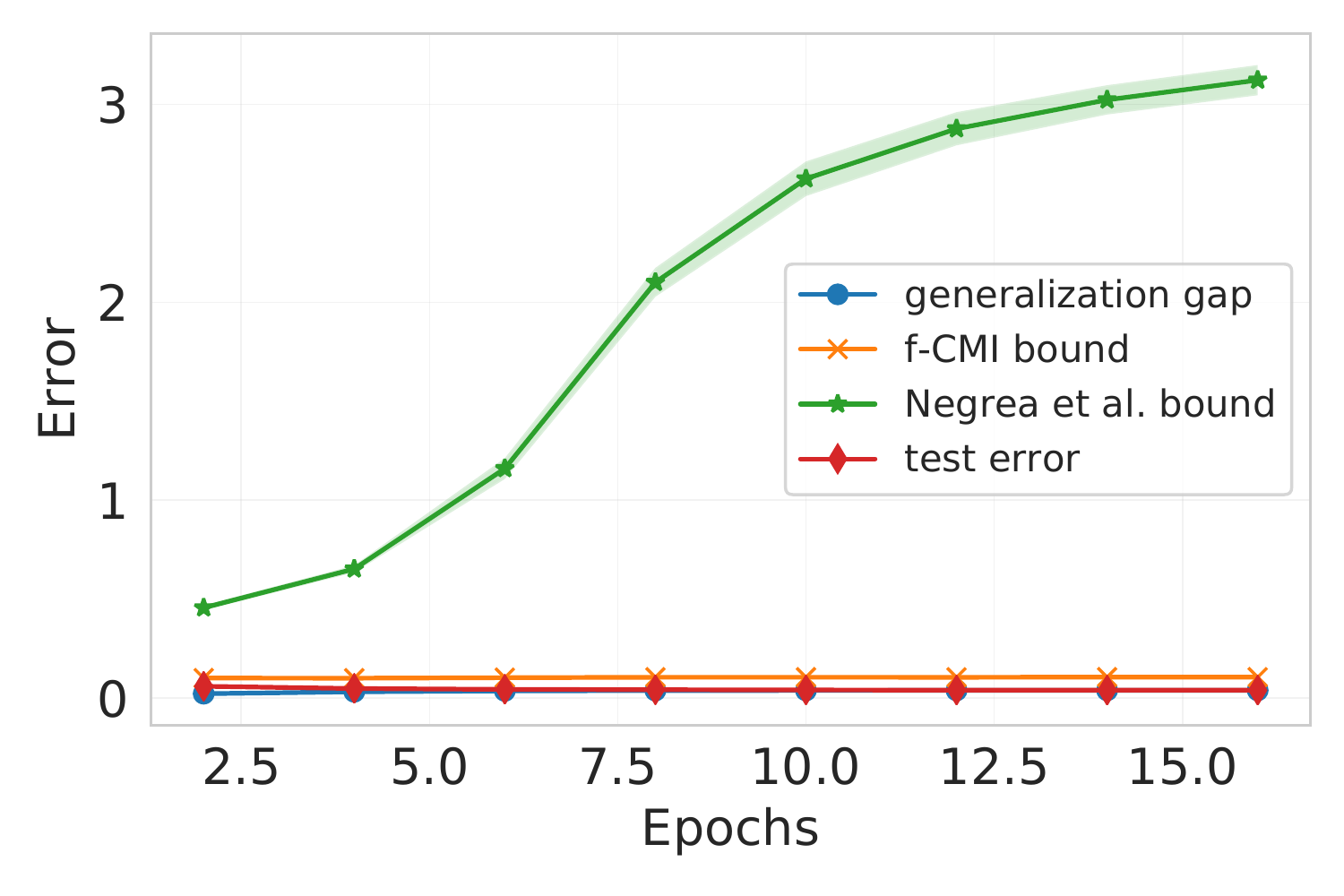}
        \label{fig:cifar-10-pretrained-resnet50-LD-w-SGLD}
    \end{subfigure}%
    \begin{subfigure}{0.32\textwidth}
        \includegraphics[width=\textwidth]{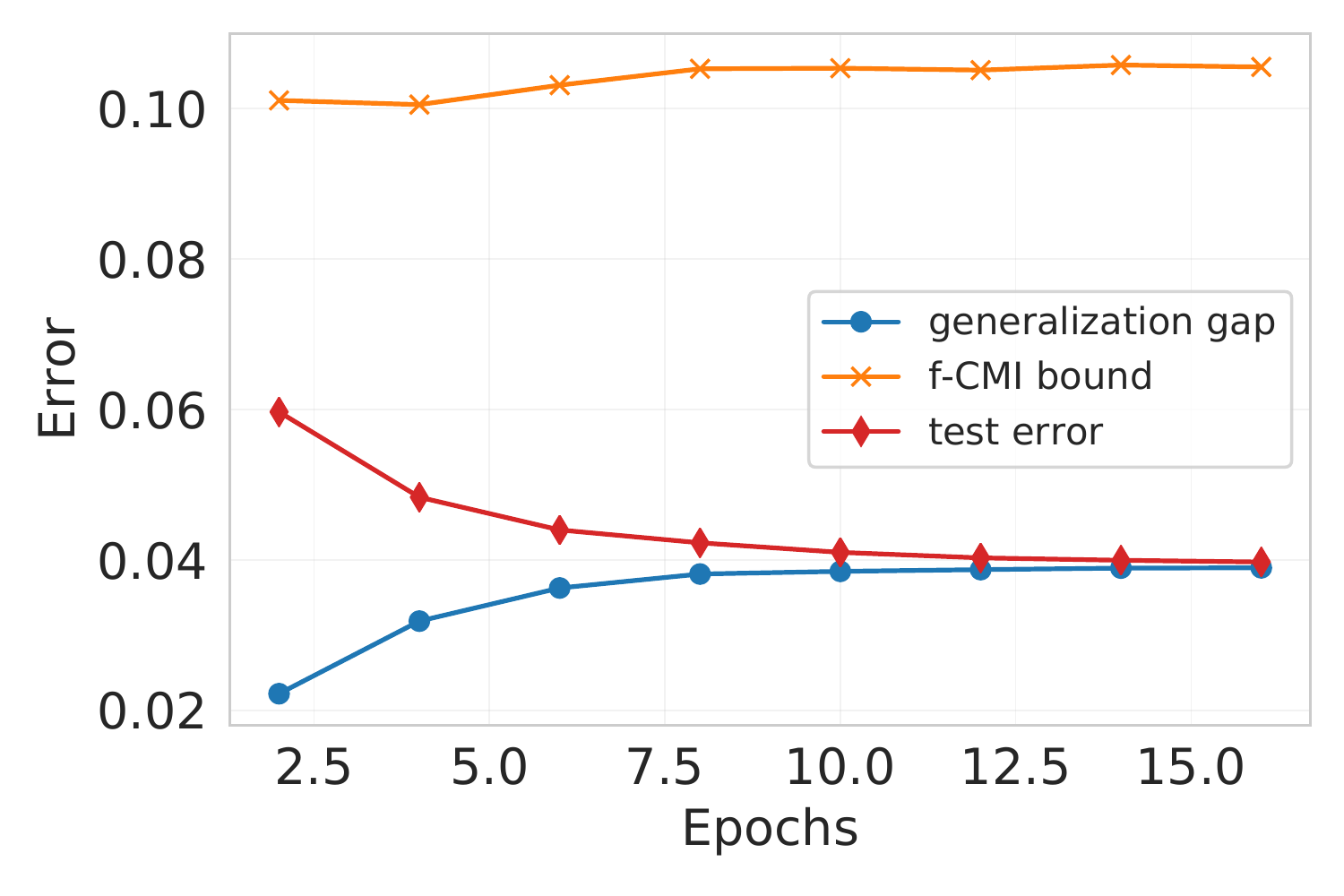}
        \label{fig:cifar-10-pretrained-resnet50-LD-wno-SGLD}
    \end{subfigure}
    \caption{Comparison of expected generalization gap, \citet{negrea2019information} SGLD bound and $f$-CMI bound in case of a pretrained ResNet-50 fine-tuned with SGLD on a subset of CIFAR-10 of size $n=20000$. The figure on the right is the zoomed-in version of the figure on the left.
    }
    \label{fig:cifar-10-pretrained-resnet50-LD}
\end{figure}

As mentioned earlier, the expected generalization gap bound of \cref{corr:f-cmi-bound-m=1} is significantly easier to compute compared to existing information-theoretic bounds, and does not give trivial results for deterministic algorithms.
To understand how well the bound does in challenging situations, we consider cases when the algorithm generalizes well despite the high complexity of the hypothesis class and relatively small number of training examples.
Due to space constraints we omit some experimental details and present them in \cref{sec:exp-details-add-results}.
The code can be found at \url{github.com/hrayrhar/f-CMI}.

First, we consider the MNIST 4 vs 9 digit classification task~\citep{lecun2010mnist} using a 4-layer convolutional neural network (CNN) that has approximately 200K parameters.
We train the network using for 200 epochs using the ADAM algorithm~\citep{kingma2014adam} with 0.001 learning rate, $\beta_1=0.9$, and mini-batches of 128 examples.
Importantly, we fix the random seed that controls the initialization of weights and the shuffling of training data, making the training algorithm deterministic.
\cref{fig:mnist4vs9-cnn-x=n-deterministic} plots the expected generalization gap and the $f$-CMI bound of \cref{eq:f-cmi-bound-m=1}.
We see that the bound is not vacuous and is not too far from the expected generalization gap even when considering only 75 training examples. 
As shown in the \cref{fig:mnist-4vs-9-wide} of \cref{sec:exp-details-add-results}, if we increase the width of all layers 4 times, making the number of parameters approximately 3M, the results remain largely unchanged.

Next, we move away from binary classification and consider the CIFAR-10 classication task~\citep{Krizhevsky09learningmultiple}.
To construct a well-generalizing algorithm, we use the ResNet-50~\citep{he2016deep} network pretrained on the ImageNet~\citep{deng2009imagenet}, and fine-tune it for 40 epochs using SGD with mini-batches of size 64, 0.01 learning rate, 0.9 momentum, and standard data augmentations.
The results presented in \cref{fig:cifar-10-pretrained-resnet50} indicate that the $f$-CMI bound is always approximately 3 times larger than the expected generalization gap.
In particular, when $n=20000$, the expected generalization gap is 5\%, while the bound predicts 16\%.

Note that the weight-based information-theoretic bounds discussed in \cref{sec:w-gen-bounds} would give either infinite or trivial bounds for the deterministic algorithm described above.
Even when we make the training algorithm stochastic by randomizing the seed, the quantities like $I(W; S)$ still remain infinite, while both the generalization gap and the $f$-CMI bound do not change significantly (see \cref{fig:mnist-4vs-9-stochastic} of \cref{sec:exp-details-add-results}).
For this reason, we change the training algorithm to Stochastic Gradient Langevin Dynamics (SGLD)~\citep{gelfand1991recursive, welling2011bayesian} and compare the $f$-CMI-based bound against the specialized bound of \citet{negrea2019information} (see eq. (6) of \citep{negrea2019information}).
This bound (referred as SGLD bound here) is derived from a weight-based information-theoretic generalization bound, and depends on the the hyper-parameters of SGLD and on the variance of per-example gradients along the training trajectory.
The SGLD algorithm is trained for 40 epochs, with learning rate and inverse temperature schedules described in \cref{sec:exp-details-add-results}.
\cref{fig:mnist-4vs9-langevin-dynamics} plots the expected generalization gap, the expected test error, the $f$-CMI bound and the SGLD bound.
We see that the test accuracy plateaus after 16 epochs.
At this time and afterwards, the $f$-CMI bound closely follows the generalization gap, while the SGLD bound increases to very high values.
However, we see that the SGLD bound does better up to epoch 12.

The difference between the $f$-CMI bound and the SGLD bound becomes more striking when we change the dataset to be a subset of CIFAR-10 consisting of 20000 examples, and fine-tune a pretrained ResNet-50 with SGLD. As shown in \cref{fig:cifar-10-pretrained-resnet50-LD}, even after a single epoch the SGLD bound is approximately 0.45, while the generalization gap is around 0.02.
For comparison, the $f$-CMI is approximately 0.1 after one epoch of training.

Interestingly, \cref{fig:mnist-4vs9-langevin-dynamics} shows that the f-CMI bound is large in the early epochs, despite of the extremely small generalization gap.
In fact, a similar trend, albeit with lesser extent, is visible in the MNIST 4 vs 9 experiment, where a CNN is trained with a deterministic algorithm  (see \cref{fig:mnist-4vs-9-x=epochs} of \cref{sec:exp-details-add-results}).
This indicates a possible area of improvement for the $f$-CMI bound.



\section{Related Work}\label{sec:related}
This work is closely related to a rich literature of information-theoretic generalization bounds, some of which were discussed earlier~\citep{xu2017information, bassily2018learners, pensia2018generalization, negrea2019information, bu2020tightening, steinke2020reasoning, haghifam2020sharpened, hafez2020conditioning, alabdulmohsin2020towards, neu2021information, raginsky202110, esposito2021generalization}.
Most of these work derive generalization bounds that depend on a mutual information quantity measured between the output of the training algorithm and some quantity related to training data.
Different from this major idea, \citet{xu2017information} and \citet{russo2019much} discussed the idea of bounding generalization gap with the information between the input and the vector of loss functions computed on training examples.
This idea was later extended to the setting of conditional mutual information by \citet{steinke2020reasoning}.
This works are similar to ours in the sense that they move away from measuring information with weights, but they did not develop this line of reasoning enough to arrive to efficient bounds similar to \cref{corr:f-cmi-bound-m=1}.
Additionally, we believe that measuring information with the prediction function allows better interpretation and is easier to work with analytically.

Another related line of research are the stability-based bounds~\citep{bousquet2002stability, alabdulmohsin2015algorithmic, raginsky2016information,  bassily2016algorithmic, feldman2018calibrating, wang2016average, raginsky202110}.
In \cref{sec:w-gen-bounds} and \cref{sec:fcmi} we improve existing generalization bounds that use information stability.
In \cref{sec:stable-algorithms} we describe a technique of applying information stability bounds to deterministic algorithms.
The main idea is to add noise to predictions, but only for analysis purposes.
A similar idea, but in the context of measuring information content of an individual example, was suggested by \citet{harutyunyan2021estimating}.
In fact, our notion of test-stability defined in \cref{sec:stable-algorithms} comes very close to their definition of functional sample information.
A similar idea was recently used by \citet{neu2021information} in analyzing generalization performance of SGD.
More broadly this work is related to PAC-Bayes bounds and to classical generalization bounds. Please refer to the the survey by \citet{Jiang*2020Fantastic} for more information on these bounds.

Finally, our work has connections with attribute and membership inference attacks~\citep{shokri2017membership, yeom2018privacy, nasr2019comprehensive, gupta2021membership}.
Some of these works show that having a white-box access to models allows constructing better membership inference attacks, compared to having a black-box access.
This is analogous to our observation that prediction-based bounds are better than weight-based bounds. \citet{shokri2017membership} and \citet{yeom2018privacy} demonstrate that even in the case of \emph{black-box access to a well-generalizing} model, sometimes it is still possible to construct successful membership attacks.
This is in line with our observation that the $f$-CMI bound can be significantly large, despite of small generalization gap (see epoch 4 of Fig.~\ref{fig:mnist-4vs9-langevin-dynamics}).
This suggests a possible direction of improving the $f$-CMI-based bounds.



\bibliographystyle{abbrvnat}
\bibliography{main}

\appendix
\begin{appendices}
\let\clearpage\relax
\section{Additional statements and proofs}\label{sec:proofs}
This appendix includes the missing proofs of the results presented in the main text, additional results, and some helpful lemmas.

\begin{fact}[Thm. 5.2.1 of \citet{gray2011entropy}]
Let $P$ and $Q$ be two probability measures defined on the same measurable space $(\Omega, \FF)$, such that $P$ is absolutely continuous with respect to $Q$. Then the Donsker-Varadhan dual characterization of Kullback-Leibler divergence states that
\begin{equation}
    \KL{P}{Q} = \sup_{X} \cbr{\E_{P}\sbr{X} - \log\E_{Q}\sbr{e^{X}}},
\end{equation}
X is any random variable defined on $(\Omega, \FF)$ such that both $\E_{P}\sbr{X}$ and $\E_Q\sbr{e^X}$ exist.
\label{fact:mi-donsker-varadhan}
\end{fact}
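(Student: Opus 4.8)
The plan is to prove the identity by establishing its two halves separately: (i) that $\E_P\sbr{X} - \log\E_Q\sbr{e^X} \le \KL{P}{Q}$ for every random variable $X$ on $(\Omega,\FF)$ for which $\E_P\sbr{X}$ and $\E_Q\sbr{e^X}$ are well defined, and (ii) that this bound is tight, i.e.\ the supremum over such $X$ actually equals $\KL{P}{Q}$.

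For half (i), fix an admissible $X$; we may assume $0 < \E_Q\sbr{e^X} < \infty$, since otherwise the left-hand side is $-\infty$ (or the situation is degenerate). Introduce the \emph{tilted} (Gibbs) probability measure $Q_X$ on $(\Omega,\FF)$ defined by $\frac{\text{d}Q_X}{\text{d}Q} = e^{X}/\E_Q\sbr{e^X}$; since this density is strictly positive, $P \ll Q \ll Q_X$. By the chain rule for Radon--Nikodym derivatives, $\frac{\text{d}P}{\text{d}Q_X} = \frac{\text{d}P}{\text{d}Q}\cdot e^{-X}\,\E_Q\sbr{e^X}$, so integrating $\log\frac{\text{d}P}{\text{d}Q_X}$ against $P$ yields
\[
\KL{P}{Q_X} = \KL{P}{Q} - \E_P\sbr{X} + \log\E_Q\sbr{e^X}.
\]
Because $\KL{P}{Q_X}\ge 0$ (Gibbs' inequality, itself a consequence of Jensen applied to the convex function $-\log$), rearranging gives exactly $\E_P\sbr{X} - \log\E_Q\sbr{e^X} \le \KL{P}{Q}$, and taking the supremum over admissible $X$ gives one direction.

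For half (ii), it suffices to plug in the ``optimal'' candidate $X^\star = \log\frac{\text{d}P}{\text{d}Q}$. One computes $\E_Q\sbr{e^{X^\star}} = \int \frac{\text{d}P}{\text{d}Q}\,\text{d}Q = P(\Omega) = 1$, so the $\log$ term vanishes, while $\E_P\sbr{X^\star} = \int \log\frac{\text{d}P}{\text{d}Q}\,\text{d}P = \KL{P}{Q}$ by definition; hence the objective at $X^\star$ equals $\KL{P}{Q}$, which settles the case $\KL{P}{Q}<\infty$. When $\KL{P}{Q} = +\infty$ one instead uses the truncations $X^\star_c = \min\cbr{X^\star,\, c}$: these satisfy $\E_Q\sbr{e^{X^\star_c}} \le 1$, so their $\log$ term is $\le 0$, while $\E_P\sbr{X^\star_c} \uparrow +\infty$ as $c\to\infty$ by monotone convergence, so the supremum is $+\infty$ as well.

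The main obstacle is not conceptual but a matter of measure-theoretic bookkeeping under the sole hypothesis $P \ll Q$: one must verify that every expectation above is well defined in $[-\infty,+\infty]$, in particular that the negative part of $X^\star$ is $P$-integrable (this is precisely what makes $\KL{P}{Q}$ well defined, and follows from $\int_{\{\text{d}P/\text{d}Q<1\}}\frac{1}{\text{d}P/\text{d}Q}\,\text{d}P\le 1$); that the $P$-null set $\{\text{d}P/\text{d}Q = 0\}$, on which $X^\star = -\infty$, causes no trouble; and that the tilted density is genuinely integrable and strictly positive, so $Q_X$ is a bona fide probability measure with $Q \ll Q_X$ and the chain rule applies.
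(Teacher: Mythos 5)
The paper does not prove this statement at all: it is imported verbatim as a known result with a citation to \citet{gray2011entropy}, so there is no in-paper argument to compare against. Your proof is correct and is the standard one: the upper bound $\E_P\sbr{X}-\log\E_Q\sbr{e^X}\le\KL{P}{Q}$ via nonnegativity of $\KL{P}{Q_X}$ for the tilted measure $Q_X$ (equivalently, Jensen applied to $-\log$), and attainment via the witness $X^\star=\log\frac{\text{d}P}{\text{d}Q}$, with truncation handling the case $\KL{P}{Q}=+\infty$. Your identity $\KL{P}{Q_X}=\KL{P}{Q}-\E_P\sbr{X}+\log\E_Q\sbr{e^X}$ is exactly right, and you correctly flag the only delicate points: $P$-integrability of the negative part of $X^\star$ (your bound $\log(1/t)\le 1/t$ on $\cbr{\text{d}P/\text{d}Q<1}$ does the job) and the $P$-null set where $X^\star=-\infty$. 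If one insists that the supremum range over real-valued $X$, you should also truncate $X^\star$ from below, i.e.\ use $\max\cbr{\min\cbr{X^\star,c},-c}$ and let $c\to\infty$; this is a cosmetic fix and does not affect the argument. No gaps otherwise.
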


\begin{lemma}
If $X$ is a $\sigma$-subgaussian random variable with zero mean, then
\begin{equation}
\E e^{\lambda X^2} \le 1 + 8\lambda \sigma^2, \quad \forall \lambda \in \left[0, \frac{1}{4\sigma^2}\right).
\label{eq:subgaussian-square}
\end{equation}
\label{lemma:subgaussian-square}
\end{lemma}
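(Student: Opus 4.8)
The plan is to reduce everything to the standard subgaussian tail bound and then evaluate a one-dimensional integral. First I would record that a zero-mean $\sigma$-subgaussian $X$ satisfies the Chernoff-type tail bound $P(\abs{X}\ge t)\le 2\exp(-t^2/(2\sigma^2))$ for all $t\ge 0$: applying Markov's inequality to $e^{sX}$ and to $e^{-sX}$, using the subgaussian MGF bound $\E e^{sX}\le e^{\sigma^2 s^2/2}$ (which holds since $\E X = 0$), and optimizing over $s>0$ gives $P(X\ge t)\le e^{-t^2/(2\sigma^2)}$ and $P(-X\ge t)\le e^{-t^2/(2\sigma^2)}$, so a union bound yields the stated two-sided bound.

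Next I would use the layer-cake representation for the nonnegative random variable $X^2$. Writing $e^{\lambda X^2} = 1 + \int_0^{X^2}\lambda e^{\lambda t}\,dt = 1 + \int_0^\infty \lambda e^{\lambda t}\ind{t\le X^2}\,dt$ and taking expectations (Tonelli applies, as the integrand is nonnegative) gives
\[
\E e^{\lambda X^2} = 1 + \lambda\int_0^\infty e^{\lambda t}\,P\rbr{X^2\ge t}\,dt = 1 + \lambda\int_0^\infty e^{\lambda t}\,P\rbr{\abs{X}\ge\sqrt{t}}\,dt .
\]
Plugging in the tail bound from the previous step upper-bounds the right-hand side by $1 + 2\lambda\int_0^\infty e^{-(1/(2\sigma^2)-\lambda)t}\,dt$. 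For $\lambda\in[0,1/(4\sigma^2))$ the coefficient $1/(2\sigma^2)-\lambda$ is strictly positive, in fact $1/(2\sigma^2)-\lambda\ge 1/(4\sigma^2)$, so the integral converges and equals $\rbr{1/(2\sigma^2)-\lambda}^{-1}$, whence
\[
\E e^{\lambda X^2}\le 1 + \frac{2\lambda}{1/(2\sigma^2)-\lambda}\le 1 + \frac{2\lambda}{1/(4\sigma^2)} = 1 + 8\lambda\sigma^2 ,
\]
which is the claimed inequality; the case $\lambda=0$ is trivial since both sides equal $1$.

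There is no genuine obstacle here; the only points needing a little care are (i) getting the tail-bound constant and the factor of $2$ right, and (ii) noticing that it is precisely the restriction $\lambda<1/(4\sigma^2)$ — rather than the weaker $\lambda<1/(2\sigma^2)$ that merely keeps $\E e^{\lambda X^2}$ finite — which makes the denominator $1/(2\sigma^2)-\lambda$ large enough to collapse the bound into the clean linear form $1+8\lambda\sigma^2$. As an alternative one could expand $e^{\lambda X^2}$ as a power series and use the moment estimates $\E X^{2k}\le 2\,k!\,(2\sigma^2)^k$ implied by subgaussianity, but the tail-integral argument above is shorter and avoids bookkeeping of series constants.
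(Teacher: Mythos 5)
Your proof is correct, and it takes a genuinely different route from the paper's. You derive the two-sided Chernoff tail bound $P(\abs{X}\ge t)\le 2e^{-t^2/(2\sigma^2)}$ directly from the subgaussian MGF condition, apply the layer-cake identity $\E e^{\lambda X^2}=1+\lambda\int_0^\infty e^{\lambda t}P(X^2\ge t)\,dt$, and evaluate the resulting geometric integral; the observation that $\lambda<1/(4\sigma^2)$ forces $1/(2\sigma^2)-\lambda\ge 1/(4\sigma^2)$, collapsing the bound to $1+8\lambda\sigma^2$, is exactly right. The paper instead takes the route you mention only in passing at the end: it expands $e^{\lambda X^2}$ as a power series, invokes the moment bound $\E\abs{X}^k\le(2\sigma^2)^{k/2}k\,\Gamma(k/2)$ (cited from a standard reference rather than proved), and bounds the resulting geometric series $1+2\sum_{k\ge1}(2\lambda\sigma^2)^k$ by twice its first term under the same constraint $2\lambda\sigma^2\le 1/2$. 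The two arguments are of comparable length and land on the same constant; yours has the advantage of being fully self-contained (the tail bound follows from the definition by Markov plus optimization over $s$, whereas the moment inequality is imported), while the paper's series argument makes the role of the threshold $1/(4\sigma^2)$ as a radius-of-convergence-type condition slightly more transparent. Either proof is acceptable as a drop-in replacement for the other.
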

\begin{proof}
As $X$ is $\sigma$-subgaussian and $\E X = 0$, the $k$-th moment of $X$ can be bounded the following way~\citep{vershynin2018high}:
\begin{align}
    \E\abs{X}^k \le (2 \sigma^2)^{k/2} k \Gamma(k/2),\quad \forall k \in \mathbb{N},
    \label{eq:subgaussian-moments}
\end{align}
where $\Gamma(\cdot)$ is the Gamma function.
Continuing,
\begin{align}
\E e^{\lambda X^2} &= \E\sbr{\sum_{k=0}^\infty \frac{(\lambda X^2)^k}{k!}}\\
&=1 + \sum_{k=1}^\infty\E\rbr{\frac{(\sqrt{\lambda} |X|)^{2k}}{k!}}&&\text{(by Fubini's theorem)}\\
&\le 1 + \sum_{k=1}^\infty \rbr{\frac{(2\lambda \sigma^2)^{k} \cdot 2k \cdot \Gamma(k)}{k!}}&&\text{(by \cref{eq:subgaussian-moments})}\\
&= 1 + 2 \sum_{k=1}^\infty (2\lambda\sigma^2)^k.\label{eq:36}
\end{align}
When $\lambda\le1/(4\sigma^2)$, the infinite sum of \cref{eq:36} converges to a value that is at most twice of the first element of the sum. Therefore
\begin{align}
\E e^{\lambda X^2} \le 1 + 8 \lambda \sigma^2, \quad \forall \lambda \in \left[0, \frac{1}{4\sigma^2}\right).
\end{align}
\end{proof}

\begin{lemma}
Let $X$ and $Y$ be independent random variables. If $g$ is a measurable function such that $g(x,Y)$ is $\sigma$-subgaussian and $\E g(x,Y) = 0$ for all $x \in \XX$, then $g(X,Y)$ is also $\sigma$-subgaussian.
\label{lemma:zero-mean-sub-gaussianity}
\end{lemma}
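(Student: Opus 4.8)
The plan is to verify the subgaussianity inequality directly by conditioning on $X$ and then integrating out, using independence of $X$ and $Y$. First I would observe that the zero-mean hypothesis transfers: since $\E g(x,Y) = 0$ for every $x$ and $X \indep Y$, the tower property gives $\E g(X,Y) = \E_X\sbr{\E_Y g(X,Y)} = \E_X\sbr{0} = 0$, so it suffices to show $\E \exp(t\, g(X,Y)) \le \exp(\sigma^2 t^2 / 2)$ for all $t \in \mathbb{R}$.

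Next I would fix $t \in \mathbb{R}$ and condition on $X$. By independence, the conditional law of $Y$ given $X = x$ is just the law of $Y$, so
\begin{equation*}
\E\sbr{\exp(t\, g(X,Y)) \given X = x} = \E_Y\sbr{\exp(t\, g(x,Y))} \le \exp\rbr{\frac{\sigma^2 t^2}{2}},
\end{equation*}
where the inequality is exactly the defining property of $\sigma$-subgaussianity of $g(x,Y)$ (which has mean zero, so no centering term appears). Taking expectation over $X$ and applying the tower property once more yields $\E \exp(t\, g(X,Y)) \le \exp(\sigma^2 t^2/2)$, which is the claim.

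The only technical care needed is justifying the conditioning/integration step — i.e., that $(x,y) \mapsto \exp(t\, g(x,y))$ is jointly measurable and that Fubini's theorem applies so the iterated expectation equals the joint expectation. Measurability of $g$ is assumed, and since the integrand is nonnegative, Tonelli's theorem applies with no integrability precondition, so this is routine rather than an obstacle. In short, there is no hard part here: the lemma is an immediate consequence of the law of total expectation together with the pointwise subgaussianity hypothesis, and I would keep the write-up to a few lines.
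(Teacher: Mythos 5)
Your proof is correct and follows essentially the same route as the paper's: note that the mean is zero, write the joint expectation of $\exp(t\,g(X,Y))$ as an iterated expectation using independence, bound the inner expectation by the pointwise subgaussianity of $g(x,Y)$, and integrate out $X$. The remark about Tonelli's theorem is a small additional care the paper omits, but it does not change the argument.
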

\begin{proof}
As $\E g(X,Y) = 0$, we have that
\begin{align}
\mathbb{E}_{X,Y}\exp\cbr{t \rbr{g(X,Y) - \mathbb{E}_{X,Y}g(X,Y)}} &=  \mathbb{E}_{X,Y}\exp\cbr{t g(X,Y)}\\
&\hspace{-15em}=\E_{x\sim X}\sbr{\mathbb{E}_{Y}\exp\cbr{t g(x,Y)}}&&\text{\hspace{-10em}(by independence of $X$ and $Y$)}\\
&\hspace{-15em}\le\E_{x\sim X}e^{t^2 \sigma^2}&&\text{\hspace{-10em}(by subgaussianity of $g(x, Y)$)}\\
&\hspace{-15em}=e^{t^2 \sigma^2}.
\end{align}
\end{proof}

\subsection{Proof of \cref{lemma:mutual-info-lemma}}
We use the Donsker-Varadhan inequality (see Fact~\ref{fact:mi-donsker-varadhan}) for $I(\Phi; \Psi)$ and $\lambda g(\phi, \psi)$, where $\lambda \in \mathbb{R}$ is any constant:
\begin{align}
I(\Phi; \Psi) &= \KL{P_{\Phi, \Psi}}{P_{\Phi} \otimes P_{\Psi}}&&\text{(by definition)}\\
&\ge \E_{\Phi, \Psi}\sbr{\lambda g(\Phi, \Psi)} - \log\E_{\bar{\Phi}, \bar{\Psi}}\sbr{e^{\lambda g(\bar{\Phi}, \bar{\Psi})}}&&\text{(by Fact \ref{fact:mi-donsker-varadhan})}\\
&= \E_{\Phi, \Psi}\sbr{\lambda g(\Phi, \Psi)} - \log\E_{\Phi, \bar{\Psi}}\sbr{e^{\lambda g(\Phi, \bar{\Psi})}}&&\text{(as $P_{\bar{\Phi},\bar{\Psi}} = P_{\Phi, \bar{\Psi}}$)}.
\label{eq:donsker-varadhan}
\end{align}
The subgaussianity of $g(\Phi, \bar{\Psi})$ implies that
\begin{equation}
\log\E_{\Phi, \bar{\Psi}}\sbr{e^{\lambda\rbr{g(\Phi, \bar{\Psi}) - \E\sbr{g(\Phi, \bar{\Psi})}}}} \le \frac{\lambda^2 \sigma^2}{2}, \quad \forall \lambda \in \mathbb{R}.
\end{equation}
Plugging this into \cref{eq:donsker-varadhan}, we get that
\begin{align}
I(\Phi; \Psi) \ge \lambda\rbr{\E_{\Phi, \Psi}\sbr{g(\Phi, \Psi)} - \E_{\Phi, \bar{\Psi}}\sbr{g(\Phi, \bar{\Psi})}} - \frac{\lambda^2 \sigma^2}{2}.
\label{eq:donsker-varadhan-second-step}
\end{align}
Picking $\lambda$ to maximize the right-hand side, we get that
\begin{equation}
I(\Phi; \Psi) \ge \frac{1}{2\sigma^2}\rbr{\E_{\Phi, \Psi}\sbr{g(\Phi, \Psi)} - \E_{\Phi, \bar{\Psi}}\sbr{g(\Phi, \bar{\Psi})}}^2,
\end{equation}
which proves the first part of the lemma.

To prove the second part of the lemma, we are going to use Donsker-Varadhan inequality again, but for a different function. Let $\lambda \in \left[0, \frac{1}{4\sigma^2}\right)$ and define
\begin{equation}
    \tilde{g}(\phi,\psi) \triangleq \lambda \rbr{g(\phi,\psi) - \E_{\bar{\Psi}} g(\phi, \bar{\Psi})}^2.
\end{equation}
By assumption $\E \tilde{g}(\Phi,\Psi)$ exists.
Note that for each fixed $\phi$, the random variable $g(\phi,\bar{\Psi}) - \E_{\bar{\Psi}} g(\phi, \bar{\Psi})$ has zero mean and is $\sigma$-subgaussian, by the additional assumptions of the second part of the lemma.
As a result, $\E \exp\rbr{\tilde{g}(\Phi,\bar{\Psi})} = \E_{\phi\sim \Phi} \E_{\bar{\Psi}} \exp\rbr{\tilde{g}(\phi,\bar{\Psi})}$ also exists (by \cref{lemma:subgaussian-square}).
Therefore, Donsker-Varadhan is applicable for $\tilde{g}$ and gives the following:
\begin{align}
I(\Phi; \Psi) &\ge \E_{\Phi, \Psi}\sbr{\tilde{g}(\Phi, \Psi)} - \log\E_{\Phi, \bar{\Psi}}\sbr{e^{\tilde{g}(\Phi, \bar{\Psi})}}\\
&= \lambda \E\sbr{\rbr{g(\Phi,\Psi) - \E_{\bar{\Psi}} g(\Phi, \bar{\Psi})}^2} - \log \E_{\phi\sim\Phi}\E_{\bar{\Psi}} \exp\cbr{\lambda\rbr{g(\phi,\bar{\Psi}) - \E_{\bar{\Psi}} g(\phi, \bar{\Psi})}^2}\\
&\ge \lambda \E\sbr{\rbr{g(\Phi,\Psi) - \E_{\bar{\Psi}} g(\Phi, \bar{\Psi})}^2} - \log\rbr{1 + 8 \lambda \sigma^2}  && \text{\hspace{-10em}(by \cref{lemma:subgaussian-square})}.
\end{align}
Picking $\lambda \rightarrow 1 / (4\sigma^2)$, we get
\begin{equation}
  I(\Phi; \Psi) \ge \frac{1}{4\sigma^2} \E\sbr{\rbr{g(\Phi,\Psi) - \E_{\bar{\Psi}} g(\Phi, \bar{\Psi})}^2} - \log 3,
\end{equation}
which proves the desired inequality.
To prove the last part of the lemma, we just use the Markov's inequality and combine with this last result:
\begin{align}
P\rbr{\abs{g(\Phi, \Psi) - \E_{\bar{\Psi}}g(\Phi,\bar{\Psi})} \ge \epsilon} &= P\rbr{\rbr{g(\Phi, \Psi) - \E_{\bar{\Psi}}g(\Phi,\bar{\Psi})}^2 \ge \epsilon^2}\\
&\le \frac{\E\rbr{g(\Phi, \Psi) - \E_{\bar{\Psi}}g(\Phi,\bar{\Psi})}^2}{\epsilon^2}\\
&\le \frac{4 \sigma^2 (I(\Phi; \Psi) +\log 3)}{\epsilon^2}.
\end{align}

\subsection{Proof of \cref{thm:generalized-xu-raginsky}}
Let us fix a value of $U$. Conditioning on $U=u$ keeps the distribution of $W$ and $S$ intact, as $U$ is independent of $R$ and $S$. Let us set $\Phi = W, \Psi = S_u$,  and 
\begin{equation}
g(w, s_u) = \frac{1}{m}\sum_{i=1}^m \rbr{\ell(w, s_{u_i}) - \E_{Z'\sim \DD} \ell(w, Z')}.
\end{equation}
Note that for each value of $w$, the random variable $g(w,\bar{S}_u)$ is $\frac{\sigma}{\sqrt{m}}$-subgaussian, as it is a sum of $m$ i.i.d. $\sigma$-subgaussian random variables. Furthermore, $\forall w,\ \E g(w,\bar{S}_u) = 0$. These two statements together and \cref{lemma:zero-mean-sub-gaussianity} imply that $g(W, \bar{S}_u)$ is also $\frac{\sigma}{\sqrt{m}}$-subgaussian.
Therefore, with these choices of $\Phi, \Psi$, and $g$, \cref{lemma:mutual-info-lemma} gives that
\begin{equation}
\abs{\E_{S,R}\sbr{\frac{1}{m}\sum_{i=1}^m \ell(W,S_{u_i}) - \E_{Z'\sim \DD} \ell(W, Z')}} \le \sqrt{\frac{2\sigma^2}{m} I(W; S_u)}.
\end{equation}
Taking expectation over $u$ on both sides, then swapping the order between expectation over $u$ and absolute value (using Jensen's inequality), we get
\begin{equation}
\abs{\E_{S,R,u\sim U}\sbr{\frac{1}{m}\sum_{i=1}^m \ell(W,S_{u_i}) - \E_{Z'\sim \DD} \ell(W, Z')}} \le \E_{u\sim U}\sqrt{\frac{2\sigma^2}{m} I(W; S_u)}.
\end{equation}
This proves the first part of the theorem as the left-hand side is equal to the absolute value of the expected generalization gap, $\abs{\E_{S,R}\sbr{\LL(A,S,R) - \LL_\mathrm{emp}(A,S,R)}}$.

The second part of \cref{lemma:mutual-info-lemma} gives that
\begin{equation}
\E_{S,R}\rbr{\frac{1}{m}\sum_{i=1}^m \ell(W,S_{u_i}) - \E_{Z'\sim \DD} \ell(W, Z')}^2 \le \frac{4\sigma^2}{m}\rbr{I(W; S_u) + \log 3}.
\label{eq:85}
\end{equation}

When $u = [n]$, ~\cref{eq:85} becomes
\begin{equation}
\E_{S,R}\rbr{\LL(A,S,R) - \LL_\mathrm{emp}(A,S,R)}^2 \le \frac{4\sigma^2}{n}\rbr{I(W; S) + \log 3},
\end{equation}
proving the second part of the theorem.

Note that in case of $m=1$ and $u=\mathset{i}$, ~\cref{eq:85} becomes
\begin{equation}
\E_{S,R}\rbr{\ell(W,Z_i) - \E_{Z'\sim \DD} \ell(W, Z')}^2 \le 4\sigma^2\rbr{I(W; Z_i) + \log 3}.
\label{eq:86}
\end{equation}
Unfortunately, this result is not useful for bounding $\E_{S,R}\rbr{\LL(A,S,R) - \LL_\mathrm{emp}(A,S,R)}^2$, as for large $n$ the $\log 3$ term will likely dominate over $I(W; Z_i)$.

\subsection{Proof of \cref{prop:xu-raginsky-generalization-choice-of-m}}
Before we prove \cref{prop:xu-raginsky-generalization-choice-of-m}, we establish two useful lemmas that will be helpful also in the proofs of \cref{prop:cmi-sharpened-generalized-choice-of-m}, \cref{prop:fcmi-choice-of-m}, \cref{thm:generalized-xu-raginsky-stability}, \cref{thm:cmi-sharpened-generalized-stability} and \cref{thm:f-cmi-bound-stability}.

\begin{lemma}
Let $\Psi = (\Psi_1,\ldots,\Psi_n)$ be a collection of $n$ independent random variables and $\Phi$ be another random variable defined on the same probability space. Then
\begin{equation}
    \forall i \in [n],\ I(\Phi; \Psi_i) \le I(\Phi; \Psi_i \mid \Psi_{-i}),
\end{equation}
and
\begin{equation}
    I(\Phi; \Psi) \le \sum_{i=1}^n I(\Phi; \Psi_i \mid \Psi_{-i}).
\end{equation}
\label{lemma:stability-lemma}
\end{lemma}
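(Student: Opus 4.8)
The plan is to reduce both inequalities to a single elementary fact: conditioning on a variable that is (conditionally) independent of $\Psi_i$ can only increase the information $\Psi_i$ shares with $\Phi$. Concretely, I would first record the auxiliary claim that for random variables $A,B,C,D$ with $B \indep C$ given $D$, one has $I(A;B\mid D)\le I(A;B\mid C,D)$. This follows from writing the chain rule for mutual information in two ways, $I(B;(A,C)\mid D)=I(B;C\mid D)+I(B;A\mid C,D)=I(B;A\mid D)+I(B;C\mid A,D)$, then using $I(B;C\mid D)=0$ (the conditional independence) and nonnegativity of the conditional mutual information $I(B;C\mid A,D)\ge 0$. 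Rearranging yields $I(B;A\mid C,D)=I(B;A\mid D)+I(B;C\mid A,D)\ge I(B;A\mid D)$, and symmetry of mutual information finishes the claim.

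For the first inequality I would instantiate the claim with $A=\Phi$, $B=\Psi_i$, $C=\Psi_{-i}$, and no $D$. The only hypothesis needed, $\Psi_i \indep \Psi_{-i}$, is immediate from the mutual independence of $\Psi_1,\dots,\Psi_n$, and the claim gives $I(\Phi;\Psi_i)\le I(\Phi;\Psi_i\mid \Psi_{-i})$.

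For the second inequality I would start from the chain rule $I(\Phi;\Psi)=\sum_{i=1}^n I(\Phi;\Psi_i\mid \Psi_1,\dots,\Psi_{i-1})$ and bound the $i$-th summand by $I(\Phi;\Psi_i\mid \Psi_{-i})$. This is again the auxiliary claim, now with $A=\Phi$, $B=\Psi_i$, $C=(\Psi_{i+1},\dots,\Psi_n)$, and $D=(\Psi_1,\dots,\Psi_{i-1})$; the required conditional independence $\Psi_i\indep (\Psi_{i+1},\dots,\Psi_n)\mid(\Psi_1,\dots,\Psi_{i-1})$ again follows from mutual independence, since conditioning on a disjoint block of independent variables leaves the remaining variables independent. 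Summing the $n$ resulting inequalities gives $I(\Phi;\Psi)\le\sum_{i=1}^n I(\Phi;\Psi_i\mid \Psi_{-i})$.

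The only point requiring care is the bookkeeping of the chain-rule expansions and verifying that the conditional-independence hypotheses hold at each step; since chain rule, nonnegativity, and symmetry of mutual information are valid for general random variables via the KL-divergence definition adopted in the preliminaries, no separate treatment of the non-discrete case is needed. I do not anticipate a substantive obstacle here — the statement is essentially the classical observation that, for independent sources, the sum of erasure informations dominates the joint mutual information.
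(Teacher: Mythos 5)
Your proposal is correct and follows essentially the same route as the paper: the auxiliary claim you isolate (conditioning on a block that is conditionally independent of $\Psi_i$ cannot decrease its information with $\Phi$) is exactly the chain-rule-plus-nonnegativity computation the paper performs inline, both for the single-index inequality and, applied term by term to the expansion $I(\Phi;\Psi)=\sum_{i=1}^n I(\Phi;\Psi_i\mid \Psi_{<i})$, for the erasure-information bound. The only difference is organizational — you factor the common step into a reusable claim — which is a fine way to present the same argument.
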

\begin{proof}
First, for all $i \in [n]$,
\begin{align}
    I(\Phi; \Psi_i \mid \Psi_{-i}) &= I(\Phi; \Psi_i) - I(\Psi_i; \Psi_{-i}) + I(\Psi_i; \Psi_{-i} \mid \Phi)&&\text{(chain rule of MI)}\\
    &= I(\Phi; \Psi_i) + I(\Psi_i; \Psi_{-i} \mid \Phi)&&\text{($\Psi_i \indep \Psi_{-i}$)}\\
    &\ge I(\Phi; \Psi_i)&&\text{(nonnegativity of MI)}.
\end{align}
Second,
\begin{align}
I(\Phi; \Psi) &= \sum_{i=1}^n I(\Phi; \Psi_i \mid \Psi_{<i})\\
&= \sum_{i=1}^n\rbr{ I(\Phi; \Psi_i \mid \Psi_{<i},\Psi_{>i}) + I(\Psi_i; \Psi_{>i} \mid \Psi_{<i}) - I(\Psi_i; \Psi_{>i} \mid \Psi_{<i}, \Phi)}\\
&= \sum_{i=1}^n\rbr{ I(\Phi; \Psi_i \mid \Psi_{<i},\Psi_{>i}) - I(\Psi_i; \Psi_{>i} \mid \Psi_{<i}, \Phi)}\\
&\le \sum_{i=1}^n I(\Phi; \Psi_i \mid \Psi_{<i},\Psi_{>i})\\
&= \sum_{i=1}^n I(\Phi; \Psi_i \mid \Psi_{-i}).
\end{align}
The first two equalities above use the chain rule of mutual information, while the third one uses the independence of $\Psi_1,\ldots,\Psi_n$.
The inequality of the fourth line relies on the nonnegativity of mutual information.
\end{proof}
The quantity $\sum_{i=1}^n I(\Phi; \Psi_i \mid \Psi_{-i})$ is also known as erasure information and is denoted by $I^-(\Phi; \Psi)$~\citep{verdu2008information}.

\begin{lemma}
Let $S=(Z_1,\ldots,Z_n)$ be a collection of $n$ independent random variables and $\Phi$ be an arbitrary random variable defined on the same probability space. Then for any subset $u' \subseteq \{1,2,\ldots,n\}$ of size $m+1$ the following holds:
\begin{equation}
    I(\Phi; S_{u'}) \ge \frac{1}{m}\sum_{k \in u'} I(\Phi; S_{u' \setminus \mathset{k}}).
\end{equation}
\label{lemma:mi-hans-ineq}
\end{lemma}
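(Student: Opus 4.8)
The plan is to derive this ``Han-type'' inequality by combining the chain rule of mutual information with the erasure-information bound already established in \cref{lemma:stability-lemma}. Fix a subset $u' \subseteq [n]$ of size $m+1$. Since $Z_1,\ldots,Z_n$ are mutually independent, the sub-collection $S_{u'} = (Z_k)_{k\in u'}$ is again a collection of $m+1$ independent random variables, which is precisely the hypothesis under which \cref{lemma:stability-lemma} applies.

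The first step is to write, for each $k \in u'$, the chain-rule identity $I(\Phi; S_{u'}) = I(\Phi; S_{u'\setminus\{k\}}) + I(\Phi; Z_k \mid S_{u'\setminus\{k\}})$, viewing $S_{u'}$ as the pair $(S_{u'\setminus\{k\}}, Z_k)$. Summing over all $k \in u'$ gives
\begin{equation*}
(m+1)\, I(\Phi; S_{u'}) = \sum_{k\in u'} I(\Phi; S_{u'\setminus\{k\}}) + \sum_{k\in u'} I(\Phi; Z_k \mid S_{u'\setminus\{k\}}).
\end{equation*}
The second step is to apply the second part of \cref{lemma:stability-lemma} to the collection $(Z_k)_{k\in u'}$: since $S_{u'\setminus\{k\}}$ is exactly the collection $S_{u'}$ with its $k$-th coordinate erased, the lemma yields $I(\Phi; S_{u'}) \le \sum_{k\in u'} I(\Phi; Z_k \mid S_{u'\setminus\{k\}})$. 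Substituting this lower bound on the last sum into the displayed identity, and assuming $I(\Phi; S_{u'}) < \infty$ (the case $I(\Phi; S_{u'}) = \infty$ being trivial since each $I(\Phi; S_{u'\setminus\{k\}}) \le I(\Phi; S_{u'})$), we obtain $(m+1)\, I(\Phi; S_{u'}) \ge \sum_{k\in u'} I(\Phi; S_{u'\setminus\{k\}}) + I(\Phi; S_{u'})$; rearranging and dividing by $m$ gives the claimed bound.

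I do not anticipate a genuine obstacle here: the whole argument is a two-line consequence of the chain rule and \cref{lemma:stability-lemma}. The only points requiring mild care are (i) that \cref{lemma:stability-lemma} must be invoked for the restricted collection $(Z_k)_{k\in u'}$ rather than for all of $S$, which is legitimate because any sub-collection of independent variables is independent, and (ii) matching the index-removal notation $S_{u'\setminus\{k\}}$ with the ``$(\cdot)_{-k}$'' notation of that lemma when the ambient collection is taken to be $S_{u'}$.
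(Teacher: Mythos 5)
Your proof is correct and follows essentially the same route as the paper: summing the chain-rule identity over $k \in u'$ and then lower-bounding $\sum_{k\in u'} I(\Phi; Z_k \mid S_{u'\setminus\{k\}})$ by $I(\Phi; S_{u'})$ via the second part of \cref{lemma:stability-lemma} applied to the sub-collection $(Z_k)_{k\in u'}$. Your added remarks on the infinite case and on invoking the lemma for the restricted collection are sound but not needed beyond what the paper does.
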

\begin{proof}
\begin{align}
(m+1) I(\Phi; S_{u'}) &= \sum_{k \in u'} I(\Phi; S_{u'\setminus \mathset{k}}) + \sum_{k \in u'} I(\Phi; Z_k \mid S_{u'\setminus \mathset{k}})&&\text{(chain-rule of MI)}\\
&\ge \sum_{k \in u'} I(\Phi; S_{u'\setminus \mathset{k}}) + I(\Phi; S_{u'})&&\text{(second part of \cref{lemma:stability-lemma})}.
\end{align}
\end{proof}

\begin{proof}[\emph{\textbf{Proof of \cref{prop:xu-raginsky-generalization-choice-of-m}}}]
By \cref{lemma:mi-hans-ineq} with $\Phi=W$, for any subset $u'$ of size $m+1$ the following holds:
\begin{equation}
I(W; S_{u'}) \ge \frac{1}{m}\sum_{k \in u'} I(W; S_{u' \setminus \mathset{k}}).
\end{equation}
Therefore,
\begin{align}
\phi\rbr{\frac{1}{m+1} I(W; S_{u'})} &\ge \phi\rbr{\frac{1}{m(m+1)}\sum_{k \in u'} I(W; S_{u' \setminus \mathset{k}})}\\
&\ge \frac{1}{m+1}\sum_{k \in u'}\phi\rbr{\frac{1}{m} I(W; S_{u' \setminus \mathset{k}})}.&&\text{(by Jensen's inequality)}
\end{align}
Taking expectation over $u'$ on both sides, we have hat
\begin{align}
\E_{U'}\phi\rbr{\frac{1}{m+1} I(W; S_{u'})} &\ge \E_{U'}\sbr{\frac{1}{m+1}\sum_{k \in u'}\phi\rbr{\frac{1}{m} I(W; S_{u' \setminus \mathset{k}})}}\\
&= \sum_{u} \alpha_u \phi\rbr{\frac{1}{m}I(W; S_u)}.
\end{align}
For each subset $u$ of size $m$, the coefficient $\alpha_u$ is equal to
\begin{equation}
    \alpha_u = \frac{1}{\Choose{n}{m+1}}\cdot\frac{1}{m+1}\cdot(n-m) = \frac{1}{\Choose{n}{m}}.
\end{equation}
Therefore
\begin{equation}
    \sum_{u} \alpha_u \phi\rbr{\frac{1}{m}I(W; S_u)} = \E_{U}\phi\rbr{\frac{1}{m} I(W; S_u)}.
\end{equation}
\end{proof}

\subsection{Proof of \cref{thm:generalized-xu-raginsky-stability}}
Using \cref{lemma:stability-lemma} with $\Phi = W$ and $\Psi = S$, we get that 
\begin{equation}
    I(W; Z_i) \le I(W; Z_i \mid Z_{-i}) \text{\quad and \quad } I(W; S) \le \sum_{i=1}  I(W; Z_i \mid Z_{-i}).
\end{equation}
Plugging these upper bounds into \cref{thm:generalized-xu-raginsky} completes the proof.

\subsection{Proof of \cref{thm:cmi-sharpened-generalized}}
Let us condition on $U=u$ and $\tilde{Z}=\tilde{z}$. Let $\Phi=A(\tilde{z}_S, R)$, $\Psi=S_u$, and
\begin{equation}
    g(\phi,\psi) = \frac{1}{m}\sum_{i=1}^m \rbr{\ell(\phi,(\tilde{z}_{u})_{i,\psi_i}) - \ell(\phi,(\tilde{z}_{u})_{i,\text{neg}(\psi_i)})}.
\end{equation}
Note that by our assumption, for any $w \in \WW$ each summand of $g(w,\bar{S}_u)$ is in $[-1, +1]$, hence is a $1$-subgaussian random variable. Furthermore, each of these summands has zero mean. As the average of $m$ independent and zero-mean $1$-subgaussian variables is $\frac{1}{\sqrt{m}}$-subgaussian, then $g(w, \bar{S}_u)$ is $\frac{1}{\sqrt{m}}$-subgaussian for each $w\in \WW$. Additionally, $\forall w\in\WW,\ \E_{\bar{S}} g(w, \bar{S}_u) = 0$. Therefore, by \cref{lemma:zero-mean-sub-gaussianity}, $g(\Phi, \bar{S}_u)$ is also $\frac{1}{\sqrt{m}}$-subgaussian.
With these choices of $\Phi, \Psi$, and $g(\phi,\psi)$, we use \cref{lemma:mutual-info-lemma}.
To avoid notational clutter we will denote the random variable $A(\tilde{z}_S, R)$ by $W$, hiding its dependence on $\tilde{z}$, $R$, $S$.\footnote{Expressions like $\E_{\tilde{z}\sim\tilde{Z}} W$ will mean $\E_{\tilde{z}\sim \tilde{Z}} A(\tilde{z}_S, R)$.}
First,
\begin{align}
    \E_{S, R} g(W, S_u) &= \E_{S, R}\sbr{\frac{1}{m}\sum_{i=1}^m \rbr{\ell(W,(\tilde{z}_{u})_{i,(S_u)_i}) - \ell(W,(\tilde{z}_{u})_{i,\text{neg}((S_u)_i)})}}.
\end{align}
Second,
\begin{align}
    \E_{S,R,\bar{S}} \sbr{g(W, \bar{S}_u)} = 0.
\end{align}
Therefore, \cref{lemma:mutual-info-lemma} gives
\begin{equation}
\abs{\E_{S, R}\sbr{\frac{1}{m}\sum_{i=1}^m \rbr{\ell(W,(\tilde{z}_{u})_{i,(S_u)_i}) - \ell(W,(\tilde{z}_{u})_{i,\text{neg}((S_u)_i)})}}} \le \sqrt{\frac{2}{m}I(W; S_u)}.
\end{equation}
Taking expectation over $u$ on both sides and using Jensen's inequality to switch the order of absolute value and expectation of $u$, we get
\begin{equation}
\abs{\E_{S, R, u\sim U}\sbr{\frac{1}{m}\sum_{i=1}^m \rbr{\ell(W,(\tilde{z}_{u})_{i,(S_u)_i}) - \ell(W,(\tilde{z}_{u})_{i,\text{neg}((S_u)_i)})}}} \le \E_{u\sim U}\sqrt{\frac{2}{m}I(W; S_u)},
\end{equation}
which reduces to
\begin{equation}
\abs{\E_{S, R}\sbr{\frac{1}{n}\sum_{i=1}^n \rbr{\ell(W,\tilde{z}_{i,S_i}) - \ell(W,\tilde{z}_{i,\text{neg}(S_i)})}}} \le \E_{u\sim U}\sqrt{\frac{2}{m}I(W; S_u)}.
\label{eq:cmi-generalized-bound-for-fixed-z}
\end{equation}
This can be seen as bounding the expected generalization gap for a fixed $\tilde{z}$.
Taking expectation over $\tilde{z}$ on both sides, and then using Jensen's inequality to switch the order of absolute value and expectation of $\tilde{z}$, we get
\begin{equation}
\abs{\E_{\tilde{z}\sim \tilde{Z}, S, R}\sbr{\frac{1}{n}\sum_{i=1}^n \rbr{\ell(W,\tilde{z}_{i,S_i}) - \ell(W,\tilde{z}_{i,\text{neg}(S_i)})}}} \le \E_{\tilde{z}\sim\tilde{Z},u\sim U}\sqrt{\frac{2}{m}I(W; S_u)}
\end{equation}
Finally, noticing that left-hand side is equal to the absolute value of the expected generalization gap, $\abs{\E_{\tilde{Z},S,R}\sbr{\LL(A,\tilde{Z}_S,R) - \LL_\mathrm{emp}(A,\tilde{Z}_S,R)}}$, completes the proof of the first part of this theorem.

When $u = [n]$, applying the second part of \cref{lemma:mutual-info-lemma} gives
\begin{equation}
\E_{S, R}\rbr{\frac{1}{n}\sum_{i=1}^n \rbr{\ell(W,\tilde{z}_{i,S_i}) - \ell(W,\tilde{z}_{i,\text{neg}(S_i)})}}^2 \le \frac{4}{n} (I(W; S) + \log 3).
\end{equation}
Taking expectation over $\tilde{z}$, we get
\begin{equation}
\underbrace{\E_{\tilde{z}\sim \tilde{Z}, S, R}\rbr{\frac{1}{n}\sum_{i=1}^n \rbr{\ell(W,\tilde{z}_{i,S_i}) - \ell(W,\tilde{z}_{i,\text{neg}(S_i)})}}^2}_B \le \frac{4}{n} \E_{\tilde{z}\sim\tilde{Z}}(I(W; S) + \log 3).
\end{equation}
Continuing,
\begin{align}
\E_{\tilde{Z},S,R}\rbr{\LL(A,\tilde{Z}_S,R) - \LL_\mathrm{emp}(A,\tilde{Z}_S,R)}^2 &= \E_{\tilde{z}\sim \tilde{Z}, S, R}\rbr{\frac{1}{n}\sum_{i=1}^n \ell(W,\tilde{z}_{i,S_i}) - \E_{Z'\sim\DD}\ell(W,Z')}^2\\
&\hspace{-18em}\le 2B + 2\E_{\tilde{z}\sim \tilde{Z}, S, R}\rbr{\frac{1}{n}\sum_{i=1}^n \ell(W,\tilde{z}_{i,\text{neg}(S_i)}) - \E_{Z'\sim\DD}\ell(W,Z')}^2\\
&\hspace{-18em}= 2B + 2\E_{\tilde{z}\sim \tilde{Z}, S, R}\rbr{\frac{1}{n}\sum_{i=1}^n \rbr{\ell(W,\tilde{z}_{i,\text{neg}(S_i)}) - \E_{Z'\sim\DD}\ell(W,Z')}}^2\\
&\hspace{-18em}= 2B + 2\E_{\tilde{Z}_S, \tilde{Z}_{\text{neg}(S)}, R}\rbr{\frac{1}{n}\sum_{i=1}^n \rbr{\ell(A(\tilde{Z}_S,R),(\tilde{Z}_{\text{neg}(S)})_i) - \E_{Z'\sim\DD}\ell(A(\tilde{Z}_S,R),Z')}}^2\\
&\hspace{-18em}= 2B + 2\E_{\tilde{Z}_S, R}\E_{\tilde{Z}_{\text{neg}(S)} \mid \tilde{Z}_S, R}\rbr{\frac{1}{n}\sum_{i=1}^n \rbr{\ell(A(\tilde{Z}_S,R),(\tilde{Z}_{\text{neg}(S)})_i) - \E_{Z'\sim\DD}\ell(A(\tilde{Z}_S,R),Z')}}^2.\label{eq:117}
\end{align}

Note that as $\tilde{Z}_{\text{neg}(S)}$ is independent of $(\tilde{Z}_S, R)$, conditioning on $(\tilde{Z}_S, R)$ does not change its distribution, implying that its components stay independent of each other.
For each fixed values $\tilde{Z}_S=z$ and $R=r$, the inner part of the outer expectation in \cref{eq:117} becomes
\begin{equation}
\E_{\tilde{Z}_{\text{neg}(S)}}\rbr{\frac{1}{n}\sum_{i=1}^n \rbr{\ell(A(z,r),(\tilde{Z}_{\text{neg}(S)})_i) - \E_{Z'\sim\DD}\ell(A(z,r),Z')}}^2,
\end{equation}
which is equal to
\begin{equation}
\E_{Z'_1,Z'_2,\ldots,Z'_n}\rbr{\frac{1}{n}\sum_{i=1}^n \rbr{\ell(A(z,r),Z'_i) - \E_{Z'_i}\ell(A(z,r),Z'_i)}}^2,
\label{eq:120}
\end{equation}
where $Z'_1,\ldots,Z'_n$ are $n$ i.i.d. samples from $\DD$.
The expression in \cref{eq:120} is simply the variance of the average of $n$ i.i.d $[0,1]$-bounded random variables. Hence, it can be bounded by $1/(4n)$.
Connecting this result with \cref{eq:117}, we get
\begin{align}
\E_{\tilde{Z},S,R}\rbr{\LL(A,\tilde{Z}_S,R) - \LL_\mathrm{emp}(A,\tilde{Z}_S,R)}^2 &\le 2B + \frac{1}{2n}\\
&\le 2\E_{\tilde{z}\sim\tilde{Z}}\sbr{\frac{4}{n} (I(W; S) + \log 3)} + \frac{1}{2n}\\
&\le \E_{\tilde{z}\sim\tilde{Z}}\sbr{\frac{8}{n} (I(W; S) + 2)}.
\end{align}

\subsection{Proof of \cref{prop:cmi-sharpened-generalized-choice-of-m}}
The proof follows that of \cref{prop:xu-raginsky-generalization-choice-of-m}, with the only difference that $W$ is replaced with $A(\tilde{z}_S,R)$.

\subsection{\cref{thm:cmi-sharpened-generalized-stability}}
\begin{theorem}
If $\ell(w,z) \in [0,1], \forall w \in \WW, z \in \ZZ$, then
\begin{equation}
    \abs{\E_{\tilde{Z},S,R}\sbr{\LL(A,\tilde{Z}_S,R) - \LL_\mathrm{emp}(A,\tilde{Z}_S,R)}} \le \E_{\tilde{z}\sim \tilde{Z}}\sbr{\frac{1}{n}\sum_{i=1}^n\sqrt{2 I(A(\tilde{z}_S, R); S_i \mid S_{-i})}},
    \label{eq:cmi-sharpened-generalized-stability}
\end{equation}
and
\begin{equation}
  \E_{\tilde{Z},S,R}\rbr{\LL(A,\tilde{Z}_S,R) - \LL_\mathrm{emp}(A,\tilde{Z}_S,R)}^2 \le \frac{8}{n} \rbr{\E_{\tilde{z}\sim\tilde{Z}}\sbr{\sum_{i=1}^n I(A(\tilde{z}_{S},R); S_i \mid S_{-i})} + 2}.
\end{equation}
\label{thm:cmi-sharpened-generalized-stability}
\end{theorem}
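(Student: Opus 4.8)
The plan is to obtain both inequalities as direct corollaries of \cref{thm:cmi-sharpened-generalized}, by replacing the mutual-information terms appearing there with the conditional (erasure-type) mutual-information terms via \cref{lemma:stability-lemma}. The key observation is that for a fixed value $\tilde{z}$ of $\tilde{Z}$, the coordinates $S_1,\dots,S_n$ of $S\sim\mathrm{Uniform}(\{0,1\}^n)$ are independent random variables, and $A(\tilde{z}_S,R)$ is a random variable defined on the same probability space, so \cref{lemma:stability-lemma} applies with $\Phi = A(\tilde{z}_S,R)$ and $\Psi = S$.

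For the first inequality, I would specialize \cref{eq:cmi-sharpened-generalized} of \cref{thm:cmi-sharpened-generalized} to $m=1$, so that $U$ is uniform over singletons and the right-hand side becomes $\E_{\tilde{z}\sim\tilde{Z}}\sbr{\frac1n\sum_{i=1}^n\sqrt{2\,I(A(\tilde{z}_S,R);S_i)}}$. The first part of \cref{lemma:stability-lemma} then gives $I(A(\tilde{z}_S,R);S_i)\le I(A(\tilde{z}_S,R);S_i\mid S_{-i})$ for every $i$, and monotonicity of $t\mapsto\sqrt{t}$ together with monotonicity of the expectations over $i$ and $\tilde{z}$ yields \cref{eq:cmi-sharpened-generalized-stability}.

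For the second inequality, I would start from the squared-gap bound in \cref{thm:cmi-sharpened-generalized}, namely $\E_{\tilde{Z},S,R}\rbr{\LL(A,\tilde{Z}_S,R) - \LL_\mathrm{emp}(A,\tilde{Z}_S,R)}^2\le\frac8n\rbr{\E_{\tilde{z}\sim\tilde{Z}}I(A(\tilde{z}_S,R);S)+2}$, and apply the second part of \cref{lemma:stability-lemma} to replace $I(A(\tilde{z}_S,R);S)$ by the larger quantity $\sum_{i=1}^n I(A(\tilde{z}_S,R);S_i\mid S_{-i})$ inside the expectation over $\tilde{z}$. Since $t\mapsto\frac8n(t+2)$ is increasing, this gives the claimed bound.

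There is no substantial obstacle here: the argument is the conditional-mutual-information analogue of the proof of \cref{thm:generalized-xu-raginsky-stability}. The only points requiring minor care are (i) checking that the independence hypothesis of \cref{lemma:stability-lemma} is met, which holds because the pair-selection bits $S_i$ are i.i.d.; and (ii) correctly reading off the $m=1$ instance of \cref{thm:cmi-sharpened-generalized} so that the averaging over $i$ stays outside the square root, which is precisely what makes \cref{eq:cmi-sharpened-generalized-stability} a nontrivial (and, in general, tighter) companion to a bound with the sum placed inside the root.
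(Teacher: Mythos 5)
Your proposal is correct and follows essentially the same route as the paper: the paper's proof also fixes $\tilde{z}$, applies \cref{lemma:stability-lemma} with $\Phi = A(\tilde{z}_S,R)$ and $\Psi = S$ to get $I(A(\tilde{z}_S,R);S_i)\le I(A(\tilde{z}_S,R);S_i\mid S_{-i})$ and $I(A(\tilde{z}_S,R);S)\le\sum_{i=1}^n I(A(\tilde{z}_S,R);S_i\mid S_{-i})$, and then plugs these into \cref{thm:cmi-sharpened-generalized}. Your additional care about the $m=1$ specialization and the independence of the $S_i$ is consistent with the paper's argument.
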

\begin{proof}For a fixed $\tilde{z}$, using \cref{lemma:stability-lemma} with $\Phi = A(\tilde{z}_S,R)$ and $\Psi = S$, we get that 
\begin{equation}
    I(A(\tilde{z}_S,R); S_i) \le I(A(\tilde{z}_S,R); S_i \mid S_{-i}),
\end{equation}
and
\begin{equation}
    I(A(\tilde{z}_S,R); S) \le \sum_{i=1}^n I(A(\tilde{z}_S,R); S_i \mid S_{-i}).
\end{equation}
Using these upper bounds in \cref{thm:cmi-sharpened-generalized} proves the theorem.
\end{proof}

\subsection{Proof of \cref{thm:f-cmi-bound}}
Let us condition on $U=u$ and $\tilde{Z}=\tilde{z}$.
To simplify the notation we will use $F$ as a shorthand for $f(\tilde{z}_S,\tilde{x},R)$, the predictions on the all $n$ pairs after training on $\tilde{z}_{S}$ with randomness $R$.
Expressions like $\E_{\tilde{z}\sim\tilde{Z}} F$ will mean $\E_{\tilde{z}\sim\tilde{Z}} f(\tilde{z}_S,\tilde{x},R)$.
Furthermore, when $\widehat{y}$ and $y$ are collection of predictions and labels, we will use $\ell(\widehat{y}, y)$ to denote the average loss.

We are going to use \cref{lemma:mutual-info-lemma} with $\Phi=F_u$, $\Psi=S_u$, and 
\begin{align}
    g(\phi,\psi) &= \ell(\phi_\psi,(\tilde{y}_u)_{\psi}) - \ell(\phi_{\text{neg}(\psi)},(\tilde{y}_u)_{\text{neg}(\psi)})\\
    &=\frac{1}{m}\rbr{\sum_{i=1}^m \ell(\phi_{i,\psi_i},(\tilde{y}_u)_{i,\psi_i}) - \ell(\phi_{i,\text{neg}(\psi)_i},(\tilde{y}_u)_{i,\text{neg}(\psi)_i})}.
\end{align}
The function $g(\phi,\psi)$ computes the generalization gap measured on pairs of the examples specified by subset $u$, assuming that predictions are given by $\phi$ and the training/test set split is given by $\psi$.
Note that by our assumption, for any $\phi$ each summand of $g(\phi,\bar{S}_u)$ is a $1$-subgaussian random variable.
Furthermore, each of these summands has zero mean. 
As the average of $m$ independent and zero-mean $1$-subgaussian variables is $\frac{1}{\sqrt{m}}$-subgaussian, then $g(\phi, \bar{S}_u)$ is $\frac{1}{\sqrt{m}}$-subgaussian for each possible $\phi$.
Additionally, $\forall \phi \in \mathcal{K}^{m\times2},\ \E_{\bar{S}} g(\phi, \bar{S}_u) = 0$.
By \cref{lemma:zero-mean-sub-gaussianity}, $g(F_u, \bar{S}_u)$ is also $\frac{1}{\sqrt{m}}$-subgaussian.
Hence, these choices of $\Phi, \Psi$, and $g(\phi,\psi)$ satisfy the assumptions of \cref{lemma:mutual-info-lemma}. 
We have that
\begin{align}
    \E_{S,R} g(F_u, S_u) &= \E_{S, R}\sbr{\ell((F_u)_{S_u},(\tilde{y}_u)_{S_u})- \ell((F_u)_{\mathrm{neg}(S)_u},(\tilde{y}_u)_{\mathrm{neg}(S)_u}},
\end{align}
and
\begin{align}
    \E_{F_u,\bar{S}_u} \sbr{g(F_u, \bar{S}_u)} = 0.
\end{align}
Therefore, the first part of \cref{lemma:mutual-info-lemma} gives
\begin{equation}
\abs{\E_{S, F}\sbr{\ell((F_u)_{S_u},(\tilde{y}_u)_{S_u})- \ell((F_u)_{\mathrm{neg}(S)_u},(\tilde{y}_u)_{\mathrm{neg}(S)_u}}} \le \sqrt{\frac{2}{m}I(F_u; S_u)}.
\end{equation}
Taking expectation over $u$ on both sides, and then using Jensen's inequality to swap the order of absolute value and expectation of $u$, we get
\begin{equation}
\abs{\E_{S, R, u\sim U}\sbr{\ell((F_u)_{S_u},(\tilde{y}_u)_{S_u})- \ell((F_u)_{\mathrm{neg}(S)_u},(\tilde{y}_u)_{\mathrm{neg}(S)_u})}} \le\E_{u\sim U} \sqrt{\frac{2}{m}I(F_u; S_u)},
\end{equation}
which reduces to
\begin{equation}
\abs{\E_{S, R}\sbr{\ell(F_S,\tilde{y}_S)- \ell(F_{\mathrm{neg}(S)},\tilde{y}_{\mathrm{neg}(S)})}} \le \E_{u\sim U} \sqrt{\frac{2}{m}I(F_u; S_u)}.
\label{eq:fcmi-generalized-bound-for-fixed-z}
\end{equation}
This can be seen as bounding the expected generalization gap for a fixed $\tilde{z}$.
Taking expectation over $\tilde{z}$ on both sides and using Jensen's inequality to switch the order of absolute value and expectation of $\tilde{z}$, we get
\begin{equation}
\abs{\E_{\tilde{z}\sim \tilde{Z},S, R}\sbr{\ell(F_S,\tilde{y}_S)- \ell(F_{\mathrm{neg}(S)},\tilde{y}_{\mathrm{neg}(S)})}} \le \E_{\tilde{z}\sim\tilde{Z},u\sim U} \sqrt{\frac{2\sigma^2}{m}I(F_u; S_u)}.
\end{equation}
Noticing that the left-hand side is equal to the absolute value of the expected generalization gap, $\E_{\tilde{Z},R,S}\sbr{\LL(f,\tilde{Z}_S,R) - \LL_\mathrm{emp}(f,\tilde{Z}_S,R)}$, completes the proof of the first part of the theorem.

When $u = [n]$, applying the second part of \cref{lemma:mutual-info-lemma} gives
\begin{equation}
\E_{S, R}\rbr{\ell(F_S,\tilde{y}_S)- \ell(F_{\mathrm{neg}(S)},\tilde{y}_{\mathrm{neg}(S)})}^2 \le \frac{4}{n} (I(F; S) + \log 3).
\end{equation}
Taking expectation over $\tilde{z}$, we get
\begin{equation}
\underbrace{\E_{\tilde{z}\sim\tilde{Z}, S, R}\rbr{\ell(F_S,\tilde{y}_S)- \ell(F_{\mathrm{neg}(S)},\tilde{y}_{\mathrm{neg}(S)})}^2}_B \le \E_{\tilde{z}\sim\tilde{Z}}\sbr{\frac{4}{n} (I(F; S) + \log 3)}.
\end{equation}
Continuing,
\begin{align}
\E_{\tilde{Z},R,S}\rbr{\LL(f,\tilde{Z}_S,R) - \LL_\mathrm{emp}(f,\tilde{Z}_S,R)}^2
&= \E_{\tilde{z}\sim \tilde{Z}, S, R}\rbr{\ell(F_S,\tilde{y}_S) - \E_{Z'\sim\DD}\ell(f(\tilde{z}_S,X',R),Y')}^2\\
&\hspace{-18em}\le 2B + 2\E_{\tilde{z}\sim \tilde{Z},S,R}\rbr{\ell(F_{\text{neg}(S)},\tilde{y}_{\text{neg}(S)}) - \E_{Z'\sim\DD}\ell(f(\tilde{z}_S,X',R),Y')}^2\\
&\hspace{-18em}= 2B + 2\E_{\tilde{z}\sim \tilde{Z},S,R}\rbr{\frac{1}{n}\sum_{i=1}^n\rbr{\ell(F_{i,\text{neg}(S)_i},\tilde{y}_{i,\text{neg}(S)_i}) - \E_{Z'\sim\DD}\ell(f(\tilde{z}_S,X',R),Y')}}^2\\
&\hspace{-18em}= 2B + 2\E_{\tilde{Z}_S,R,\tilde{Z}_{\text{neg}(S)}}\rbr{\frac{1}{n}\sum_{i=1}^n\rbr{\ell(f(\tilde{Z}_S,\tilde{X}_{\text{neg}(S)},R)_i,(\tilde{Y}_{\text{neg}(S)})_i) - \E_{Z'\sim\DD}\ell(f(\tilde{Z}_S,X',R),Y')}}^2\\
&\hspace{-18em}= 2B + 2\E_{\tilde{Z}_S,R}\E_{\tilde{Z}_{\text{neg}(S)} \mid \tilde{Z}_S,R}\rbr{\frac{1}{n}\sum_{i=1}^n\rbr{\ell(f(\tilde{Z}_S,\tilde{X}_{\text{neg}(S)},R)_i,(\tilde{Y}_{\text{neg}(S)})_i) - \E_{Z'\sim\DD}\ell(f(\tilde{Z}_S,X',R),Y')}}^2.
\label{eq:136}
\end{align}

Note that as $\tilde{Z}_{\text{neg}(S)}$ is independent of $(\tilde{Z}_S, R)$, conditioning on $(\tilde{Z}_S, R)$ does not change its distribution, implying that its components stay independent of each other.
For each fixed values $\tilde{Z}_S=z$ and $R=r$, the inner part of the expectation in \cref{eq:136} becomes
\begin{equation}
\E_{\tilde{Z}_{\text{neg}(S)}}\rbr{\frac{1}{n}\sum_{i=1}^n\rbr{\ell(f(z,\tilde{X}_{\text{neg}(S)},r)_i,(\tilde{Y}_{\text{neg}(S)})_i) - \E_{Z'\sim\DD}\ell(f(z,X',r),Y')}}^2,
\end{equation}
which is equal to
\begin{equation}
\E_{Z'_1,Z'_2,\ldots,Z'_n}\rbr{\frac{1}{n}\sum_{i=1}^n\rbr{\ell(f(z,X'_i,Y'_i) - \E_{Z'_i}\ell(f(z,X'_i,r),Y'_i)}}^2,
\label{eq:138}
\end{equation}
where $Z'_1,\ldots,Z'_n$ are $n$ i.i.d. samples from $\DD$.
The expression in \cref{eq:138} is simply the variance of the average of $n$ i.i.d $[0-1]$-bounded random variables. Hence, it can be bounded by $1/(4n)$.
Connecting this result with \cref{eq:136}, we get
\begin{align}
\E_{\tilde{Z},R,S}\rbr{\LL(f,\tilde{Z}_S,R) - \LL_\mathrm{emp}(f,\tilde{Z}_S,R)}^2 &\le 2B + \frac{1}{2n}\\
&\le 2\E_{\tilde{z}\sim\tilde{Z}}\sbr{\frac{4}{n} (I(F; S) + \log 3)} + \frac{1}{2n}\\
&\le \E_{\tilde{z}\sim\tilde{Z}}\sbr{\frac{8}{n} (I(F; S) + 2)}.
\end{align}

\subsection{Proof of \cref{prop:fcmi-choice-of-m}}
The proof closely follows that of \cref{prop:xu-raginsky-generalization-choice-of-m}. The only difference is that $f(\tilde{z}_S,\tilde{x}_{u},R)$ depends on $u$, while $W$ does not.

If we fix a subset $u'$ of size $m+1$, set $\Phi=f(\tilde{z}_S,\tilde{x}_{u'},R))$, and use \cref{lemma:mi-hans-ineq}, we get
\begin{align}
I(f(\tilde{z}_S,\tilde{x}_{u'},R); S_{u'}) &\ge \frac{1}{m}\sum_{k \in u'} I(f(\tilde{z}_S,\tilde{x}_{u'},R); S_{u' \setminus \mathset{k}})\\
&\hspace{-7em}\ge \frac{1}{m}\sum_{k \in u'} I(f(\tilde{z}_S,\tilde{x}_{u'\setminus\mathset{k}},R); S_{u' \setminus \mathset{k}}).&&\text{\hspace{-5em}{\small(removing predictions on pair $k$ can not  increase MI)}}
\end{align}
Therefore,
\begin{align}
\phi\rbr{\frac{1}{m+1} I(f(\tilde{z}_S,\tilde{x}_{u'},R); S_{u'})} &\ge \phi\rbr{\frac{1}{m(m+1)}\sum_{k \in u'} I(f(\tilde{z}_S,\tilde{x}_{u'\setminus\mathset{k}},R); S_{u' \setminus \mathset{k}})}\\
&\hspace{-10em}\ge \frac{1}{m+1}\sum_{k \in u'}\phi\rbr{\frac{1}{m} I(f(\tilde{z}_S,\tilde{x}_{u'\setminus\mathset{k}},R); S_{u' \setminus \mathset{k}})}&&\text{\hspace{-10em}(by Jensen's inequality)}
\end{align}
Taking expectation over $U'$ on both sides, we have
\begin{align}
\E_{U'}\phi\rbr{\frac{1}{m+1} I(f(\tilde{z}_S,\tilde{x}_{u'},R); S_{u'})} &\ge \E_{U'}\sbr{\frac{1}{m+1}\sum_{k \in u'}\phi\rbr{\frac{1}{m} I(f(\tilde{z}_S,\tilde{x}_{u'\setminus\mathset{k}},R); S_{u' \setminus \mathset{k}})}}\\
&= \sum_{u} \alpha_u \phi\rbr{\frac{1}{m}I(f(\tilde{z}_S,\tilde{x}_{u},R); S_u)}.
\end{align}
For each subset $u$ of size $m$, the coefficient $\alpha_u$ is equal to
\begin{equation}
    \alpha_u = \frac{1}{\Choose{n}{m+1}}\cdot\frac{1}{m+1}\cdot(n-m) = \frac{1}{\Choose{n}{m}}.
\end{equation}
Therefore
\begin{equation}
    \sum_{u} \alpha_u \phi\rbr{\frac{1}{m}I(f(\tilde{z}_S,\tilde{x}_{u},R); S_u)} = \E_{U}\phi\rbr{\frac{1}{m} I(f(\tilde{z}_S,\tilde{x}_{u},R); S_u)}.
\end{equation}

\subsection{Proof of \cref{thm:f-cmi-bound-stability}}
Let us fix $\tilde{z}$. Setting $\Phi=f(\tilde{z}_S,\tilde{x}_i,R)$, $\Psi = S$, and using the first part of \cref{lemma:stability-lemma}, we get that
\begin{equation}
    I(f(\tilde{z}_S,\tilde{x}_i,R); S_i) \le I(f(\tilde{z}_S,\tilde{x}_i,R); S_i \mid S_{-i}).
\end{equation}
Next, setting $\Phi=f(\tilde{z}_S,\tilde{x},R)$, $\Psi=S$, and using the second part of \cref{lemma:stability-lemma}, we get that
\begin{align}
    I(f(\tilde{z}_S,\tilde{x},R); S) \le \sum_{i=1}^n I(f(\tilde{z}_S,\tilde{x},R); S_i \mid S_{-i}).
\end{align}
Using these upper bounds in \cref{thm:f-cmi-bound} proves this theorem.

\subsection{Proof of \cref{thm:vc}}
Let $k$ denote the number of distinct values $f(\tilde{z}_S, \tilde{x}, R)$ can take by varying $S$ and $R$. Clearly, $k$ is not more than the growth function of $\mathcal{H}$ evaluated at $2n$.
Applying the Sauer-Shelah lemma~\citep{sauer1972density,shelah1972combinatorial}, we get that
\begin{equation}
    k \le \sum_{i=0}^d\rbr{\begin{matrix}2n\\i\end{matrix}}.
\end{equation}
The Sauer-Shelah lemma also states that if $2n > d+1$ then
\begin{equation}
\sum_{i=0}^d\rbr{\begin{matrix}2n\\i\end{matrix}} \le \rbr{\frac{2en}{d}}^d.
\end{equation}
If $2n \le d+1$, one can upper bound $k$ by $2^{2n} \le 2^{d+1}$.
Therefore
\begin{equation}
    k \le \max\mathset{2^{d+1}, \rbr{\frac{2en}{d}}^d}.
\end{equation}
Finally, as a $f(\tilde{z}_S, \tilde{x}, R)$ is a discrete variable with $k$ states,
\begin{align}
    \ofcmi(f,\tilde{z}) &\le H(f(\tilde{z}_S, \tilde{x}, R)) \le\log(k).
\end{align}

\subsection{Proof of \cref{prop:stability-to-kl}}
The proof below uses the independence of $S_1,\ldots,S_n$ and the convexity of KL divergence, once for the first and once for the second argument.
\begin{align}
I(f(\tilde{z}_S,x,R); S_i \mid S_{-i}) &= \KL{f(\tilde{z}_S,x,R) | S}{f(\tilde{z}_S,x,R) | S_{-i}}\\
&\hspace{-12em}=\frac{1}{2}\KL{f(\tilde{z}_{S^{i\leftarrow 0}},x,R) | S_{-i}}{f(\tilde{z}_S,x,R) | S_{-i}}\\
&\hspace{-12em}\quad+\frac{1}{2}\KL{f(\tilde{z}_{S^{i\leftarrow 1}},x,R) | S_{-i}}{f(\tilde{z}_S,x,R) | S_{-i}}\\
&\hspace{-12em}=\frac{1}{2}\KL{f(\tilde{z}_{S^{i\leftarrow 0}},x,R) | S_{-i}}{\frac{1}{2}\rbr{f(\tilde{z}_{S^{i\leftarrow 0}},x,R) + f(\tilde{z}_{S^{i \leftarrow 1}},x,R)} | S_{-i}}\\
&\hspace{-12em}\quad+\frac{1}{2}\KL{f(\tilde{z}_{S^{i\leftarrow 1}},x,R) | S_{-i}}{\frac{1}{2}\rbr{f(\tilde{z}_{S^{i\leftarrow 0}},x,R) + f(\tilde{z}_{S^{i \leftarrow 1}},x,R)} | S_{-i}}\\
&\hspace{-12em}\le\frac{1}{4}\KL{f(\tilde{z}_{S^{i\leftarrow 0}},x,R) | S_{-i}}{f(\tilde{z}_{S^{i\leftarrow 0}},x,R)| S_{-i}}\\
&\hspace{-12em}\quad+ \frac{1}{4}\KL{f(\tilde{z}_{S^{i\leftarrow 0}},x,R) | S_{-i}}{f(\tilde{z}_{S^{i\leftarrow 1}},x,R)| S_{-i}}\\
&\hspace{-12em}\quad+\frac{1}{4}\KL{f(\tilde{z}_{S^{i\leftarrow 1}},x,R) | S_{-i}}{f(\tilde{z}_{S^{i\leftarrow 0}},x,R)| S_{-i}}\\
&\hspace{-12em}\quad+\frac{1}{4}\KL{f(\tilde{z}_{S^{i\leftarrow 1}},x,R) | S_{-i}}{f(\tilde{z}_{S^{i\leftarrow 1}},x,R)| S_{-i}}\\
&\hspace{-12em}=\frac{1}{4}\KL{f(\tilde{z}_{S^{i\leftarrow 0}},x,R) | S_{-i}}{f(\tilde{z}_{S^{i\leftarrow 1}},x,R)| S_{-i}}\\
&\hspace{-12em}\quad+\frac{1}{4}\KL{f(\tilde{z}_{S^{i\leftarrow 1}},x,R) | S_{-i}}{f(\tilde{z}_{S^{i\leftarrow 0}},x,R)| S_{-i}}.
\end{align}

\subsection{Proof of \cref{thm:f-cmi-deterministic-stability}}
Given a deterministic algorithm $f$, we consider the algorithm that adds Gaussian noise to the predictions of $f$:
\begin{equation}
f_\sigma(z,x,R) = f(z,x) + \xi,
\end{equation}
where $\xi \sim \mathcal{N}(0,\sigma^2 I_d)$.
The function $f_\sigma$ is constructed in a way that the noise terms are independent for each possible combination of $z$ and $x$.\footnote{This can be achieved by viewing $R$ as an infinite collection of independent Gaussian variables, one of which is selected for each possible combination of $z$ and $x$.}

First we relate the generalization gap of $f_\sigma$ to that of $f$:
\begin{align}
&\abs{\E_{\tilde{Z},R,S}\sbr{\LL(f_\sigma,\tilde{Z}_S,R) - \LL_\mathrm{emp}(f_\sigma,\tilde{Z}_S,R)}}\\
&\quad=
\abs{\E_{\tilde{Z},R,S,Z'\sim \DD}\sbr{\ell(f_\sigma(\tilde{Z}_S,X',R), Y')  - \frac{1}{n}\sum_{i=1}^n\ell(f_\sigma(\tilde{Z}_S,\tilde{X}_{i,S_i},R), \tilde{Y}_{i,S_i})}}\\
&\quad=
\abs{\E_{\tilde{Z},R,S,Z'\sim \DD}\sbr{\ell(f(\tilde{Z}_S,X') + \xi', Y')  - \frac{1}{n}\sum_{i=1}^n\ell(f(\tilde{Z}_S,\tilde{X}_{i,S_i}) + \xi_i, \tilde{Y}_{i,S_i})}}\\
&\quad=
\abs{\E_{\tilde{Z},R,S,Z'\sim \DD}\sbr{\ell(f(\tilde{Z}_S,X'), Y') + \Delta'  - \frac{1}{n}\sum_{i=1}^n\rbr{\ell(f(\tilde{Z}_S,\tilde{X}_{i,S_i}), \tilde{Y}_{i,S_i}) + \Delta_i}}},
\label{eq:171}
\end{align}
where $\Delta' = \ell(f(\tilde{Z}_S,X') + \xi', Y') - \ell(f(\tilde{Z}_S,X'), Y')$ and $\Delta_i = \ell(f(\tilde{Z}_S,\tilde{X}_{i,S_i}) + \xi_i, \tilde{Y}_{i,S_i}) - \ell(f(\tilde{Z}_S,\tilde{X}_{i,S_i}), \tilde{Y}_{i,S_i})$.
As $\ell(\widehat{y},y)$ is $\gamma$-Lipschitz in its first argument, $\abs{\Delta'} \le \gamma \nbr{\xi'}$ and $\abs{\Delta_i} \le \gamma \nbr{\xi_i}$.
Connecting this to \cref{eq:171} we get
\begin{align}
\abs{\E_{\tilde{Z},R,S}\sbr{\LL(f_\sigma,\tilde{Z}_S,R) - \LL_\mathrm{emp}(f_\sigma,\tilde{Z}_S,R)}} &\ge \abs{\E_{\tilde{Z},R,S}\sbr{\LL(f,\tilde{Z}_S)- \LL_\mathrm{emp}(f,\tilde{Z}_S)}}\\
&\quad- \gamma \E \nbr{\xi'} - \frac{\gamma}{n}\sum_{i=1}^n\E\nbr{\xi_i}\\
&= \abs{\E_{\tilde{Z},R,S}\sbr{\LL(f,\tilde{Z}_S)- \LL_\mathrm{emp}(f,\tilde{Z}_S)}} - 2 \sqrt{d} \gamma \sigma.\label{eq:174}
\end{align}
Similarly, we relate the expected squared generalization gap of $f_\sigma$ to that of $f$:
\begin{align}
&\E_{\tilde{Z},R,S}\rbr{\LL(f_\sigma,\tilde{Z}_S,R) - \LL_\mathrm{emp}(f_\sigma,\tilde{Z}_S,R)}^2\\
&\quad=\E_{\tilde{Z},R,S}\rbr{\E_{Z'\sim \DD}\sbr{\ell(f(\tilde{Z}_S,X'), Y') + \Delta'}  - \frac{1}{n}\sum_{i=1}^n\rbr{\ell(f(\tilde{Z}_S,\tilde{X}_{i,S_i}), \tilde{Y}_{i,S_i}) + \Delta_i}}^2\\
&\quad= \E_{\tilde{Z},R,S}\rbr{\LL(f,\tilde{Z}_S) - \LL_\mathrm{emp}(f,\tilde{Z}_S)}^2 + \E_{\tilde{Z},R,S}\rbr{\E_{Z'\sim\DD}[\Delta'] - \frac{1}{n}\sum_{i=1}^n \Delta_i}^2\\
&\quad\quad+2\E_{\tilde{Z},R,S}\sbr{\rbr{\LL(f,\tilde{Z}_S) - \LL_\mathrm{emp}(f,\tilde{Z}_S)}\rbr{\E_{Z'\sim\DD}[\Delta'] - \frac{1}{n}\sum_{i=1}^n \Delta_i}}\\
&\quad\ge \E_{\tilde{Z},R,S}\rbr{\LL(f,\tilde{Z}_S) - \LL_\mathrm{emp}(f,\tilde{Z}_S)}^2\\
&\quad\quad - 2\E_{\tilde{Z},R,S}\sbr{\abs{\LL(f,\tilde{Z}_S) - \LL_\mathrm{emp}(f,\tilde{Z}_S)}\abs{\E_{Z'\sim\DD}[\Delta'] - \frac{1}{n}\sum_{i=1}^n \Delta_i}}\\
&\quad= \E_{\tilde{Z},S}\rbr{\LL(f,\tilde{Z}_S) - \LL_\mathrm{emp}(f,\tilde{Z}_S)}^2\\
&\quad\quad - 2\E_{\tilde{Z},S}\sbr{\abs{\LL(f,\tilde{Z}_S) - \LL_\mathrm{emp}(f,\tilde{Z}_S)}\E_R\sbr{\abs{\E_{Z'\sim\DD}[\Delta'] - \frac{1}{n}\sum_{i=1}^n \Delta_i}}}\label{eq:182}.
\end{align}
As
\begin{align}
\E_R\sbr{\abs{\E_{Z'\sim\DD}[\Delta'] - \frac{1}{n}\sum_{i=1}^n \Delta_i}} &\le \E_{R} \sbr{\E_{Z'\sim\DD}\abs{\Delta'}} + \frac{1}{n}\sum_{i=1}^n\E_{R}\abs{\Delta_i}\\
&\le \E_{R} \sbr{\E_{Z'\sim\DD}\sbr{\gamma \nbr{\xi'}}} + \frac{1}{n}\sum_{i=1}^n\E_{R}\sbr{\gamma \nbr{\xi}}\\
&= 2\gamma \sqrt{d} \sigma,
\end{align}
we can write \cref{eq:182} as 
\begin{align}
&\E_{\tilde{Z},R,S}\rbr{\LL(f_\sigma,\tilde{Z}_S,R) - \LL_\mathrm{emp}(f_\sigma,\tilde{Z}_S,R)}^2\\
&\quad\ge \E_{\tilde{Z},S}\rbr{\LL(f,\tilde{Z}_S) - \LL_\mathrm{emp}(f,\tilde{Z}_S)}^2 - 4\gamma \sqrt{d} \sigma \E_{\tilde{Z},S}\sbr{\abs{\LL(f,\tilde{Z}_S) - \LL_\mathrm{emp}(f,\tilde{Z}_S)}}\\
&\quad\ge \E_{\tilde{Z},S}\rbr{\LL(f,\tilde{Z}_S) - \LL_\mathrm{emp}(f,\tilde{Z}_S)}^2 - 4\gamma \sqrt{d} \sigma \sqrt{\E_{\tilde{Z},S}\rbr{\LL(f,\tilde{Z}_S) - \LL_\mathrm{emp}(f,\tilde{Z}_S)}^2},\label{eq:188}
\end{align}
where the last line follows from Jensen's inequality ($(\E |X|)^2 \le \E X^2$).
Summarizing, \cref{eq:174} and \cref{eq:188} relate expected generalization gap and expected squared generalization gap of $f_\sigma$ to those of $f$.

\paragraph{Bounding expected generalization gap of $f$.}
\begin{align}
&\abs{\E_{\tilde{Z},R,S}\sbr{\LL(f_\sigma,\tilde{Z}_S,R) - \LL_\mathrm{emp}(f_\sigma,\tilde{Z}_S,R)}} \\
&\quad\le \abs{\E_{\tilde{Z},R,S}\sbr{\LL(f_\sigma,\tilde{Z}_S,R) - \LL_\mathrm{emp}(f_\sigma,\tilde{Z}_S,R)}} + 2 \sqrt{d} \gamma \sigma &&\text{\hspace{-10em}(by \cref{eq:174})}\\
&\quad\le \frac{1}{n}\sum_{i=1}^n\E_{\tilde{z}\sim\tilde{Z}} \sqrt{2 I(f_\sigma(\tilde{z}_S,\tilde{x}_i,R); S_i \mid S_{-i})} + 2 \sqrt{d} \gamma \sigma &&\text{\hspace{-10em}(by \cref{thm:f-cmi-bound-stability})}\\
&\quad\le\frac{1}{n}\sum_{i=1}^n\E_{\tilde{z}\sim\tilde{Z}} \sqrt{\begin{matrix}\frac{1}{2}\KL{f_\sigma(\tilde{z}_{S^{i\leftarrow 1}},\tilde{x}_i,R) | S_{-i}}{f_\sigma(\tilde{z}_{S^{i\leftarrow 0}},\tilde{x}_i,R) | S_{-i}}\\ \quad+ \frac{1}{2}\KL{f_\sigma(\tilde{z}_{S^{i\leftarrow 0}},\tilde{x}_i,R) | S_{-i}}{f_\sigma(\tilde{z}_{S^{i\leftarrow 1}},\tilde{x}_i,R)| {S_{-i}}}\end{matrix}} + 2 \sqrt{d} \gamma \sigma\\
&\quad=\frac{1}{n}\sum_{i=1}^n\E_{\tilde{z}\sim\tilde{Z}} \sqrt{\frac{1}{2 \sigma^2} \E_{S_{-i}}\nbr{f(\tilde{z}_{S^{i\leftarrow 0}},\tilde{x}_i) - f(\tilde{z}_{S^{i\leftarrow 1}},\tilde{x}_i)}^2_2 } + 2 \sqrt{d} \gamma \sigma\\
&\quad\le\frac{1}{n}\sum_{i=1}^n\sqrt{\frac{1}{2 \sigma^2} \E_{\tilde{Z},S_{-i}}\nbr{f(\tilde{Z}_{S^{i\leftarrow 0}},\tilde{X}_i) - f(\tilde{Z}_{S^{i\leftarrow 1}},\tilde{X}_i)}^2_2 } + 2 \sqrt{d} \gamma \sigma\\
&\quad\le\sqrt{\frac{\beta^2}{\sigma^2}} + 2 \sqrt{d} \gamma \sigma &&\text{\hspace{-15em}(by $\beta$ self-stability of $f$)}.
\end{align}
Picking $\sigma^2 = \frac{\beta}{2 \sqrt{d} \gamma}$, we get
\begin{equation}
\abs{\E_{\tilde{Z},R,S}\sbr{\LL(f_\sigma,\tilde{Z}_S,R) - \LL_\mathrm{emp}(f_\sigma,\tilde{Z}_S,R)}} \le 2^{\frac{3}{2}} d^{\frac{1}{4}}\sqrt{\gamma  \beta}.
\end{equation}

\paragraph{Bounding expected squared generalization gap of $f$.} To shorten the writing, let us denote $\E_{\tilde{Z},S}\rbr{\LL(f,\tilde{Z}_S) - \LL_\mathrm{emp}(f,\tilde{Z}_S)}^2$ with $G$. Starting with \cref{eq:188}, we get
\begin{align}
G &\le \E_{\tilde{Z},R,S}\rbr{\LL(f_\sigma,\tilde{Z}_S,R) - \LL_\mathrm{emp}(f_\sigma,\tilde{Z}_S,R)}^2 + 4\gamma \sqrt{d} \sigma \sqrt{G}\\
&\le \frac{8}{n} \rbr{\E_{\tilde{z}\sim\tilde{Z}}\sbr{\sum_{i=1}^n I(f_\sigma(\tilde{z}_S, \tilde{x}, R); S_i \mid S_{-i})} + 2} + 4\gamma \sqrt{d} \sigma \sqrt{G}&&\text{\hspace{-9em}(by \cref{thm:f-cmi-bound-stability})}\\
&\le \frac{16}{n} + \frac{8}{n}\sum_{i=1}^n\rbr{\begin{matrix}\frac{1}{4}\KL{f_\sigma(\tilde{z}_{S^{i\leftarrow 1}},\tilde{x},R) | S_{-i}}{f_\sigma(\tilde{z}_{S^{i\leftarrow 0}},\tilde{x},R) | S_{-i}}\\ \quad+ \frac{1}{4}\KL{f_\sigma(\tilde{z}_{S^{i\leftarrow 0}},\tilde{x},R) | S_{-i}}{f_\sigma(\tilde{z}_{S^{i\leftarrow 1}},\tilde{x},R)| {S_{-i}}}\end{matrix}} + 4\gamma \sqrt{d} \sigma \sqrt{G}\\
&=\frac{16}{n} + \frac{8}{n}\sum_{i=1}^n E_{\tilde{Z},S}\sbr{\frac{1}{4\sigma^2} \nbr{f(\tilde{Z}_{S^{i\leftarrow 0}},\tilde{X}) - f(\tilde{Z}_{S^{i\leftarrow 1}},\tilde{X})}^2_2} + 4\gamma \sqrt{d} \sigma \sqrt{G}\\
&\le\frac{16}{n} + \frac{2}{\sigma^2}\rbr{2 \beta^2 + n \beta_1^2 + n \beta_2^2} + 4\gamma \sqrt{d} \sigma \sqrt{G}.
\end{align}
The optimal $\sigma$ is given by
\begin{equation}
\sigma = \rbr{\frac{2\beta^2 + n\beta_1^2 + n \beta_2^2}{\gamma \sqrt{G} \sqrt{d}}}^{\frac{1}{3}},
\end{equation}
and gives
\begin{equation}
    G \le \frac{16}{n} + 6 d^{\frac{1}{3}} \gamma^{\frac{2}{3}} \rbr{2\beta^2 + n\beta_1^2 + n \beta_2^2}^{\frac{1}{3}} G^{\frac{1}{3}}.
\end{equation}
We discuss 2 cases.
\begin{itemize} 
\item[(i)] $\frac{16}{n} \ge 6 d^{\frac{1}{3}} \gamma^{\frac{2}{3}} \rbr{2\beta^2 + n\beta_1^2 + n \beta_2^2}^{\frac{1}{3}} G^{\frac{1}{3}}$. In this case $G \le \frac{32}{n}$.
\item[(ii)] $\frac{16}{n} < 6 d^{\frac{1}{3}} \gamma^{\frac{2}{3}} \rbr{2\beta^2 + n\beta_1^2 + n \beta_2^2}^{\frac{1}{3}} G^{\frac{1}{3}}$. In this case, we have
\begin{align}
    G \le 12 d^{\frac{1}{3}} \gamma^{\frac{2}{3}} \rbr{2\beta^2 + n\beta_1^2 + n \beta_2^2}^{\frac{1}{3}} G^{\frac{1}{3}},
\end{align}
which simplifies to
\begin{align}
    G \le 12^{\frac{3}{2}}\sqrt{d}\gamma\sqrt{2\beta^2 + n\beta_1^2 + n \beta_2^2}.
\end{align}
\end{itemize}
Combining these cases we can write that
\begin{align}
    G &\le \max\mathset{\frac{32}{n},12^{\frac{3}{2}}\sqrt{d}\gamma\sqrt{2\beta^2 + n\beta_1^2 + n \beta_2^2}}\\
    &\le \frac{32}{n} + 12^{\frac{3}{2}}\sqrt{d}\gamma\sqrt{2\beta^2 + n\beta_1^2 + n \beta_2^2}.
\end{align}

\paragraph{Remark 1.} The bounds of this theorem work even when $\YY = \mathcal{K} = [a,b]^d$ instead of $\mathbb{R}^d$.
To see this, we first clip the noisy predictions to be in $[a,b]^d$:
\begin{equation}
f^c_\sigma(z,x)_i \triangleq \text{clip}(f_\sigma(z,x), a, b)_i, \quad \forall i \in [d].
\end{equation}
Inequalities \cref{eq:174} and \cref{eq:188} that relate the expected generalization gap and expected squared generalization gap of $f_\sigma$ to those of $f$ stay true when replacing $f_\sigma$ with $f_\sigma^c$.
Furthermore, by data processing inequality, mutual informations measured with $f^c_\sigma$ can always be upper bounded by the corresponding mutual informations informations measures with $f_\sigma$.
Therefore, generalization bounds that hold for $f_\sigma$ will also for $f_\sigma^c$, allowing us to follow the exact same proofs above.

\paragraph{Remark 2.} In the construction of $f_\sigma$ we used Gaussian noise with zero mean and $\sigma^2 I$ covariance matrix.
A natural question arises whether a different type of noise would give better bounds.
Inequalities \cref{eq:174} and \cref{eq:188} only use the facts that noise components are independent, have zero-mean and $\sigma^2$ variance.
Therefore, if we restrict ourselves to noise distributions with independent components, each of which has zero mean and $\sigma^2$ variance, then the best bounds will be produced by noise distributions that result in the smallest KL divergence of form $\KL{f_\sigma(\tilde{z}_{S^{i\leftarrow 1}},x,R) | S_{-i}}{f_\sigma(\tilde{z}_{S^{i\leftarrow 0}},x,R) | S_{-i}}$.
An informal argument below hints that the Gaussian distribution might be the optimal choice of the noise distribution when $f_\sigma(\tilde{z}_{S^{i\leftarrow 1}},x,R)$ and $f_\sigma(\tilde{z}_{S^{i\leftarrow 0}},x,R)$ are close to each other.

Let us fix $\sigma^2$ and consider two means $\mu_1 < \mu_2 \in \mathbb{R}$. Let $\mathcal{F}=\mathset{p(x; \mu) \mid \mu \in \mathbb{R}}$ be a family of probability distributions with one mean parameter $\mu$, such that every distribution of it has variance $\sigma^2$ and KL divergences between members of $\mathcal{F}$ exist. 
Let $X_1 \sim p(x, \mu_1)$ and $X_2 \sim p(x,\mu_2)$.
We are interested in finding such a family $\mathcal{F}$ that $\KL{X}{Y}$ is minimized. For small $\mu_2 - \mu_1$, we know that
\begin{equation}
    \KL{X}{Y} \approx \frac{1}{2}(\mu_2 - \mu_1) \mathcal{I}(\mu_1) (\mu_2 - \mu_1),
    \label{eq:kl-fisher-approx}
\end{equation}
where $\mathcal{I}(\mu)$ is the Fisher information of $p(x; \mu)$.
Furthermore, let $\widehat{\mu_1} \triangleq X$. As $\E \widehat{\mu_1} = \mu_1$ and $\text{Var}[\widehat{\mu_1}] = \sigma^2$, the Cramer-Rao bound gives
\begin{equation}
    \sigma^2 = \text{Var}[\widehat{\mu_1}] \ge \frac{1}{\mathcal{I}(\mu_1)}.
\end{equation}
This gives us the following lower bound on the KL divergence between $X$ and $Y$:
\begin{equation}
    \KL{X}{Y} \gtrapprox \frac{1}{2\sigma^2}(\mu_2 - \mu_1)^2,
\end{equation}
which is matched by the Gaussian distribution.

\section{Experiment details and additional results}\label{sec:exp-details-add-results}
\begin{figure}
    \centering
    \begin{subfigure}{0.32\textwidth}
        \includegraphics[width=\textwidth]{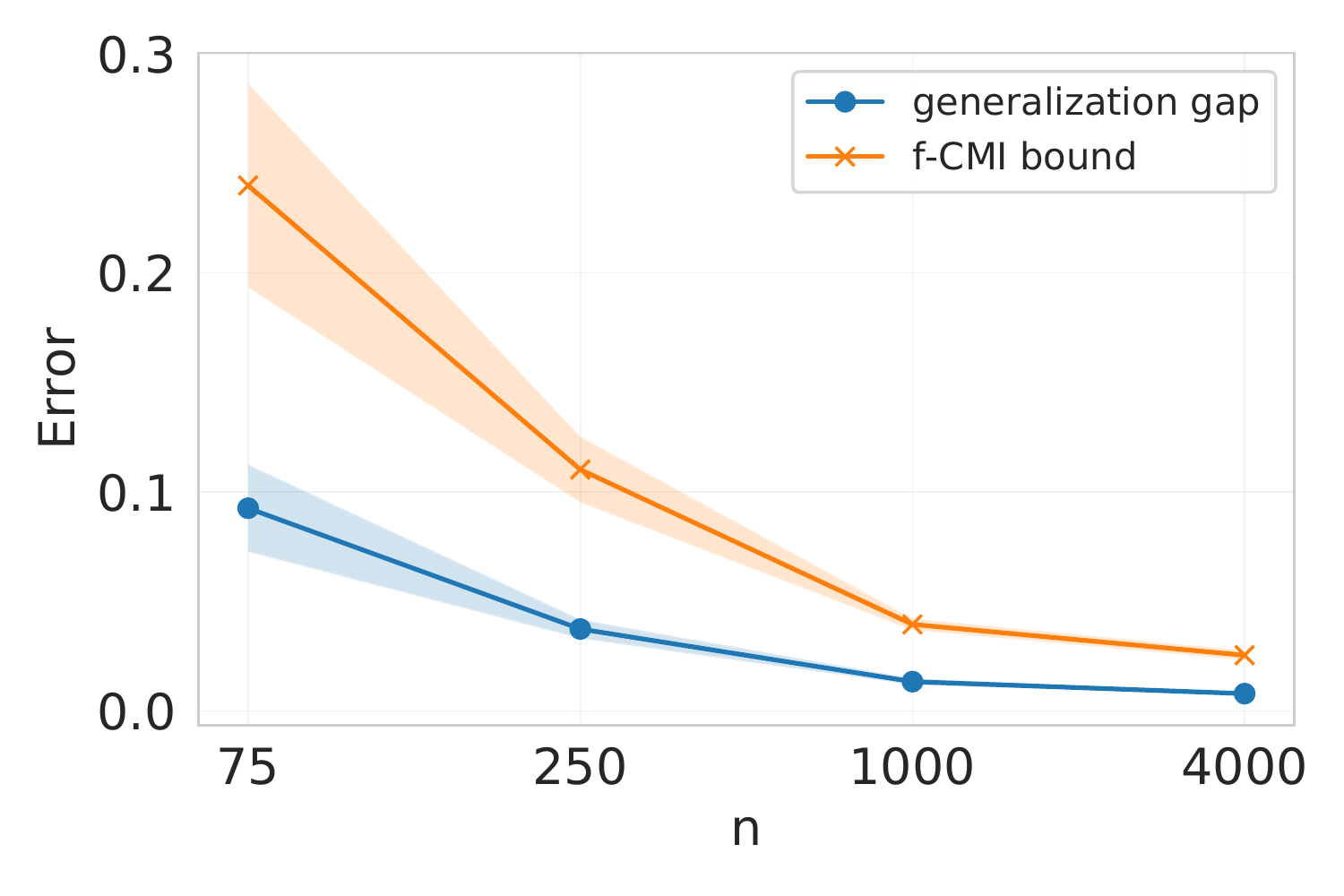}
        \caption{using 4 times wider network}
        \label{fig:mnist-4vs-9-wide}
    \end{subfigure}%
    \begin{subfigure}{0.32\textwidth}
        \includegraphics[width=\textwidth]{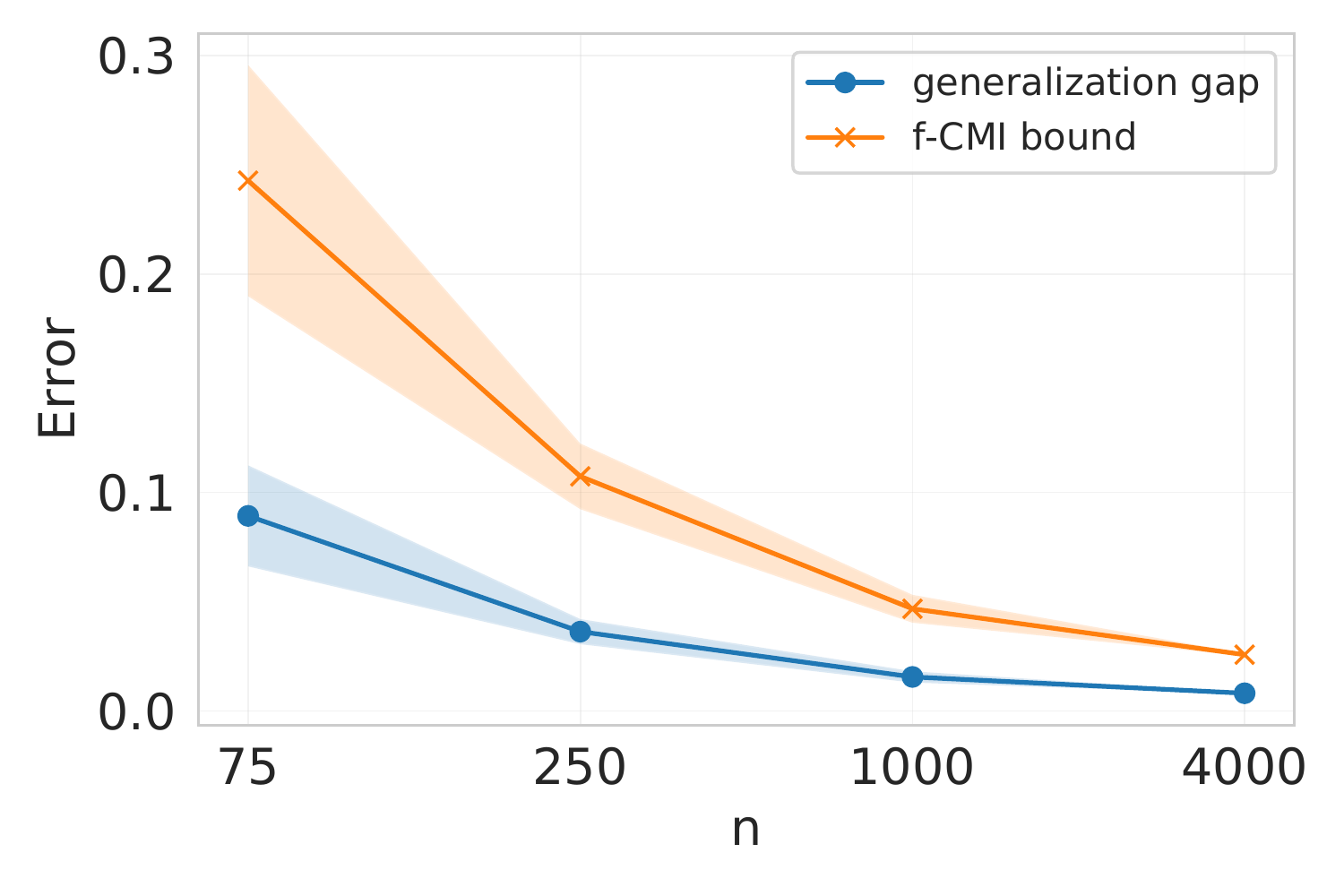}
        \caption{training with a random seed}
        \label{fig:mnist-4vs-9-stochastic}
    \end{subfigure}
    \begin{subfigure}{0.32\textwidth}
        \includegraphics[width=\textwidth]{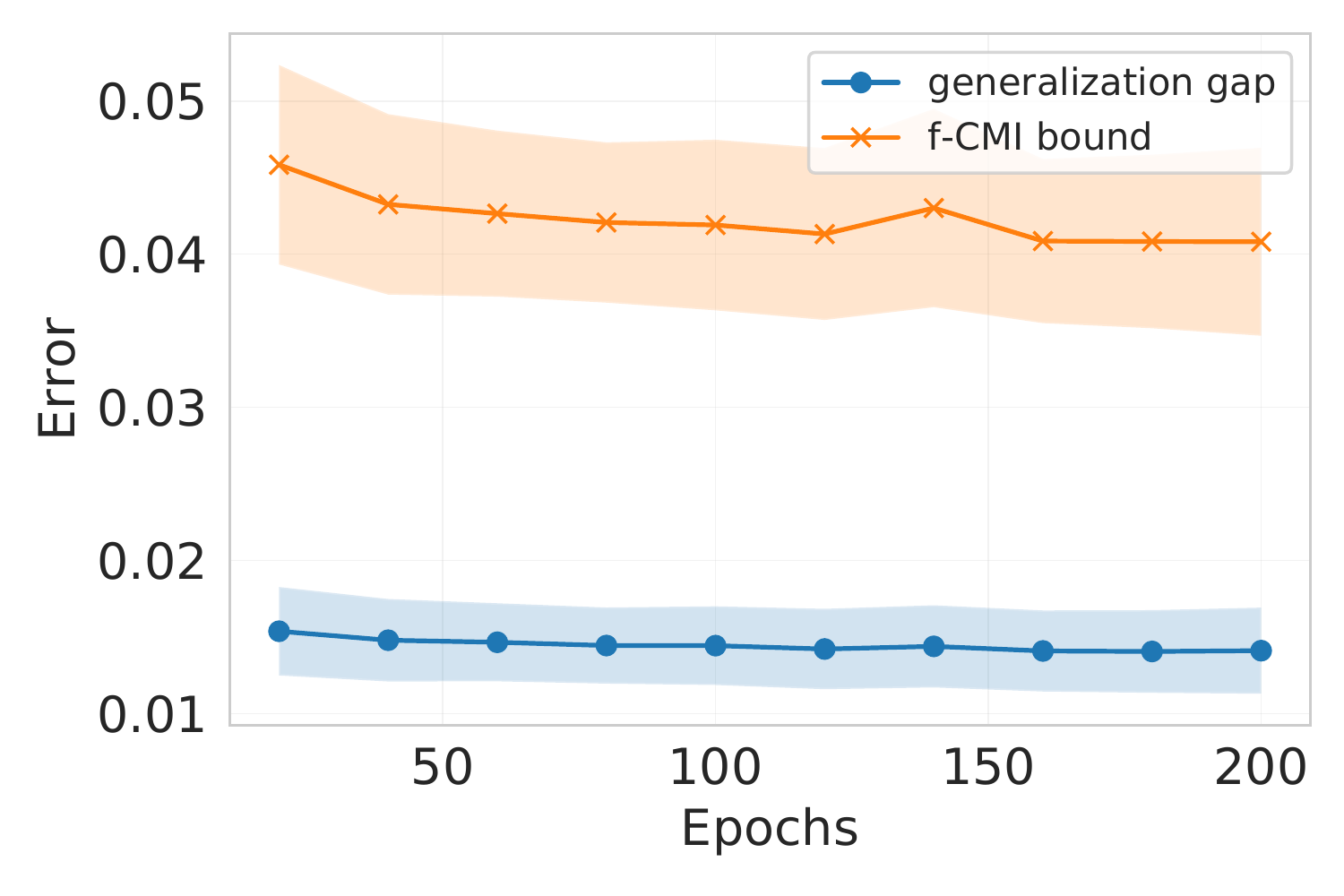}
        \caption{dynamics of the bound}
        \label{fig:mnist-4vs-9-x=epochs}
    \end{subfigure}
    \caption{Comparison of expected generalization gap and $f$-CMI bound for MNIST 4 vs 9 classification with a 4-layer CNN. The figure on the left repeats the experiment of \cref{fig:mnist4vs9-cnn-x=n-deterministic} (i.e., MNIST 4 vs 9 classification with the CNN described in \cref{tab:mnist-classifier} trained using a deterministic training algorithm) while modifying the network to have 4 times more neurons at each layer. The figure in the middle repeats the experiment of \cref{fig:mnist4vs9-cnn-x=n-deterministic} while making the training algorithm stochastic by randomizing the seed. The figure on the right corresponds to the experiment of \cref{fig:mnist4vs9-cnn-x=n-deterministic} and  plots the expected generalization gap and the f-CMI bound versus the training time.}
    \label{fig:additional-plots}
\end{figure}

\begin{table}[t]
    \centering
    \small
    \begin{tabular}{ll}
    \toprule
    Layer type & Parameters\\
    \midrule
    Conv & 32 filters, $4 \times 4$ kernels, stride 2, padding 1, batch normalization, ReLU\\
    Conv & 32 filters, $4 \times 4$ kernels, stride 2, padding 1, batch normalization, ReLU\\
    Conv & 64 filters, $3 \times 3$ kernels, stride 2, padding 0, batch normalization, ReLU\\
    Conv & 256 filters, $3 \times 3$ kernels, stride 1, padding 0, batch normalization, ReLU\\
    FC & 128 units, ReLU\\
    FC & 2 units, linear activation\\
    \bottomrule\\
    \end{tabular}
    \caption{The architecture of the 4-layer convolutional neural network used in MNIST 4 vs 9 classification tasks. The wider version of this network is constructed by multiplying the number of channels/neurons of each hidden layer by 4.}
    \label{tab:mnist-classifier}
\end{table}

In this appendix we present additional experimental results and details that were not included in the main text due to space constraints.

\paragraph{Estimation of generalization gap.} In all experiments we draw $k_1$ samples of $\tilde{Z}$, each time by randomly drawing $2n$ examples from the corresponding dataset and grouping then into $n$ pairs.
For each sample $\tilde{z}$, we draw $k_2$ samples of the training/test split variable $S$ and randomness $R$.
We then run the training algorithm on these $k_2$ splits (in total $k_1 k_2$ runs).
For each $\tilde{z}$, $s$ and $r$, we estimate the population risk with the average error on the test examples $\tilde{z}_{\text{neg}(s)}$ and get an estimate of the generalization gap $\LL(f,\tilde{z}_s, r) - \LL_{\text{emp}}(f,\tilde{z}_s, r)$.
For each $\tilde{z}$, we average over the $k_2$ samples of $S$ and $R$ to get an estimate $\widehat{g}(\tilde{z})$ of $\E_{S,R}\sbr{\LL(f,\tilde{z}_s, R) - \LL_{\text{emp}}(f,\tilde{z}_s, R)}$.
Note that this latter quantity is not the expected generalization gap yet, as it still misses an expectation over $\tilde{z}$.
Figures \ref{fig:experiments-plot} and \ref{fig:additional-plots} plot the mean and the standard deviation of $\widehat{g}(\tilde{z})$, estimated using the $k_1$ samples of $\tilde{Z}$.
Note that this mean will be an unbiased estimate of the true expected generalization gap $\E_{\tilde{Z},S,R}\sbr{\LL(f,\tilde{z}_s, r) - \LL_{\text{emp}}(f,\tilde{z}_s, r)}$.

\paragraph{Estimation of $f$-CMI bound.} Similarly, for each $\tilde{z}$ we use the $k_2$ samples of $S$ and $R$ to estimate $\ofcmi(f, \tilde{z}, \mathset{i}) = I(f(\tilde{z}_S,\tilde{x}_i,R); S_i),\ i\in[n]$.
As in all considered cases we deal with classification problems (i.e., having discrete output variables), this is done straightforwardly by estimating all the states of the joint distribution of $f(\tilde{z}_S,\tilde{x}_i,R)$ and $S_i$, and then using a plug-in estimator of mutual information.
The bias of this plug-in estimator is $O\rbr{\frac{1}{k_2}}$, while the variance is $O\rbr{\frac{(\log k_2)^2}{k_2}}$~\citep{paninski2003estimation}.
To estimate $\fcmi(f,\mathset{i})=\E_{\tilde{z}\sim\tilde{Z}}\sbr{\ofcmi(f,\tilde{z},\mathset{i})}$ we use $k_1$ samples of $\tilde{Z}$.
After this step the estimation bias stays the same, while the variance increases by $O\rbr{\frac{1}{k_1}}$.
The error bars in Figures \ref{fig:experiments-plot} and \ref{fig:additional-plots} are empirical standard deviations computed using these $k_1$ estimates.

In the main text we consider the three experiments: (a) MNIST 4 vs 9 classification using standard training algorithms, (b) MNIST 4 vs 9 classification using SGLD, and (c) CIFAR-10 classification with fine-tuned pretrained ResNet-50.
The experimental details of these experiments are presented in Tables \ref{tab:mnist4vs9-standard}, \ref{tab:mnist4vs9-langevin}, and \ref{tab:cifar10-resnet50}, respectively.
In all cases the loss function was the cross-entropy loss.
We did the experiments on a cluster of 4 computers, each with 4 NVIDIA GeForce RTX 2080 Ti GPUs.
Training only single-GPU models, the experiments take 2-3 days.

\begin{table}[ht]
    \centering
    \begin{tabular}{ll}
    \toprule
    Network & The 4-layer CNN described in \cref{tab:mnist-classifier}.\\
    Optimizer & ADAM with $0.001$ learning rate and $\beta_1=0.9$.\\
    Batch size & 128\\
    Number of examples ($n$) & [75, 250, 1000, 4000]\\
    Number of epochs & 200\\
    Number of samples for $\tilde{Z}$ ($k_1$) & 5\\
    Number of samplings for $S$ for each $\tilde{z}$ ($k_2$) & 30\\
    \bottomrule\\
    \end{tabular}
    \caption{Experimental details for MNIST 4 vs 9 classification in case of the standard training algorithm.}
    \label{tab:mnist4vs9-standard}
\end{table}

\begin{table}[ht]
    \centering
    \begin{tabular}{lp{7cm}}
    \toprule
    Network & The 4-layer CNN described in \cref{tab:mnist-classifier}.\\
    Learning rate schedule & Starts at 0.004 and decays by a factor of 0.9 after each 100 iterations.\\
    Inverse temperature schedule & $\min(4000, \max(100, 10 e^{t / 100}))$, where $t$ is the iteration.\\
    Batch size & 100\\ 
    Number of examples ($n$) & 4000\\
    Number of epochs & 40\\
    Number of samples for $\tilde{Z}$ ($k_1$) & 5\\
    Number of samplings for $S$ for each $\tilde{z}$ ($k_2$)& 30\\
    \bottomrule\\
    \end{tabular}
    \caption{Experimental details for MNIST 4 vs 9 classification in case of the SGLD training algorithm.}
    \label{tab:mnist4vs9-langevin}
\end{table}

\begin{table}[ht]
    \centering
    \begin{tabular}{ll}
    \toprule
    Network & ResNet-50 pretrained on ImageNet.\\
    Optimizer & SGD with $0.01$ learning rate and $0.9$ momentum.\\
    Data augmentations & Random horizontal flip and random 28x28 cropping.\\
    Batch size & 64\\
    Number of examples ($n$) & [1000, 5000, 20000]\\
    Number of epochs & 40\\
    Number of samples for $\tilde{Z}$ ($k_1$)& 1\\
    Number of samplings for $S$ for each $\tilde{z}$ ($k_2$) & 40\\
    \bottomrule\\
    \end{tabular}
    \caption{Experimental details for CIFAR-10 classification using fine-tuned ResNet-50 networks.}
    \label{tab:cifar10-resnet50}
\end{table}

\begin{table}[ht]
    \centering
    \begin{tabular}{lp{7cm}}
    \toprule
    Network & ResNet-50 pretrained on ImageNet.\\
    Data augmentations & Random horizontal flip and random 28x28 cropping.\\
    Learning rate schedule & Starts at 0.01 and decays by a factor of 0.9 after each 300 iterations.\\
    Inverse temperature schedule & $\min(16000, \max(100, 10 e^{t / 300}))$, where $t$ is the iteration.\\
    Batch size & 64\\ 
    Number of examples ($n$) & 20000\\
    Number of epochs & 16\\
    Number of samples for $\tilde{Z}$ ($k_1$) & 1\\
    Number of samplings for $S$ for each $\tilde{z}$ ($k_2$)& 40\\
    \bottomrule\\
    \end{tabular}
    \caption{Experimental details for CIFAR-10 classification experiment, where a pretrained ResNet-50 is fine-tuned using the SGLD algorithm.}
    \label{tab:cifar10-langevin}
\end{table}
\end{appendices}

\end{document}